\DeclareMathOperator{\Rank}{rank}
\DeclareMathOperator{\Span}{span}
\DeclareMathOperator{\Rows}{rows}
\DeclareMathOperator{\Flat}{flat}
\DeclareMathOperator{\softmax}{softmax}
\def\Pp{P_{\Phi}}
\def\P{\mathbb{P}}
\def\R{\mathbb{R}}
\newcommand{\opnorm}[1]{\left\lVert#1\right\rVert_{\textup{op}}}
\def\b0{{0}}
\def\RR{\mathbb{R}}
\def\>{\rangle}
\newcommand{\E}{\mathbb{E}}
\newcommand{\distas}[1]{\mathbin{\overset{#1}{\sim}}}
\newcommand{\bigO}[1]{\mathcal{O}\left(#1\right)}
\def\Span{\textrm{Span}}
\newcommand{\norm}[1]{\left\|#1\right\|}
\newcommand{\subGnorm}[1]{\left\|#1\right\|_{\psi_2}}
\newcommand{\subEnorm}[1]{\left\|#1\right\|_{\psi_1}}
\newcommand{\abs}[1]{\left|#1\right|}
\newcommand{\evmin}[1]{\lambda_{\rm min}\left(#1\right)}
\def\Lip{\mathrm{Lip}}
\def\PP{\mathbb{P}}
\def\min{\mathop{\rm min}\nolimits}
\def\max{\mathop{\rm max}\nolimits}
\numberwithin{equation}{section}
\newtheoremstyle{myexample} 
    {\topsep}                    
    {\topsep}                    
    {\rm }                   
    {}                           
    {\bf }                   
    {.}                          
    {.5em}                       
    {}  
\newtheoremstyle{myremark} 
    {\topsep}                    
    {\topsep}                    
    {\rm}                        
    {}                           
    {\bf}                        
    {.}                          
    {.5em}                       
    {}  
\newtheorem{claim}{Claim}[section]
\newtheorem{lemma}[claim]{Lemma}
\newtheorem{assumption}{Assumption}
\newtheorem{theorem}{Theorem}
\theoremstyle{myremark}
\newtheorem{remark}{Remark}[section]
\theoremstyle{myremark}
\theoremstyle{myexample}
\author{Simone Bombari\thanks{Institute of Science and Technology Austria (ISTA). Emails: \texttt{\{simone.bombari, marco.mondelli\}@ist.ac.at}.}\;,
\;\;Marco Mondelli\footnotemark[1].}
\title{Towards Understanding the Word Sensitivity of Attention Layers: \\
A Study via Random Features}
\begin{document}

\maketitle

\begin{abstract}
\vspace{0.2cm}

Understanding the reasons behind the exceptional success of transformers requires a better analysis of why attention layers are suitable for NLP tasks. In particular, such tasks require predictive models to capture contextual meaning which often depends on one or few words, even if the sentence is long. Our work studies this key property, dubbed \emph{word sensitivity} (WS), in the prototypical setting of random features. We show that attention layers enjoy high WS, namely, there exists a vector in the space of embeddings that largely perturbs the random attention features map. The argument critically exploits the role of the $\softmax$ in the attention layer, highlighting its benefit compared to other activations (e.g., ReLU). In contrast, the WS of standard random features is of order $1/\sqrt{n}$, $n$ being the number of words in the textual sample, and thus it decays with the length of the context. We then translate these results on the word sensitivity into generalization bounds: due to their low WS, random features provably cannot learn to distinguish between two sentences that differ only in a single word; in contrast, due to their high WS, random attention features have higher generalization capabilities. We validate our theoretical results with experimental evidence over the BERT-Base word embeddings of the imdb review dataset.

\end{abstract}

\section{Introduction}

Deep learning theory has provided a quantitative description of phenomena routinely occuring in state-of-the-art models, such as double-descent \cite{Nakkiran2020Deep,mei2022generalization}, benign overfitting \cite{bartlett20benign, belkin2021}, and feature learning \cite{ba2022highdimensional, damian2022neural}. However, most existing works focus on architectures given by the composition of matrix multiplications and 
non-linearities, which model e.g.\ fully connected and convolutional layers. In contrast, the recent impressive results achieved by large language models \cite{gpt3, bubeck2023sparks} are largely attributed to the introduction of transformers \cite{vaswani17}, which are in turn based on attention layers \cite{bahdanau15translation, kim2017structured}. 
Hence, isolating the unique features of the attention mechanism stands out as a critical challenge to understand the success of transformers, thus paving the way to the principled design of large language models.

Recent work tackling this problem 
characterizes the sample complexities required by simplified attention models \cite{edelman22inductive, fu2023single}.
However, learning is limited to a specific set of targets, such as sparse functions \cite{edelman22inductive} 
or functions of the correlations between the first query token and key tokens \cite{fu2023single}. 
This paper 
takes a different perspective and starts from the simple empirical observation that \emph{one or few words can change the meaning of a sentence}. Think, for example, to the pair of sentences ``I love her much'' and ``I love her smile'', where replacing a single word alters the meaning of the text. This is captured by the BERT-Base model \cite{bert}, which shows a different attention score pattern over these two sentences (see Figure \ref{fig:att_scores}, left).
Another example can be found in the table on the right of Figure \ref{fig:att_scores}, where changing one word in the prompt would require a well-behaved model (in this case, Llama2-7b \cite{llama2}) to modify its output. In general, language models need to have a high \emph{word sensitivity} to capture semantic changes when just a single word is modified in the context, which motivates the following question: 

\vspace{0.2em}
\begin{center}
    \textit{Do attention layers have a larger word sensitivity than fully connected architectures?}
\end{center}
\vspace{0.2em}

\begin{figure*}[t]
\begin{minipage}{0.42\textwidth}
    \centering
    \includegraphics[width=\textwidth]{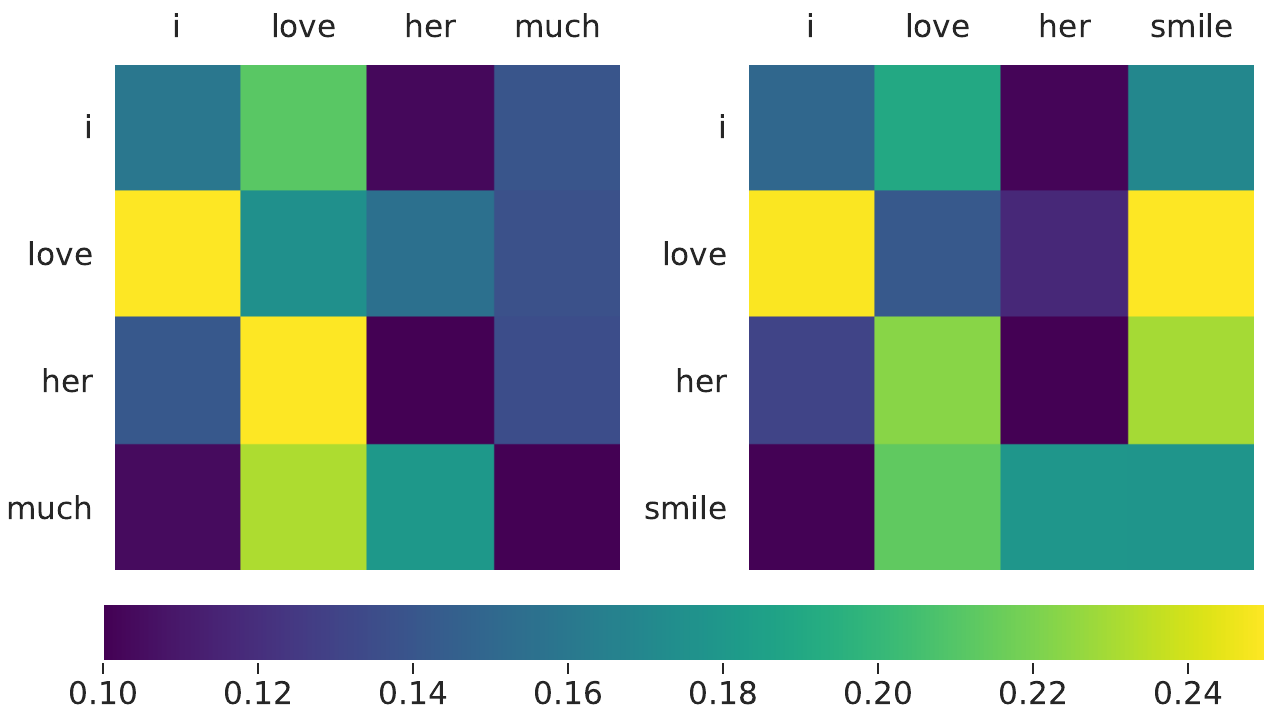}
\end{minipage}
\hspace{\fill}
\begin{minipage}{0.55\textwidth}
    \begin{table}[H]
        \centering
        \begin{tabular}{|p{7cm}|p{1cm}|}
            \hline
            \small{Prompt} & \small{Output} \\
            \hline
            \footnotesize{Reply with "Yes" if the review I will provide you is \textcolor{blue}{positive}, and "No" otherwise.}
            
            \footnotesize{Review: Sorry, gave it a 1, which is the rating I give to movies on which I walk out or fall asleep.} & \footnotesize{No} \\
            \hline
            \footnotesize{Reply with "Yes" if the review I will provide you is \textcolor{blue}{negative}, and "No" otherwise.}
            
            \footnotesize{Review: Sorry, gave it a 1, which is the rating I give to movies on which I walk out or fall asleep.} & \footnotesize{Yes} \\
            \hline
        \end{tabular}
    \end{table}
\end{minipage}
\caption{\emph{Left.} Average attention scores for the word embeddings of the two sentences ``I love her much'' and ``I love her smile''. The embeddings are computed with the BERT-Base model, the scores are averaged over the 12 heads and displayed without the \texttt{[CLS]} token. \emph{Right.} Output of the Llama2-7b-chat model for two prompts differing only in a single word.}
\label{fig:att_scores}
\vspace{-1em}
\end{figure*}

To formalize the problem, we represent the textual data with $X \in \R^{n \times d}$, where the $n$ rows represent the word embeddings in $\R^d$. Then, we define the word sensitivity (WS) in \eqref{eq:sensitivity}, as a measure of how changing a row in $X$ modifies the embedding of a given feature map. We focus our study on the prototypical setting of \emph{random features}, i.e., where the weights of the layers are random. In particular, we consider \emph{(i)} the \emph{random features} (RF) map \cite{rahimi2007random}, defined in \eqref{eq:rf}, and \emph{(ii)} the \emph{random attention features} (RAF) map \cite{fu2023single}, defined in \eqref{eq:raf}. The former models a fully connected architecture, while the latter captures the structure typical of attention layers.

Our contributions can be summarized as follows:

\begin{itemize}
    \item Theorem \ref{thm:rfS} shows that an RF map has \emph{low} WS, specifically of order $1/\sqrt{n}$, where $n$ is the context length. This means that changing a single word has a negligible effect on the output of the map. In fact, to have a significant effect, one needs to change a constant fraction of the words, see Remark \ref{rem:constant}. Furthermore, increasing the depth of the architecture does not help with the word sensitivity, as shown by Theorem \ref{thm:drf}. 

    \item Our main result, Theorem \ref{thm:RAF}, shows that a RAF map has \emph{high} WS, specifically of constant order which does not depend on the length of the context. This means that changing even a single word can have a significant effect on the output of the map, regardless of the context length. 
    The argument critically exploits the role of the $\softmax$ in the attention layer, and numerical simulations show its advantages compared to other activations (e.g., ReLU).

    \item Section \ref{sec:gen} exploits the bounds on the word sensitivity to characterize the generalization error when the data changes meaning after modifying a single word in the context. In particular, we consider generalized linear models trained on RF and RAF embeddings, and we establish whether a fine-tuned or retrained model can learn to distinguish between two samples that differ only in one row. While the answer is provably negative for random features (Theorems \ref{thm:erroronDfinetuning} and \ref{thm:erroronD}), random attention features are capable of generalizing.
\end{itemize}

Most of our technical contributions require no distributional assumptions on the data, and the generality of our findings is confirmed by numerical results on the BERT-Base embeddings of the imdb dataset \cite{imdb}, and on the pre-trained BERT-Base model itself. Our code is publicly available at the GitHub repository \href{https://github.com/simone-bombari/attention-sensitivity}{\texttt{simone-bombari/attention-sensitivity}}.

\section{Related work}\label{sec:rel}

\paragraph{Fully connected layers.}

Several mathematical models have been proposed to understand phenomena occurring for fully connected architectures. A prototypical example is the RF model \cite{rahimi2007random, pennington17nonlinear, louart2018random, mei2022generalization}, which can be thought of as a two-layer neural network with random hidden weights. Its feature learning capabilities have been recently studied in settings where one gradient step on the hidden weights is performed before the final training of the outer layer \cite{ba2022highdimensional, ba2023learning, damian2022neural, moniri2023theory}.
Other popular approaches involve the neural tangent kernel \cite{JacotEtc2018, lee19wide, ghorbani20when, ghorbani2021linearized} and a mean-field analysis \cite{mei18mean, sirignano2020mean2, chizat2018global, rotskoff2018neural, javanmard2020analysis, Shevchenko2021Meanfield}. 
Deep random models have also been considered by \cite{hanin19universal, bosch23aymptotic, schroder2023deterministic}.

\paragraph{Attention layers.}

Attention layers \cite{bahdanau15translation, kim2017structured} and transformer architectures \cite{vaswani17} have attracted significant interest from the theoretical community: 
\cite{Yun2020Are, ben-shaul2023exploring} study their approximation capabilities; \cite{edelman22inductive, trauger2023sequence} provide norm-based generalization bounds; \cite{tian2023scan} analyze the training dynamics; \cite{wu2023on} provide optimization guarantees; \cite{jelassi2022vision, li2023a} focus on computer vision tasks and \cite{oymak23attention} on prompt-tuning. The study of the attention mechanism is approached through the lens of associative memories by \cite{bietti2023birth, cabannes2024scaling}. More closely related to our setting is the recent work by \cite{fu2023single}, which compares the sample complexity of random attention features with that of random features. We highlight that \cite{fu2023single} focus on an attention layer with ReLU activation, while we unveil the critical role of the $\softmax$.



\paragraph{Sensitivity of neural networks.}

We informally use the term sensitivity to express how a perturbation of the input changes the output of the model. Previous work explored various mathematical formulations of this concept (e.g., the input-output Lipschitz constant) in the context of both robustness \cite{weng2018evaluating, bubeck2021a} and generalization \cite{bartlett17spectral}. Sensitivity is generally referred to as an \emph{undesirable} property, motivating research on models that reduce it \cite{miyato2018spectral, prach22orthogonal}. In this work, however, high sensitivity represents a \emph{desirable} attribute, as it reflects the ability of the model to capture the role of individual words in a long context. This is a stronger requirement than having large Lipschitz constant, hence earlier results on the matter \cite{hyunjik21lipschitz} cannot be applied.


\section{Preliminaries}

We consider a sequence of $n$ tokens $\{x_i\}_{i = 1}^n$, with $x_i \in \R^d$ for every $i$, where $d$ denotes the token embedding dimension, and $n$ the context length. These tokens altogether represent the textual sample $X = [x_1, \ldots, x_n] ^\top \in \R^{n \times d}$.
We denote by $\Flat(X)\in \R^D$, with $D=nd$, the flattened (or vectorized) version of $X$. 
Given a vector $x$, $\norm{x}_2$ is its Euclidean norm. Given a matrix $A$, $\norm{A}_2 := \norm{\Flat(A)}_2 \equiv \norm{A}_F$ is its Frobenius norm. We indicate with $e_i$ the $i$-th element of the canonical basis, and denote $[n]:=\{1, \ldots, n\}$. All the complexity notations $\Omega(\cdot)$, $\omega(\cdot)$, $\mathcal{O}(\cdot)$, $o(\cdot)$ and $\Theta(\cdot)$ are understood for sufficiently large context length $n$, token embedding dimension $d$, number of neurons $k$, and number of input samples $N$. We indicate with $C,c>0$ numerical constants, independent of $n, d, k, N$. Throughout the paper, we make the following assumption, which is easily achieved by pre-processing the raw data. 
\begin{assumption}[Normalization of token embedding]\label{ass:d}
    For every token $x_i$, we assume $\norm{x_i}_2 = \sqrt d$.
\end{assumption}


\paragraph{Random Features (RF).}
A fully connected layer with random weights is commonly referred to 
as a \emph{random features} map \cite{rahimi2007random}. The map 
acts from a vector of covariates to a feature space $\R^k$, where $k$ denotes the number of neurons. Thus, we flatten the context $\Flat(X) \in \R^D$ before feeding it in the layer, and the RF map $\varphi_{\textup{RF}}: \R^{n \times d} \to \R^{k}$ takes the form
\begin{equation}\label{eq:rf}
    \varphi_{\textup{RF}}(X) = \phi (V \Flat(X)),
\end{equation}
where $\phi: \R \to \R$ is a non-linearity applied component-wise and $V \in \R^{k \times D}$ is the random features matrix, with $V_{i,j} \distas{}_{\rm i.i.d.}\mathcal{N}(0, 1 / D)$. This scaling of the variance of $V$ ensures that the entries of $V \Flat(X)$ have unit variance, as $\norm{\Flat X}_2 = \sqrt{D}$ by Assumption \ref{ass:d}.
We later consider a similar model with several random layers, referred to as \emph{deep random features} (DRF) model and recently considered by \cite{bosch23aymptotic,schroder2023deterministic}.

\paragraph{Random Attention Features (RAF).}
We consider a single-head sequence-to-sequence self-attention layer without biases $\varphi_{\textup{QKV}}(X)$ \cite{vaswani17}, given by 
\begin{equation}\label{eq:qkv}
    \varphi_{\textup{QKV}}(X) = \softmax \left( \frac{X W_Q^\top W_K X^\top}{\sqrt{d'}} \right) X W_V^\top,
\end{equation}
where the $\softmax$ is applied row-wise and defined as $\softmax (s)_i = e^{s_i} / \sum_j e^{s_j}$; $W_Q, W_K, W_V\in \R^{d' \times d}$ are respectively the queries, keys and values weight matrices.
As in previous related work \cite{fu2023single}, we simplify the above expression with the re-parameterization $W := W_Q^\top W_K \in \R^{d \times d}$, and by removing the values weight matrix. This is for convenience of presentation, and our results can be generalized to the case where queries, keys and values are random independent features, see Remark \ref{rem:rep} at the end of Section \ref{sec:raf}. Thus, we define the \emph{random attention features} layer $\varphi_{\textup{RAF}}: \R^{n \times d} \to \R^{n \times d}$ as
\begin{equation}\label{eq:raf}
    \varphi_{\textup{RAF}}(X) = \softmax \left( \frac{X W X^\top}{\sqrt{d}} \right) X,
\end{equation}
where $W_{i,j} \distas{}_{\rm i.i.d.}\mathcal{N}(0, 1 / d)$. We refer to the argument of the $\softmax$ with the shorthand $S(X) := X W X^\top /\sqrt{d} \in \R^{n \times n}$. The scaling of the variance of $W$ ensures that the entries of $S(X)$ have unit variance. 
Differently from \cite{fu2023single}, we do not consider the biased initialization discussed in \cite{asher23mimetic}, designed to make the diagonal elements of $W$ positive in expectation. As remarked in \cite{asher23mimetic}, this initialization is aimed at replicating the final attention scores of vision transformers, and its utility on language models is less discussed. Furthermore, we remark that our simplified model does not include any of the architectural tweaks introduced to allow transformer models to process longer contexts \cite{bertsch2023unlimiformer, Beltagy2020Longformer}, as it aims to capture the specific properties of a single attention layer.

\paragraph{Word sensitivity (WS).}
The word sensitivity (WS) measures how the embedding of a given mapping $\varphi(X)$ is impacted by a change in a single word. Given a sample $X=[x_1, \ldots, x_n]^\top\in\mathbb R^{n\times d}$, the \emph{perturbed} sample $X^i(\Delta)$ is obtained from $X$ by setting its $i$-th row to $x_i + \Delta$ and keeping the remaining rows the same. Here, $\Delta \in \R^d$ denotes the perturbation of the $i$-th token, and its magnitude does not exceed the scaling of Assumption \ref{ass:d}, i.e., $\norm{\Delta}_2 \leq \sqrt{d}$. Formally, given a mapping $\varphi$, we are interested in 
\vspace{.7em}
\begin{equation}\label{eq:sensitivity}
    \mathcal S_{\varphi}(X) = \sup_{i \in [n], \, \norm{\Delta}_2 \leq \sqrt{d}} \frac{\norm{\varphi(X^i(\Delta)) - \varphi(X)}_2}{\norm{\varphi(X)}_2}.
    \vspace{.7em}
\end{equation}
In words, $\mathcal S_\varphi(X)$ denotes the highest relative change of $\varphi(X)$, upon changing a single token in the input. Our goal is to study how $\mathcal S_\varphi(X)$ behaves for the RF and RAF models, with respect to the context length $n$. Informally, if $\mathcal S_\varphi(X) = o(1)$ for any $X$, the mapping $\varphi$ has \emph{low word sensitivity}. On the contrary, if $\mathcal S_\varphi(X) = \Omega(1)$
, $\varphi$ has \emph{high word 
sensitivity}.

\vspace{.1em}

We remark that, in language models, tokens are elements of a discrete vocabulary. However, working directly in the embedding space (as in definition \eqref{eq:sensitivity}) is common in the theoretical literature \cite{hyunjik21lipschitz}, and critical steps of practical algorithms for language models also take place in the embedding space \cite{ebrahimi2018hotflip, shin2020autoprompt, zou2023universal}.

\vspace{.1em}

The definition of WS recalls similar notions of sensitivity in the context of adversarial robustness \cite{dohmatob2022non, bubeck2021a, wu2021do, bombari2023universal}, as well as the literature that designs adversarial prompting schemes for language models \cite{ebrahimi2018hotflip, guo2021gradient, zou2023universal}.
However, in contrast with the definition in \eqref{eq:sensitivity}, these works consider a \emph{trained} model and, therefore, the results implicitly depend on the training dataset. Our analysis of the WS dissects the impact of the feature map $\varphi$, and it does not involve any training process. The consequences on training (and generalization error) will be considered in Section \ref{sec:gen}. 

\vspace{.1em}

\section{Low WS of random features}\label{sec:rf}

We start by showing that the word sensitivity for the RF map in \eqref{eq:rf} is \emph{low}, i.e., 
$\mathcal S_{{\textup{RF}}} = o(1)$.
\begin{theorem}\label{thm:rfS}
     Let $\varphi_{\textup{RF}}$ be the random features map defined in \eqref{eq:rf}, where $\phi$ is Lipschitz and not identically $0$. Let $X \in \R^{n \times d}$ be a generic input sample s.t.\ Assumption \ref{ass:d} 
     holds, and assume $k = \Omega(D)$. Let $\mathcal S_{\textup{RF}}(X)$ denote the word sensitivity of $\varphi_{\textup{RF}}$ defined in \eqref{eq:sensitivity}. 
     Then, we have
     \vspace{.7em}
     \begin{equation}\label{eq:rfnr}
          \mathcal S_{\textup{RF}}(X) = \bigO{1/\sqrt n} = o(1),
     \vspace{.5em}
     \end{equation}
     with probability at least $1 - \exp(-c D)$ over $V$.
\end{theorem}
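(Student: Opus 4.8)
The plan is to bound the numerator $\norm{\varphi_{\textup{RF}}(X^i(\Delta)) - \varphi_{\textup{RF}}(X)}_2$ from above and the denominator $\norm{\varphi_{\textup{RF}}(X)}_2$ from below, uniformly over $i \in [n]$ and $\norm{\Delta}_2 \le \sqrt d$, and then take the ratio. The key observation for the numerator is that $\Flat(X^i(\Delta)) - \Flat(X)$ is supported on the $d$ coordinates corresponding to the $i$-th token, so $\norm{\Flat(X^i(\Delta)) - \Flat(X)}_2 = \norm{\Delta}_2 \le \sqrt d$, which is a factor $\sqrt{d/D} = 1/\sqrt n$ smaller than $\norm{\Flat(X)}_2 = \sqrt D$. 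Since $\phi$ is $L$-Lipschitz, $\norm{\varphi_{\textup{RF}}(X^i(\Delta)) - \varphi_{\textup{RF}}(X)}_2 \le L \norm{V(\Flat(X^i(\Delta)) - \Flat(X))}_2$, and this equals $L$ times the norm of $V$ restricted to the $d$ columns indexed by token $i$, applied to $\Delta$. That restricted matrix is a $k \times d$ Gaussian matrix with entries $\mathcal N(0, 1/D)$, so by a standard operator-norm bound for Gaussian matrices (e.g.\ $\svmax{\cdot} \lesssim (\sqrt k + \sqrt d)/\sqrt D$ with probability $1 - \exp(-c(k+d))$, then a union bound over the $n$ choices of $i$, absorbing the $\log n$ into the exponent since $k = \Omega(D) = \Omega(nd)$), we get $\norm{\varphi_{\textup{RF}}(X^i(\Delta)) - \varphi_{\textup{RF}}(X)}_2 = \mathcal O(\sqrt{k}/\sqrt n)$ simultaneously for all $i$ and all admissible $\Delta$ (the sup over $\Delta$ is just the operator norm).

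For the denominator, I need a high-probability lower bound $\norm{\varphi_{\textup{RF}}(X)}_2 = \Omega(\sqrt k)$. The rows of $\varphi_{\textup{RF}}(X)$ are $\phi(\langle v_j, \Flat(X)\rangle)$ for $j \in [k]$, where $v_j \sim \mathcal N(0, I_D/D)$, so each $\langle v_j, \Flat(X)\rangle \sim \mathcal N(0,1)$ by Assumption~\ref{ass:d}, and $\norm{\varphi_{\textup{RF}}(X)}_2^2 = \sum_{j=1}^k \phi(g_j)^2$ with $g_j$ i.i.d.\ standard Gaussians. Since $\phi$ is Lipschitz and not identically zero, $\mathbb E[\phi(g)^2]$ is a strictly positive constant (it could be zero only if $\phi \equiv 0$ a.e., contradicting Lipschitz-and-not-identically-$0$ — here one should note $\phi$ continuous so not identically $0$ forces positive measure where $\phi \ne 0$). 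Then $\phi(g_j)^2$ are i.i.d., and their sum concentrates: a Bernstein/sub-exponential concentration argument (using that $\phi(g)^2$ has sub-exponential tails, being the square of a Lipschitz function of a Gaussian) gives $\sum_j \phi(g_j)^2 \ge \tfrac12 k \,\mathbb E[\phi(g)^2]$ with probability $1 - \exp(-c k) \ge 1 - \exp(-c' D)$. Hence $\norm{\varphi_{\textup{RF}}(X)}_2 \ge c'' \sqrt k$.

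Combining the two bounds on the event of probability at least $1 - \exp(-cD)$ (intersecting the two events), we obtain
\begin{equation*}
    \mathcal S_{\textup{RF}}(X) = \sup_{i, \norm{\Delta}_2 \le \sqrt d} \frac{\norm{\varphi_{\textup{RF}}(X^i(\Delta)) - \varphi_{\textup{RF}}(X)}_2}{\norm{\varphi_{\textup{RF}}(X)}_2} \le \frac{\mathcal O(\sqrt k / \sqrt n)}{\Omega(\sqrt k)} = \mathcal O(1/\sqrt n),
\end{equation*}
which is the claim. The main obstacle I anticipate is handling the supremum over $\Delta$ and over $i$ cleanly: the sup over $\Delta$ is painless because the map $\Delta \mapsto V(\Flat(X^i(\Delta)) - \Flat(X))$ is linear so its operator norm is exactly the relevant quantity, but the union bound over $i$ must be carried out carefully to ensure the failure probability stays $\exp(-\Omega(D))$ — this is where the hypothesis $k = \Omega(D)$ is essential, since it makes the per-$i$ Gaussian operator-norm deviation bound strong enough to survive multiplication by $n$. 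A secondary subtlety is making the lower bound $\mathbb E[\phi(g)^2] \ge c > 0$ fully rigorous from "Lipschitz and not identically $0$"; one should invoke continuity of $\phi$ (automatic from Lipschitz) to conclude $\phi$ is nonzero on an interval, hence $\mathbb E[\phi(g)^2] > 0$.
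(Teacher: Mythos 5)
Your proof is correct and follows essentially the same route as the paper: a Lipschitz bound on the numerator via a Gaussian operator-norm estimate, and concentration of $\norm{\varphi_{\textup{RF}}(X)}_2$ around $\Theta(\sqrt k)$ for the denominator (which the paper delegates to Lemma C.3 of \cite{bombari2023stability} rather than reproving via Bernstein). The only cosmetic difference is that the paper bounds $\norm{V(\Flat(X^i(\Delta))-\Flat(X))}_2 \le \opnorm{V}\norm{\Delta}_2$ using the operator norm of the \emph{full} matrix $V$, which handles all $i$ simultaneously and makes your union bound over the $n$ column blocks unnecessary.
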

\vspace{.3em}
Theorem \ref{thm:rfS} shows that the RF model has a low word sensitivity, vanishing with the length of the context $n$. This means that, regardless of how any word is modified, the RF mapping is \emph{not sensitive} to this modification, when the length of the context is large. 
The proof 
follows from an upper bound on the numerator of \eqref{eq:sensitivity}, due to the Lipschitz continuity of $\phi$, 
and a concentration result on the norm at the denominator. 
The details 
are deferred to Appendix \ref{app:rf}. 
\begin{remark}\label{rem:constant}
    Theorem \ref{thm:rfS} is readily extended to the case where the number of words $m$ that can be modified in the context is $o(n)$. In fact, to achieve a sensitivity of constant order, a constant fraction of rows in $X$ must be changed, i.e. $m = \Theta(n)$. This extension of Theorem \ref{thm:rfS} is also proved in Appendix \ref{app:rf}, and it further illustrates that a fully connected architecture is unable to capture the change of few (namely, $m = o(n)$) semantically relevant words in a long sentence.
\end{remark}

We numerically validate this result in Figure \ref{fig:RF_S} (left), where we estimate the value of $\mathcal S_{\textup{RF}}$ for different token embedding dimensions $d$, as the context length $n$ increases. 
Clearly, $\mathcal S_{\textup{RF}}$ decreases as the context length increases, regardless of the embedding dimension $d$. In fact, the curves corresponding to different values of $d$ (in different colors) basically coincide.



\begin{figure}[!t]
    \centering
    \includegraphics[width=0.7\textwidth]{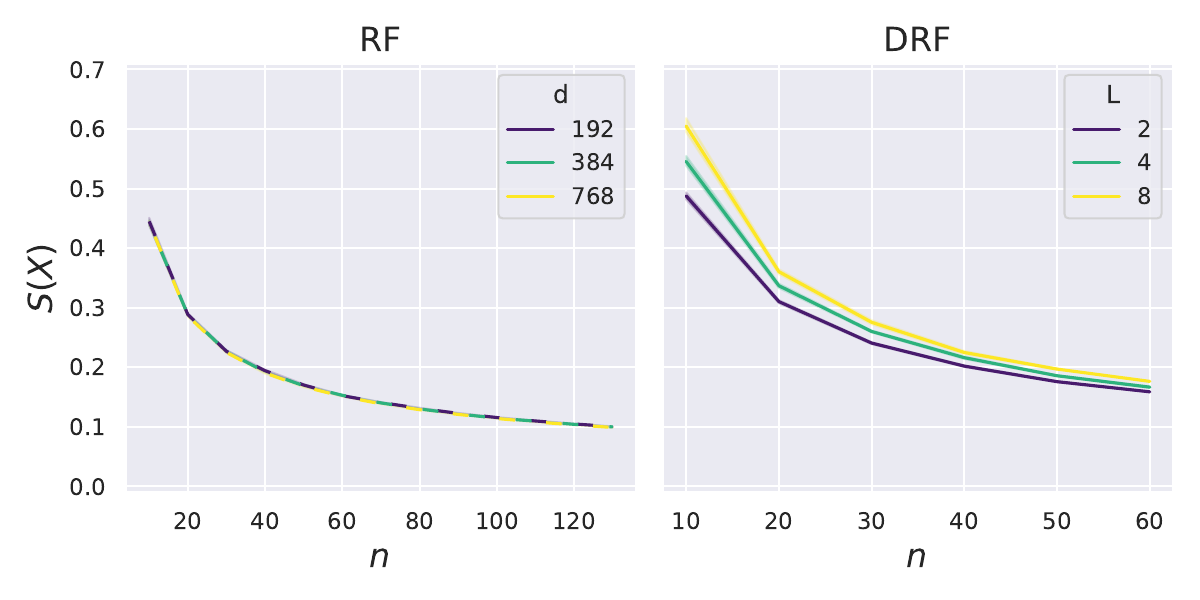}
    \caption{Numerical estimate of the WS for the RF (left) and DRF (right) map, with a ReLU activation function. We estimate $\mathcal S_{\varphi}$ looking for the perturbation $\Delta^* = \arg \sup_{\norm{\Delta}_2 \leq \sqrt{d}} \norm{\varphi(X^1(\Delta)) - \varphi(X)}_2$, where we fix the first token for symmetry. We find $\Delta^*$ by optimizing our objective with constrained gradient ascent. 
    For RF, we consider $d\in\{192, 384, 768\}$, as $n$ increases (taking the first $d$ dimensions of the embeddings of the first $n$ tokens). For DRF, we repeat the experiment for different depths $L\in\{2, 4, 8\}$ and fixed $d = 768$, as $n$ increases. As textual data $X$, we use the BERT-Base token embeddings of samples from the imdb dataset, after a pre-processing to adapt the dimensions and fulfill Assumption \ref{ass:d}. We plot the average over 10 independent trials and the confidence band at 1 standard deviation. In the figure on the left, we intentionally dash the plotted lines to ease the visualization, as they overlap.
    }
    \label{fig:RF_S}
\end{figure}

\paragraph{Deep Random Features (DRF).} 

\vspace{-0.3em}

As an extension of Theorem \ref{thm:rfS}, we show that the \emph{word sensitivity is low} also for \emph{deep} random features. We consider the 
DRF map
\begin{equation}\label{eq:drf}
    \varphi_{\textup{DRF}}(X) := \phi(V_L \phi( V_{L-1} (... V_2 \phi (V_1 \Flat(X))  ... ))),
\end{equation}
where $\phi: \R \to \R$ is the component-wise non-linearity, $k$ is the number of neurons at each layer, 
and $V_l \in \R^{k \times k}$ are the random weights at layer $l$, with $[V_1]_{i,j} \distas{}_{\rm i.i.d.}\mathcal{N}(0, \beta / D)$ and $[V_l]_{i,j} \distas{}_{\rm i.i.d.}\mathcal{N}(0, \beta / k)$ for $l>1$.
We set $\beta$ according to He's initialization \cite{He2015}, which ensures 
\begin{equation}\label{eq:He}
\E_{\rho \sim \mathcal N(0, \beta)} \left[ \phi^2( \rho ) \right] = 1,
\end{equation}
as done in \cite{hanin18which} to avoid the problem of vanishing/exploding gradients in the analysis of a deep 
network.

\begin{theorem}\label{thm:drf}
Let $\varphi_{\textup{DRF}}$ be the deep random feature map defined in \eqref{eq:drf}, where $\phi$ is Lipschitz and $\beta$ is chosen s.t.\ \eqref{eq:He} holds. 
Let $X \in \R^{n \times d}$ be a generic input sample s.t.\ Assumption \ref{ass:d} holds, and assume $k = \Theta(D)$, $L = o (\log k)$. Let $\mathcal S_{\textup{DRF}}(X)$ denote the word sensitivity of $\varphi_{\textup{DRF}}$ defined in \eqref{eq:sensitivity}. Then, we have
\vspace{0.8em}
\begin{equation}
    \mathcal S_{\textup{DRF}} = \bigO{\frac{e^{CL}}{\sqrt n}},
\vspace{.7em}
\end{equation}
with probability at least $1 - \exp(-c \log^2 d)$ over $\{ V_l \}_{l = 1}^L$. 
\end{theorem}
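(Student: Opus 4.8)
\textbf{Proof proposal for Theorem \ref{thm:drf}.}

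The plan is to reduce the deep case to a layer-by-layer Lipschitz-continuity argument, exactly as in the proof of Theorem \ref{thm:rfS} but tracking how the per-layer operator norms compound. Write $z_0 := \Flat(X)$ and $z_l := \phi(V_l z_{l-1})$ for $l \in [L]$, so that $\varphi_{\textup{DRF}}(X) = z_L$; similarly let $\tilde z_l$ be the corresponding hidden activations for the perturbed input $X^i(\Delta)$. First I would bound the numerator of \eqref{eq:sensitivity}. Since $\phi$ is $\Lip(\phi)$-Lipschitz, $\norm{z_l - \tilde z_l}_2 \le \Lip(\phi)\,\opnorm{V_l}\,\norm{z_{l-1} - \tilde z_{l-1}}_2$, so iterating gives $\norm{z_L - \tilde z_L}_2 \le \Lip(\phi)^L \big(\prod_{l=1}^L \opnorm{V_l}\big)\,\norm{z_0 - \tilde z_0}_2$. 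The base term $\norm{z_0 - \tilde z_0}_2 = \norm{\Delta}_2 \le \sqrt d$ by the constraint in \eqref{eq:sensitivity}, and the standard non-asymptotic bound on the operator norm of a Gaussian matrix gives $\opnorm{V_1} = \mathcal O(\sqrt{\beta\, k/D}) = \mathcal O(\sqrt\beta)$ (using $k = \Theta(D)$) and $\opnorm{V_l} = \mathcal O(\sqrt\beta)$ for $l > 1$, each with probability at least $1 - \exp(-c k)$; a union bound over the $L = o(\log k)$ layers keeps the failure probability at $\exp(-\Omega(k)) \le \exp(-c\log^2 d)$. Hence the numerator is $\mathcal O\big((\Lip(\phi)\sqrt\beta\,C)^L \sqrt d\big) = \mathcal O(e^{CL}\sqrt d)$ for an appropriate constant $C$ absorbing $\Lip(\phi)$, $\sqrt\beta$ and the operator-norm constant.

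Next I would lower bound the denominator $\norm{z_L}_2 = \norm{\varphi_{\textup{DRF}}(X)}_2$. This is the step that genuinely uses He's normalization \eqref{eq:He}: the choice of $\beta$ is designed so that each layer preserves the squared norm in expectation, i.e.\ $\E[\norm{z_l}_2^2 \mid z_{l-1}] = k \cdot \E_{\rho\sim\mathcal N(0,\beta \norm{z_{l-1}}_2^2/k)}[\phi^2(\rho)]$, which equals $\norm{z_{l-1}}_2^2$ when $\norm{z_{l-1}}_2^2 = k$ and more generally stays controlled. Concentrating $\norm{z_l}_2^2$ around its conditional mean at each layer — this is where the hypothesis $L = o(\log k)$ is needed, so that the multiplicative fluctuations $(1 \pm \mathcal O(\mathrm{poly}\log d/\sqrt k))$ accumulated over $L$ layers still multiply to $(1 + o(1))$ — yields $\norm{z_L}_2 = \Theta(\sqrt D)$ with probability at least $1 - \exp(-c\log^2 d)$. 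This kind of deep-network norm-propagation concentration is precisely what \cite{hanin18which} establishes, so I would invoke (or adapt) that machinery rather than re-derive it. One technical subtlety: $\phi$ Lipschitz and not identically $0$ does not by itself prevent $\E_{\rho\sim\mathcal N(0,\beta)}[\phi^2(\rho)]$ from vanishing, but \eqref{eq:He} fixes $\beta$ to make it $1$, so this is consistent with the hypotheses; I should make sure the intermediate layers, where $\norm{z_{l-1}}_2$ may deviate slightly from $\sqrt k$, still have $\E_{\rho}[\phi^2(\rho)]$ bounded away from $0$ and $\infty$, which follows from Lipschitzness and a uniform lower bound on $\E[\phi^2]$ over a neighborhood of variances.

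Combining the two bounds, on the intersection of the two good events (still of probability at least $1 - \exp(-c\log^2 d)$ after a union bound),
\begin{equation}
\frac{\norm{\varphi_{\textup{DRF}}(X^i(\Delta)) - \varphi_{\textup{DRF}}(X)}_2}{\norm{\varphi_{\textup{DRF}}(X)}_2} \le \frac{\mathcal O(e^{CL}\sqrt d)}{\Theta(\sqrt D)} = \mathcal O\!\left(\frac{e^{CL}}{\sqrt n}\right),
\end{equation}
using $D = nd$. Since the numerator bound is uniform over $\norm{\Delta}_2 \le \sqrt d$ and over $i \in [n]$ (the operator-norm and norm-propagation events do not depend on $\Delta$ or $i$), taking the supremum in \eqref{eq:sensitivity} preserves the bound, giving $\mathcal S_{\textup{DRF}}(X) = \mathcal O(e^{CL}/\sqrt n)$ as claimed. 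The main obstacle is the denominator: showing $\norm{z_L}_2 = \Theta(\sqrt D)$ requires controlling the compounding of multiplicative errors across depth, and getting the probability down to $\exp(-c\log^2 d)$ while only assuming $L = o(\log k)$ is the delicate part — the numerator bound and the final division are routine once that is in hand.
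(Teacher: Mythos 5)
Your proposal is correct and follows essentially the same route as the paper: a layer-by-layer Lipschitz/operator-norm iteration for the numerator (the paper's Lemma \ref{lemma:0drf}) and an inductive norm-propagation argument via He's initialization for the denominator (the paper's Lemma \ref{lemma:concentrationnorminduction}), combined under $k=\Theta(D)$ and $L=o(\log k)$. The only cosmetic difference is bookkeeping: the paper tracks an additive error $e^{Cl}\log D$ per layer by decomposing $V_l z_{l-1}$ around the renormalized activation and applying Bernstein, which sidesteps the variance-perturbation subtlety you flag, whereas you propose multiplicative fluctuations compounded over depth — both yield $\norm{\varphi_{\textup{DRF}}(X)}_2=\Theta(\sqrt k)$ under the stated scaling.
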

\vspace{.3em}
Theorem \ref{thm:drf} shows that, as in the shallow case, the word sensitivity decreases with the length of the context $n$. The proof requires showing that $\norm{\varphi_{\textup{DRF}}(X)}_2$ concentrates to $\sqrt k$, which is achieved via \eqref{eq:He}. 
The strategy to bound the term $\norm{\varphi_{\textup{DRF}}(X^i(\Delta)) - \varphi_{\textup{DRF}}(X)}_2$ is similar to that of the shallow case. The
details are deferred to 
Appendix \ref{app:drf}.

The exponential dependence on $L$ comes from our worst-case analysis of the Lipschitz constant of the model, and it does not fully exploit the independence between the $V_i$'s. Thus, we expect the actual dependence of the WS on $L$ to be milder. This is confirmed by the numerical results of Figure \ref{fig:RF_S} (right), where we numerically estimate 
$\mathcal S_{\textup{DRF}}$ for different depths $L$ as the context length $n$ increases. For all values of $L$, 
the word sensitivity quickly 
decreases with $n$.


\section{High WS of random attention features}\label{sec:raf}

In contrast with random features, we show that the \emph{word sensitivity is high} for the RAF map in \eqref{eq:raf}, i.e.,  $\mathcal S_{{\textup{RAF}}} = \Omega(1)$.
\begin{theorem}\label{thm:RAF}
Let $\varphi_{\textup{RAF}}$ be the random attention features map defined in \eqref{eq:raf}. Let $X \in \R^{n \times d}$ be a generic input sample s.t.\ Assumption \ref{ass:d} holds, and assume $ d / \log^4 d = \Omega(n)$. Let $\mathcal S_{\textup{RAF}}(X)$ denote the word sensitivity of $\varphi_{\textup{RAF}}$ defined in \eqref{eq:sensitivity}. Then, we have
\vspace{0.8em}
\begin{equation}
    \mathcal S_{{\textup{RAF}}}(X) = \Omega(1),
\vspace{0.6em}
\end{equation}
with probability at least $1 - \exp(-c \log^2 d)$ over $W$.
\end{theorem}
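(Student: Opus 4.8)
The plan is to exhibit an explicit perturbation $\Delta$ of a single token that moves $\varphi_{\textup{RAF}}(X)$ by a constant fraction of $\norm{\varphi_{\textup{RAF}}(X)}_2$. First I would pin down the denominator: since $\varphi_{\textup{RAF}}(X) = \softmax(S(X))\,X$ is a convex combination of the rows of $X$ (applied row-wise), each of Euclidean norm $\sqrt d$ by Assumption \ref{ass:d}, we have $\norm{\varphi_{\textup{RAF}}(X)}_2 \leq \sqrt{n d}$ deterministically; a matching lower bound $\norm{\varphi_{\textup{RAF}}(X)}_2 = \Omega(\sqrt{nd})$ should hold on the high-probability event, because when $d \gg n$ the matrix $W$ is "generic" enough that $S(X)$ has entries of constant order, so no row of the $\softmax$ output collapses onto a single token — more carefully, one argues the attention weights are bounded away from being a point mass, hence the averaged rows retain $\Omega(\sqrt d)$ norm. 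So the target is to produce a perturbation with $\norm{\varphi_{\textup{RAF}}(X^i(\Delta)) - \varphi_{\textup{RAF}}(X)}_2 = \Omega(\sqrt{nd})$.

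The key mechanism is that the $\softmax$ can be \emph{saturated}: if I perturb token $x_i$ to $x_i + \Delta$, then in every row $j$ the logit $S(X^i(\Delta))_{j,i} = \langle x_j, W(x_i+\Delta)\rangle/\sqrt d$ changes by $\langle x_j, W\Delta\rangle/\sqrt d$. The idea is to choose $\Delta$ so that $\langle x_j, W\Delta\rangle$ is \emph{large and positive simultaneously for many rows $j$}, which forces $\softmax(S(X^i(\Delta)))$ to put almost all its mass on column $i$ in those rows, so that the corresponding output rows all become $\approx x_i + \Delta$. The natural choice is $\Delta \propto W^\top u$ for a suitable unit vector $u$ (rescaled to $\norm{\Delta}_2 = \sqrt d$); then $\langle x_j, W\Delta\rangle \propto \langle W x_j, u\rangle$, and I want $u$ aligned with many of the vectors $W x_j$. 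Here the high-dimensional regime $d/\log^4 d = \Omega(n)$ is essential: the $n$ vectors $\{W x_j\}_{j\in[n]}$ live in $\R^d$, each with norm $\Theta(\sqrt d)$, and in dimension $d \gg n$ one can find a direction $u$ with $\langle W x_j, u\rangle = \Omega(\sqrt d)$ for all $j$ — e.g. take $u \propto \sum_j W x_j$ (if the $Wx_j$ don't cancel), or more robustly solve a feasibility/averaging argument; this makes the perturbation to each logit of order $\sqrt d \cdot \sqrt d / \sqrt d = \sqrt d = \omega(\log d)$, enough to saturate all $n$ rows of the $\softmax$ at once. After saturation, $\varphi_{\textup{RAF}}(X^i(\Delta)) \approx \mathbf 1 (x_i+\Delta)^\top$, whose norm is $\sqrt n \cdot \norm{x_i+\Delta}_2$; comparing this with $\varphi_{\textup{RAF}}(X)$ (whose rows are spread-out averages) and checking they are not accidentally close gives the $\Omega(\sqrt{nd})$ lower bound on the numerator, and hence $\mathcal S_{\textup{RAF}}(X) = \Omega(1)$.

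Concretely the steps are: (1) condition on a high-probability event for $W$ — operator norm $\opnorm{W} = \mathcal O(1)$ and the concentration $\abs{\langle x_j, W x_\ell\rangle} = \mathcal O(\sqrt d \log d)$ for all $j,\ell$, via standard Gaussian concentration and a union bound over the $n^2 = \mathcal O(d^2/\log^8 d)$ pairs (this is where the probability $1 - \exp(-c\log^2 d)$ and the hypothesis on $n$ enter); (2) on this event, lower bound $\norm{\varphi_{\textup{RAF}}(X)}_2 = \Omega(\sqrt{nd})$; (3) construct $u$ (a unit vector) with $\min_{j\in[n]}\langle W x_j, u\rangle \geq c' \sqrt d$, set $\Delta = \sqrt d \, W^\top u / \norm{W^\top u}_2$ (and verify $\norm{\Delta}_2 = \sqrt d$, so $\Delta$ is admissible), and show each logit in column $i$ increases by $\omega(\log d)$ in every row while the other logits move by $\mathcal O(\sqrt d \log d / \sqrt d) = \mathcal O(\log d)$ — so the gap blows up and the row-wise $\softmax$ concentrates on column $i$ up to error $n e^{-\omega(\log d)} = o(1)$; (4) conclude $\norm{\varphi_{\textup{RAF}}(X^i(\Delta)) - \varphi_{\textup{RAF}}(X)}_2 = \Omega(\sqrt{nd})$ and divide. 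The main obstacle I anticipate is step (3): producing a single direction $u$ that is \emph{simultaneously} well-correlated with all $n$ image vectors $W x_j$ — the naive averaging choice $u\propto\sum_j Wx_j$ can fail if there is cancellation (e.g. if the $x_j$ come in near-antipodal pairs), so one likely needs either a more careful case analysis (split rows by the sign of their correlation and perturb to saturate the larger group, which still carries $\Omega(n)$ rows and hence $\Omega(\sqrt{nd})$ output norm), or to exploit that only a constant fraction of rows needs saturating to change the norm by a constant factor. Getting this combinatorial/geometric argument to work robustly for an \emph{arbitrary} $X$, with no distributional assumptions, is the crux; the $\softmax$ saturation itself and the concentration bounds are routine once $u$ is in hand.
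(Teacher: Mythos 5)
Your strategy is essentially the paper's: perturb token $i$ by a vector of the form $W^\top(\cdot)$ so that the new logits in column $i$ exceed the $\mathcal O(\log d)$ background level of the remaining (Gaussian) logits, saturate the row-wise $\softmax$, and divide by the deterministic bound $\norm{\varphi_{\textup{RAF}}(X)}_F\le\sqrt{nd}$. (Your proposed lower bound $\norm{\varphi_{\textup{RAF}}(X)}_2=\Omega(\sqrt{nd})$ is unnecessary — only the upper bound on the denominator enters the ratio — and its justification for an arbitrary $X$ is itself doubtful.) However, two steps remain unresolved. The first is the one you flag as the crux: a single unit vector $u$ with $\langle Wx_j,u\rangle=\Omega(\sqrt d)$ for all $j$ does not exist in general (antipodal rows give an immediate obstruction, and even for orthogonal rows a Pythagoras-type bound caps the simultaneously achievable level at $\mathcal O(\sqrt{d/n})$ for $\Omega(n)$ rows). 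The paper resolves this with a probabilistic-method argument (Lemma \ref{lemma:dstar}): a direction $\delta$ drawn uniformly in the row span of $X$ and scaled to norm at most $\sqrt d$ satisfies $(x_j^\top\delta)^2=\Omega(d^2/n)$ for $\Omega(n)$ indices $j$; the hypothesis $d/\log^4 d=\Omega(n)$ then converts $d/\sqrt n$ into $\sqrt d\log^2 d$, so that after passing through $W^\top$ (Lemma \ref{lemma:Dstar}) the logit boost is $\Omega(\log^2 d)=\omega(\log d)$. Note that the achievable boost is $\Theta(\log^2 d)$ when $n$ is near its maximum allowed value, not the $\Omega(\sqrt d)$ you claim; this does not matter since $\omega(\log d)$ suffices, but your arithmetic presumes an alignment that is generally unattainable.

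The second gap you do not address beyond the phrase ``checking they are not accidentally close.'' After saturation the affected output rows all become approximately $(x_i+\Delta)^\top$, but nothing prevents the corresponding rows $[\varphi_{\textup{RAF}}(X)]_{j:}$ — convex combinations of the $x_k$'s, which also depend on the random $W$ through the attention scores — from already lying within $o(\sqrt d)$ of $x_i+\Delta$ for the particular $\Delta$ you constructed, in which case the numerator would not be $\Omega(\sqrt{nd})$. The paper's device (Lemmas \ref{lemma:dstar12}, \ref{lemma:scoresstar} and \ref{lemma:numeratoromega}) is to build \emph{two} admissible perturbations $\Delta^*_1,\Delta^*_2$, both saturating the $\softmax$ on $\Omega(n)$ common rows, with $\norm{\Delta^*_1-\Delta^*_2}_2=\Omega(\sqrt d)$; the separation is obtained by adding and subtracting a vector orthogonal to the row span of $X$, which exists because $d>n$. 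By the triangle inequality a given row of $\varphi_{\textup{RAF}}(X)$ cannot be $o(\sqrt d)$-close to both $x_i+\Delta^*_1$ and $x_i+\Delta^*_2$, so at least one of the two perturbations moves $\Omega(n)$ rows by $\Omega(\sqrt d)$ each, yielding the $\Omega(\sqrt{nd})$ Frobenius lower bound. Without this (or an equivalent) mechanism, your step (4) does not go through for an arbitrary $X$.
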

\vspace{0.2em}
Theorem \ref{thm:RAF} shows that the RAF model has a high word sensitivity, regardless of the length of the context $n$, as long as $d / \log^4 d = \Omega(n)$. This requires a number of tokens $n$ that grows slower than the embedding dimension $d$. Models such as BERT-Base or BERT-Large have an embedding dimension of 768 and 1024 \cite{bert}, which allows our results to hold for fairly large context lengths.
In fact, we prove a stronger statement: 
for \emph{any} index $i \in [n]$, there exists a perturbation $\Delta^*$ (possibly dependent on $i$) s.t.\ the RAF map changes significantly when evaluated in $X^i(\Delta^*)$. 
The proof of Theorem \ref{thm:RAF} is deferred to Appendix \ref{app:raf}, and a sketch follows.


\paragraph{Proof sketch.} The argument is not constructive, and the difficulty in finding a closed-form solution for the perturbation $\Delta^*$ is due to the lack of assumptions on the sample $X$, which is entirely generic. We follow the steps below. 

\emph{\ul{Step 1}: Find a direction $\delta^*$ aligned with many words $x_j$'s.} Using the probabilistic method, we prove the existence of a vector $\delta^* \in \R^d$, with $\norm{\delta^*}_2 \leq \sqrt d$, s.t.\ its inner product with the tokens embeddings $x_j$'s is large for a constant fraction of the words in the context. In particular, Lemma \ref{lemma:dstar} shows that $\left( x_j^\top \delta^* \right)^2 = \Omega ( d^2 / n )$ for at least $\Omega(n)$ indices $j \in [n]$. 

\vspace{1em}
\emph{\ul{Step 2}: Exhibit two directions $\Delta^*_1$ and $\Delta^*_2$ both aligned with many words in the feature space $\{W^\top x_j\}_{j=1}^n$.} Exploiting the properties of the Gaussian attention features $W$, we deduce the existence of two vectors $\Delta^*_1$ and $\Delta^*_2$ that \emph{(i)} are far from each other, and \emph{(ii)} ensure a constant fraction of the entries of $X W \Delta^*_k /\sqrt d$, $k\in \{1, 2\}$, to be large. In particular, by exploiting our assumption $d /\log^4 d = \Omega(n)$, Lemmas \ref{lemma:dstar12} and \ref{lemma:Dstar} show that $[X W \Delta^*_k / \sqrt d ]_j = \Omega \left( \log^2 d \right)$ for at least $\Omega(n)$ indices $j \in [n]$, with $\norm{\Delta^*_1 - \Delta^*_2}_2 = \Omega(\sqrt d)$. 

\vspace{1em}
\emph{\ul{Step 3}: Show that the attention concentrates towards the perturbed word.} 
Recall that $s(X) := \softmax( X W X^\top / \sqrt{d})$ denotes the attention scores matrix. Then, Lemma \ref{lemma:scoresstar} proves that the attention scores $[s(X^i(\Delta^*_{k}))]_{j:}^\top$ are well approximated by the canonical basis vector $e_i$, for $k\in \{1, 2\}$ and an $\Omega(n)$ number of rows $j \in [n]$. This intuitively means that a constant fraction of tokens $x_j$ \emph{moves all their attention} towards the $i$-th modified token $x_i + \Delta^*_{k}$. This step critically exploits the $\softmax$ function in the RAF map: if one entry in its argument is $\Omega \left( \log^2 d \right)$, then the attention scores concentrate as described above. 

\vspace{1em}
\emph{\ul{Step 4}: Conclude with at least one perturbation between $\Delta^*_1$ and $\Delta^*_2$.} Finally, exploiting the fact that $\norm{\Delta^*_1 - \Delta^*_2}_2 = \Omega(\sqrt d)$, Lemma \ref{lemma:numeratoromega} proves that at least one of them gives a $\Delta^*$ s.t.\ $\norm{\varphi_{\textup{RAF}}(X) - \varphi_{\textup{RAF}}(X^i(\Delta^*))}_F = \Omega(\sqrt {dn})$. This, together with the upper bound $\norm{\varphi_{\textup{RAF}}(X)}_F = \mathcal O (\sqrt{dn})$, concludes the argument. \qed 

\vspace{1em}

\begin{figure*}[!t]
    \centering
    \includegraphics[width=\textwidth]{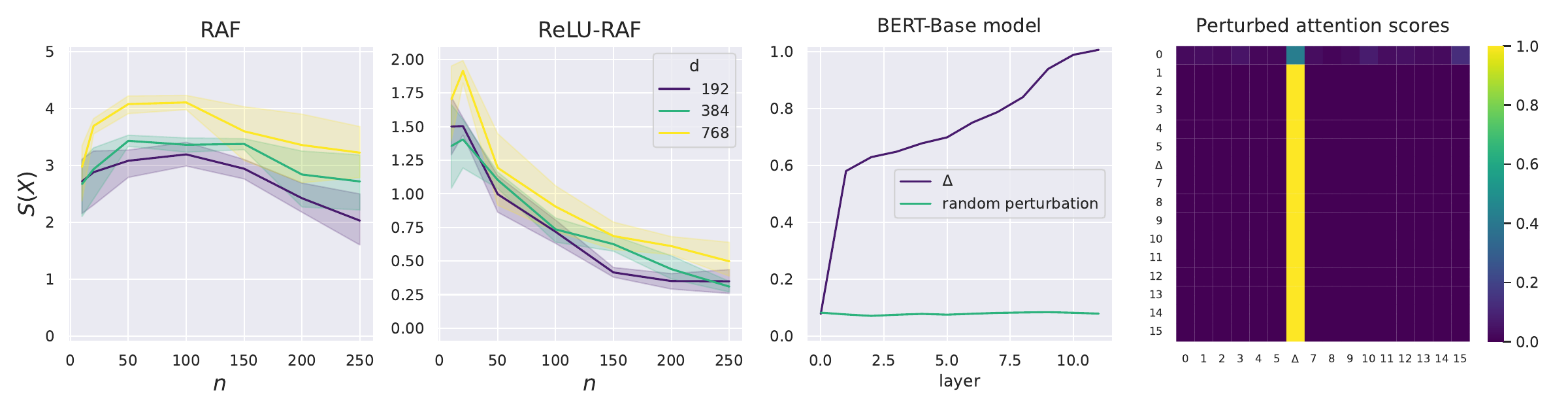}
    \vspace{0.3em}
    \caption{Numerical estimate of the WS for the RAF (first plot) and ReLU-RAF (second plot) map. The ReLU-RAF map is defined as the RAF one, but the $\softmax$ is replaced with a ReLU activation over the entries of $S(X)$, followed by a re-normalization to ensure that the attention scores sum up (on every row) to 1, as in the $\softmax$ case. We consider $d={192, 384, 768}$, as $n$ increases. The rest of the setup is equivalent to the one described in Figure \ref{fig:RF_S}. In the third plot, we present the relative change of the pre-trained BERT-Base model layer embeddings, evaluated on the abstract of this paper ($\sim$ 200 tokens), when the 42nd token is modified in embedding space with a vector $\Delta$. The $0$-th layer represents the input itself and, as a comparison, we report the results when the perturbation is chosen to be Gaussian noise with the same norm. In the fourth plot, we present the attention scores in the first head of the first layer of BERT-Base model, evaluated on the title of this paper, when the 6th token is modified in embedding space with a vector $\Delta$. In the third and fourth plots, $\Delta$ is chosen to be a perturbation that \emph{attracts} all the attention on the perturbed key token, which follows the proof idea of Theorem \ref{thm:RAF}.}
    \vspace{0.3em}
    \label{fig:RAF_S}
\end{figure*}

In Figure \ref{fig:RAF_S} (first plot), we estimate the value of $\mathcal S_{\textup{RAF}}$ for different token embedding dimensions $d$, as the context length $n$ increases. In contrast with the random features map, even for large values of $n$, the WS remains larger than $1$.
In the same figure, in the second plot, we repeat the experiment for the ReLU-RAF map, which replaces the $\softmax$ with a ReLU activation. 
$\mathcal S_{\textup{ReLU-RAF}}(X)$ seems to decrease with the context length $n$, and has in general smaller values than $\mathcal S_{\textup{RAF}}(X)$. This 
highlights the importance of the $\softmax$ function, as discussed in \emph{Step 3} of the proof sketch above. The condition $d /\log^4 d = \Omega(n)$ is required to allow \emph{step 2} to go through. We believe this to be a reasonable assumption, as the maximum context length tends to be smaller than the embedding dimension. Popular examples include BERT-Base ($n = 512$, $d = 768$), BERT-Large ($n = 512$, $d = 1024$), and the Llama-2 family ($n = 4096$, $d = 5120$).

\vspace{1em}

\begin{remark}\label{rem:rep}
The re-parameterization of the attention layer through the features $W$, which removes the dependence on queries, keys and values, does not substantially change the problem. In fact, \emph{Step 2} of the argument uses that $W$ acts as an approximate isometry on $\delta^*$. This would also hold for the product of two independent Gaussian matrices $W_Q^\top W_K$. Similarly, introducing the independent Gaussian matrix $W_V$ would not interfere with our conclusion in \emph{Step 4}. Additionally, the results obtained on the RAF model seem to extend to realistic, pre-trained, transformer architectures. In fact, in the third plot of Figure \ref{fig:RAF_S}, we provide a lower bound on the sensitivity of the BERT-Base architecture, evaluated on the abstract of this paper, when the 42nd token is modified by a perturbation $\Delta$. The blue line shows how after the first layer the embeddings are heavily modified, and how this change increases deeper in the architecture. In this case, the perturbation $\Delta$ follows from the proof idea of Theorem \ref{thm:RAF}, as it is chosen to \emph{move all the attention} towards $x_i$ (see the fourth plot in Figure \ref{fig:RAF_S}). To do so, we set $\Delta$ to be aligned with the right singular vector of $W_Q^\top W_K$ associated with the largest singular value. We do this on all the heads in the first layer separately, and then average and re-normalize.
\end{remark}

\section{Generalization on context modification}\label{sec:gen}

The study of the word sensitivity is motivated by understanding the capabilities of a model to learn to distinguish two contexts $X$ and $X^i(\Delta)$ that only differ by a word. In fact, in practice, modifying a single row/word of $X$ can lead to a significant change in the meaning, see Figure \ref{fig:att_scores}. We 
formalize the problem 
in a supervised learning setting, and characterize whether a \emph{generalized linear model} (GLM) induced by a feature map $\varphi$ generalizes over a sample $(X^i(\Delta), y_\Delta)$, after being trained on $(X, y)$. Crucially, the index $i \in [n]$ and the perturbation $\Delta$ are s.t.\ the label $y_{\Delta}$ is different from $y$ (e.g., $y\in\{-1, +1\}$ and $y_{\Delta}=-y$), namely, perturbing the $i$-th word changes the meaning of the context. 

\paragraph{Supervised learning with generalized linear models (GLMs).} Let $(\mathcal X, \mathcal Y)$ be a labelled training dataset, where $\mathcal X =(X_1, \ldots, X_N)$ contains the training data $X_i\in\mathbb R^{n\times d}$ 
and $\mathcal Y = [y_1, \ldots, y_N]^{\top}\in \{-1, 1\}^N$ the corresponding binary labels. 
The sample $(X, y)$ does not belong to $(\mathcal X, \mathcal Y)$, and it is introduced later in the training set. Let $\varphi : \R^{n \times d} \to \R^p$ be a 
feature map, and 
consider the GLM
\vspace{.6em}
\begin{equation}\label{eq:rfmodel}
    f_{\varphi}(\cdot, \theta) = \varphi(\cdot)^\top \theta,
\vspace{.5em}
\end{equation}
where 
$\theta \in \R^p$ are 
trainable parameters of the model. We define the feature matrix as $\Phi_{\varphi} := [\varphi(X_1), \ldots, \varphi(X_N)]^\top \in \R^{N \times p}$, and  
focus on the quadratic loss 
\vspace{.6em}
\begin{equation}\label{eq:loss}
    \mathcal L(\theta) := \frac{1}{N} \sum_{j = 1}^N \left( \varphi(X_j)^\top \theta - y_j \right)^2.
\vspace{.5em}
\end{equation}
Minimizing \eqref{eq:loss} with gradient descent gives (see equation (33) in \cite{bartlett21deep}) 
\vspace{.6em}
\begin{equation}\label{eq:thetastar}
    \theta^* = \theta_0 + \Phi_\varphi^+ (\mathcal Y - \Phi_\varphi \theta_0),
\vspace{.5em}
\end{equation}
where $\theta^*$ is the gradient descent solution, $\theta_0$ is the initialization, and $\Phi^+_\varphi$ is the Moore-Penrose inverse of $\Phi_\varphi$. %


Our goal is to establish whether the additional training on the sample $(X, y)$ allows the model $f_\varphi$ to generalize 
on $(X^i(\Delta), y_\Delta)$, with $ y_{\Delta} = -y$. Thus, we do \emph{not} focus on the case where $f_\varphi(X, \theta^*) = y$ and $f_\varphi(X^i(\Delta), \theta^*) = y_{\Delta} $, i.e., $f_\varphi(\cdot, \theta^*)$ already generalizes well on the two new samples.
Instead, we look at how $f_\varphi(\cdot, \theta^*)$ \emph{extrapolates} the information contained in the pair $(X, y)$ to the perturbed sample $X^i(\Delta)$. This motivates the following assumption.

\vspace{0.2em}
\begin{assumption}\label{ass:uncertainity}
There exists a parameter $\gamma \in [0, 2)$ s.t.\
\vspace{.4em}
\begin{equation}\label{eq:confidence}
    \left| f_\varphi(X^i(\Delta), \theta^*) - f_\varphi(X, \theta^*) \right| \leq \gamma.
\vspace{.3em} 
\end{equation}
\end{assumption}
The parameter $\gamma$ captures the degree over which the trained model $f_\varphi(\cdot, \theta^*)$ can distinguish between $X$ and $X^i(\Delta)$. If $\gamma = 0$,  $f_\varphi(\cdot, \theta^*)$ does not recognize any difference between $X$ and $X^i(\Delta)$; instead, if $f_\varphi(\cdot, \theta^*)$ correctly classifies the two samples $X$ and $X^i(\Delta)$ (i.e., $f_\varphi(X, \theta^*) = y$ and $f_\varphi(X^i(\Delta), \theta^*) = y_{\Delta} = -y$), then $\gamma=2$, which is beyond the scope of our analysis. 

We consider training on 
$(X, y)$ in the following two ways.

\vspace{0.2em}

\emph{(a) Fine-tuning.} First, we look at the model obtained after \emph{fine-tuning} the solution $\theta^*$ defined in \eqref{eq:thetastar} over the new sample $(X, y)$. This means that $\theta^*$ is the initialization of a gradient descent algorithm trained on this single sample. As $\left(\varphi(X)^\top\right)^+ = \varphi(X) / \norm{\varphi(X)}_2^2$, 
the fine-tuned solution is given by
\vspace{.4em}
\begin{equation}\label{eq:finetuned}
    \theta^*_f = \theta^* + \frac{\varphi(X)}{\norm{\varphi(X)}_2^2} \left( y - \varphi(X)^\top \theta^* \right).
\vspace{.3em}
\end{equation}

\emph{(b) Re-training.} Second, we re-train the model from scratch, after adding the pair $(X, y)$ to the training set. The new training set is denoted by $(\mathcal X_{r}, \mathcal Y_{r})$, where $\mathcal X_{r} =(X_1, \ldots, X_N, X)$ contains the training data  and $\mathcal Y_{r}=[y_1, \ldots, y_N, y]$ the binary labels. Thus, denoting by $\Phi_{\varphi, r} := [\varphi(X_1), \ldots, \varphi(X_N), \varphi(X)]^\top \in \R^{(N+1) \times p}$ the new feature matrix, the re-trained solution $\theta^*_r$ takes the form
\vspace{.4em}
\begin{equation}\label{eq:retrained}
    \theta^*_r = \theta_0 + \Phi_{\varphi, r}^+ (\mathcal Y_r - \Phi_{\varphi, r} \theta_0).
\vspace{.3em}
\end{equation}

The quantity of interest is the \emph{test error} on $(X^i(\Delta), y_{\Delta})$:
\vspace{.4em}
\begin{equation}\label{eq:error}
    \textup{Err}_\varphi(X^i(\Delta), \theta) := \left( f_\varphi(X^i(\Delta), \theta) - y_{\Delta} \right)^2,
\vspace{.3em}
\end{equation}
where $\theta\in\{\theta^*_f, \theta^*_r\}$ is the vector of parameters obtained either after fine-tuning or re-training.

\subsection{Random features do not generalize}\label{sec:rafgen}

By exploiting the low word sensitivity of random features, we show that both the fine-tuned and retrained solutions generalize poorly. 

\begin{theorem}\label{thm:erroronDfinetuning}
Let $\varphi_{\textup{RF}}$ be the random features map defined in \eqref{eq:rf}, with $\phi$ Lipschitz and not identically $0$, and let $f_{\textup{RF}}(\cdot, \theta^*_f)=\varphi_{\textup{RF}}(\cdot)^\top \theta^*_f$ be the corresponding model fine-tuned on the sample $(X, y)$, where $X \in \R^{n \times d}$ satisfies 
Assumption \ref{ass:d} and $\theta^*_f$ is given by \eqref{eq:finetuned}. 
Assume $k = \Omega(D)$, $|f_{\textup{RF}}(X, \theta^*)| = \bigO{1}$, 
    and that Assumption \ref{ass:uncertainity} holds with $\gamma\in [0, 2)$. Let $\textup{Err}_{\textup{RF}}(X^i(\Delta), \theta^*_f)$ be the test error of $f_{\textup{RF}}(\cdot, \theta^*_f)$ on $(X^i(\Delta), y_\Delta)$ as defined in \eqref{eq:error}. Then, for any $\Delta$ s.t.\ $\norm{\Delta}_2 \leq \sqrt d$ and any $i \in [n]$, we have
   \vspace{.4em}
   \begin{equation}\label{eq:lbft}
        \textup{Err}_{\textup{RF}}(X^i(\Delta), \theta^*_f) > (2 - \gamma)^2 - \bigO{1/\sqrt n},
    \vspace{.3em}
    \end{equation}
    with probability at least $1 - \exp(-cD)$ over $V$. 
\end{theorem}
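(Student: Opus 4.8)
The plan is to show that the fine-tuned model cannot move its prediction on $X^i(\Delta)$ far from its prediction on $X$, so that if the target label $y_\Delta = -y$ disagrees with what the model predicts on $X$, the test error is essentially $(2-\gamma)^2$. The key quantity to control is $|f_{\textup{RF}}(X^i(\Delta), \theta^*_f) - f_{\textup{RF}}(X, \theta^*_f)|$. First I would expand $f_{\textup{RF}}(\cdot, \theta^*_f)$ using the formula \eqref{eq:finetuned} for $\theta^*_f$: writing $\theta^*_f = \theta^* + \frac{\varphi_{\textup{RF}}(X)}{\norm{\varphi_{\textup{RF}}(X)}_2^2}(y - \varphi_{\textup{RF}}(X)^\top \theta^*)$, we get for any input $Z$
\[
f_{\textup{RF}}(Z, \theta^*_f) = \varphi_{\textup{RF}}(Z)^\top \theta^* + \frac{\varphi_{\textup{RF}}(Z)^\top \varphi_{\textup{RF}}(X)}{\norm{\varphi_{\textup{RF}}(X)}_2^2}\big(y - \varphi_{\textup{RF}}(X)^\top \theta^*\big).
\]
Taking the difference between $Z = X^i(\Delta)$ and $Z = X$ and setting $\psi := \varphi_{\textup{RF}}(X^i(\Delta)) - \varphi_{\textup{RF}}(X)$, this difference splits into two terms: $\psi^\top \theta^*$, which by Assumption \ref{ass:uncertainity} (applied with the observation that $f_\varphi(X,\theta^*) = \varphi(X)^\top\theta^*$) is at most $\gamma$ in absolute value after subtracting the corresponding correction — actually more carefully, it is cleaner to bound $|f_{\textup{RF}}(X^i(\Delta),\theta^*) - f_{\textup{RF}}(X,\theta^*)| \le \gamma$ directly — and a second term proportional to $\frac{\psi^\top \varphi_{\textup{RF}}(X)}{\norm{\varphi_{\textup{RF}}(X)}_2^2}$ times the residual $(y - f_{\textup{RF}}(X,\theta^*))$.

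Next I would bound each piece. The residual $|y - f_{\textup{RF}}(X,\theta^*)| \le 1 + |f_{\textup{RF}}(X,\theta^*)| = \mathcal{O}(1)$ by the hypothesis $|f_{\textup{RF}}(X,\theta^*)| = \mathcal{O}(1)$. For the geometric factor, Cauchy–Schwarz gives $\big|\frac{\psi^\top \varphi_{\textup{RF}}(X)}{\norm{\varphi_{\textup{RF}}(X)}_2^2}\big| \le \frac{\norm{\psi}_2}{\norm{\varphi_{\textup{RF}}(X)}_2}$, and this ratio is exactly (bounded by) $\mathcal S_{\textup{RF}}(X)$ from the definition \eqref{eq:sensitivity}, hence $\mathcal{O}(1/\sqrt n)$ by Theorem \ref{thm:rfS}, on the event of probability at least $1 - \exp(-cD)$. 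Combining, $|f_{\textup{RF}}(X^i(\Delta),\theta^*_f) - f_{\textup{RF}}(X,\theta^*_f)| \le \gamma + \mathcal{O}(1/\sqrt n)$.

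Finally, I would also note that fine-tuning on $(X,y)$ makes $f_{\textup{RF}}(X, \theta^*_f) = y$ exactly (the single-sample least-squares fit is interpolating, since $\varphi_{\textup{RF}}(X) \ne 0$ with high probability — this needs the concentration of $\norm{\varphi_{\textup{RF}}(X)}_2$ already used in Theorem \ref{thm:rfS}). Therefore $f_{\textup{RF}}(X^i(\Delta),\theta^*_f) \ge y - \gamma - \mathcal{O}(1/\sqrt n)$ and, since $y_\Delta = -y$ with $y \in \{-1,+1\}$,
\[
\textup{Err}_{\textup{RF}}(X^i(\Delta),\theta^*_f) = \big(f_{\textup{RF}}(X^i(\Delta),\theta^*_f) - y_\Delta\big)^2 \ge \big(2 - \gamma - \mathcal{O}(1/\sqrt n)\big)^2 > (2-\gamma)^2 - \mathcal{O}(1/\sqrt n),
\]
where the last step expands the square and absorbs lower-order terms (using $\gamma < 2$ so that $2-\gamma$ is bounded away from the $\mathcal{O}(1/\sqrt n)$ correction only in the sense needed to keep the cross term $\mathcal{O}(1/\sqrt n)$). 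The main obstacle is bookkeeping the sign and the constant in the final algebraic step — ensuring the parenthetical reasoning around Assumption \ref{ass:uncertainity} is applied to $\theta^*$ and not $\theta^*_f$, and that the interpolation property $f_{\textup{RF}}(X,\theta^*_f) = y$ is invoked correctly — rather than any deep estimate; all the probabilistic content is inherited from Theorem \ref{thm:rfS}.
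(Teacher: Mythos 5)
Your proposal is correct and follows essentially the same route as the paper's proof: expand $\theta^*_f$ via \eqref{eq:finetuned}, control the overlap term $\bigl|\varphi_{\textup{RF}}(X^i(\Delta))^\top\varphi_{\textup{RF}}(X)/\norm{\varphi_{\textup{RF}}(X)}_2^2 - 1\bigr|$ by Cauchy--Schwarz and $\mathcal S_{\textup{RF}}(X)=\bigO{1/\sqrt n}$ from Theorem \ref{thm:rfS}, invoke Assumption \ref{ass:uncertainity} for the $\theta^*$-predictions, and finish with the reverse triangle inequality and squaring. The only cosmetic difference is that you make the interpolation $f_{\textup{RF}}(X,\theta^*_f)=y$ explicit, whereas the paper folds it into the algebraic decomposition of $f_{\textup{RF}}(X^i(\Delta),\theta^*_f)-y$; the two are equivalent.
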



\paragraph{Proof sketch.} The idea is to use \eqref{eq:finetuned} to obtain that
\begin{equation}\label{eq:featbody}
    f_{\textup{RF}}(X^i(\Delta), \theta^*_f) - f_{\textup{RF}}(X^i(\Delta), \theta^*) =   \frac{\varphi_{\textup{RF}}(X^i(\Delta))^\top \varphi_{\textup{RF}}(X)}{\norm{\varphi_{\textup{RF}}(X)}_2^2} \left( y - f_{\textup{RF}}(X, \theta^*) \right).
\end{equation}
Next, we note that
\vspace{.4em}
\begin{equation}\label{eq:SRF_apply}
    \left| \frac{\varphi_{\textup{RF}}(X^i(\Delta))^\top \varphi_{\textup{RF}}(X)}{ \norm{\varphi_{\textup{RF}}(X)}_2^2} - 1 \right| \leq \mathcal S_{\textup{RF}}(X).
\vspace{.3em}
\end{equation}
Theorem \ref{thm:rfS} gives that the RHS of \eqref{eq:SRF_apply} is small, which combined with \eqref{eq:featbody} implies that $f_{\textup{RF}}(X^i(\Delta), \theta^*_f)$ is close to
\vspace{.3em}
\begin{equation}\label{eq:comq}
    y+f_{\textup{RF}}(X^i(\Delta), \theta^*)- f_{\textup{RF}}(X, \theta^*).
\vspace{.3em}
\end{equation}
By upper bounding $f_{\textup{RF}}(X^i(\Delta), \theta^*)- f_{\textup{RF}}(X, \theta^*)$ via Assumption \ref{ass:uncertainity}, we obtain that \eqref{eq:comq} cannot be far from $y$. This implies that $f_{\textup{RF}}(X^i(\Delta), \theta^*_f)$ cannot be close to the correct label $y_\Delta=-y$.
The complete proof is in Appendix \ref{app:rf}. \qed


\begin{theorem}\label{thm:erroronD}
Let $\varphi_{\textup{RF}}$ be the random features map defined in \eqref{eq:rf}, with $\phi$ Lipschitz and non-linear, and let $f_{\textup{RF}}(\cdot, \theta^*_r)=\varphi_{\textup{RF}}(\cdot)^\top \theta^*_r$ be the corresponding model re-trained on the dataset $(\mathcal X_{r}, \mathcal Y_{r})$ that contains the pair $(X, y)$, thus with 
$\theta^*_r$ defined in \eqref{eq:retrained}.
Assume the training data to be sampled i.i.d.\ from a distribution $\mathcal P_X$ s.t.\  $\E_{X \sim \mathcal P_X}[X] = 0$, Assumption \ref{ass:d} holds, and the Lipschitz concentration property is satisfied. Let $N \log^3 N = o(k)$, $N \log^4 N = o(D^2)$ and $k = \Omega(D)$. Assume that $|f_{\textup{RF}}(X, \theta^*)| = \bigO{1}$ and that Assumption \ref{ass:uncertainity} holds with $\gamma\in [0, 2)$. Let $\textup{Err}_{\textup{RF}}(X^i(\Delta), \theta^*_r)$ be the test error of $f_{\textup{RF}}(\cdot, \theta^*_r)$ on $(X^i(\Delta), y_\Delta)$ defined in \eqref{eq:error}. Then, for any $\Delta$ s.t.\ $\norm{\Delta}_2 \leq \sqrt d$ and any $i \in [n]$, we have
\vspace{.4em}
\begin{equation}
    \textup{Err}_{\textup{RF}}(X^i(\Delta), \theta^*_r) > (2 - \gamma)^2 - \bigO{1/\sqrt n},
\vspace{.3em}
\end{equation}
with probability at least $1 - \exp \left( -c \log^2 N \right)$ over $V,\mathcal X_{r}$. 
\end{theorem}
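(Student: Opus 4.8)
\textbf{Proof proposal for Theorem \ref{thm:erroronD} (re-trained random features).}

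The plan is to reduce the re-training case to the fine-tuning case already handled in Theorem \ref{thm:erroronDfinetuning}. The key structural observation is the following: by the block structure of $\Phi_{\varphi,r}$, adding the single sample $(X,y)$ to the dataset and re-solving the least-squares problem from scratch produces the \emph{same} minimizer as first computing $\theta^*$ on $(\mathcal X,\mathcal Y)$ and then fine-tuning on $(X,y)$, \emph{provided} that $\varphi_{\textup{RF}}(X)$ is (approximately) orthogonal to the row span of $\Phi_\varphi$, i.e.\ provided $\varphi_{\textup{RF}}(X)$ has a substantial component outside $\Span\{\varphi_{\textup{RF}}(X_1),\ldots,\varphi_{\textup{RF}}(X_N)\}$. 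More precisely, I would show that $\theta^*_r$ is close to $\theta^*_f$ in the sense that $f_{\textup{RF}}(X^i(\Delta),\theta^*_r)$ is close to $f_{\textup{RF}}(X^i(\Delta),\theta^*_f)$, and then invoke the conclusion of Theorem \ref{thm:erroronDfinetuning}. To make the orthogonality quantitative, I would use the distributional assumptions: under the Lipschitz concentration property with $\E[X]=0$ and Assumption \ref{ass:d}, standard random-features concentration arguments (as in \cite{bombari2023universal}) give that the feature Gram matrix $\Phi_\varphi \Phi_\varphi^\top \in \R^{N\times N}$ is well-conditioned when $N\log^3 N = o(k)$ and $N\log^4 N = o(D^2)$, and that $\norm{\varphi_{\textup{RF}}(X)}_2^2 = \Theta(k)$ while the projection $\|\Phi_\varphi^\top (\Phi_\varphi\Phi_\varphi^\top)^{-1}\Phi_\varphi \,\varphi_{\textup{RF}}(X)\|_2^2 = \mathcal O(N k / D^2) \cdot \text{(polylog)}$, which is $o(k)$ under the stated scalings. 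Hence the residual component of $\varphi_{\textup{RF}}(X)$ orthogonal to the training feature span has norm $(1-o(1))\norm{\varphi_{\textup{RF}}(X)}_2$.

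The steps, in order, would be: (1) Write $\varphi_{\textup{RF}}(X) = u_\parallel + u_\perp$ where $u_\parallel$ is the projection onto the row span of $\Phi_\varphi$ and $u_\perp$ is the orthogonal residual, and show $\norm{u_\parallel}_2 = o(\norm{u_\perp}_2)$ using the concentration bounds above; here the non-linearity of $\phi$ (as opposed to merely Lipschitz-and-nonzero in Theorem \ref{thm:erroronDfinetuning}) is what guarantees the feature Gram matrix has the needed lower eigenvalue bound and that $\varphi_{\textup{RF}}(X)$ genuinely escapes the span — this is the reason the hypothesis was strengthened. (2) Using this decomposition and the explicit form \eqref{eq:retrained} with the block pseudo-inverse identity, argue that $\theta^*_r$ agrees with $\theta^*$ on directions in the training span and picks up, along $u_\perp/\norm{u_\perp}_2$, exactly the correction needed to interpolate the residual $y - \varphi_{\textup{RF}}(X)^\top\theta^*$ — i.e.\ $\theta^*_r = \theta^* + \frac{u_\perp}{\norm{u_\perp}_2^2}(y - \varphi_{\textup{RF}}(X)^\top\theta^*) + (\text{small})$, which is a perturbation of the fine-tuning formula \eqref{eq:finetuned} where $\varphi_{\textup{RF}}(X)$ is replaced by $u_\perp$. (3) Evaluate $f_{\textup{RF}}(X^i(\Delta),\theta^*_r)$: the new term is $\frac{\varphi_{\textup{RF}}(X^i(\Delta))^\top u_\perp}{\norm{u_\perp}_2^2}(y-f_{\textup{RF}}(X,\theta^*))$, and since $\varphi_{\textup{RF}}(X^i(\Delta))$ is itself within $\mathcal S_{\textup{RF}}(X)\cdot\norm{\varphi_{\textup{RF}}(X)}_2 = \mathcal O(\sqrt k / \sqrt n)$ of $\varphi_{\textup{RF}}(X)$ by Theorem \ref{thm:rfS}, and $\varphi_{\textup{RF}}(X)^\top u_\perp = \norm{u_\perp}_2^2$, this term is within $\mathcal O(1/\sqrt n)$ of $y - f_{\textup{RF}}(X,\theta^*)$ — the same quantity that appears in the fine-tuning analysis. (4) Conclude exactly as in Theorem \ref{thm:erroronDfinetuning}: $f_{\textup{RF}}(X^i(\Delta),\theta^*_r)$ is within $\mathcal O(1/\sqrt n)$ of $y + f_{\textup{RF}}(X^i(\Delta),\theta^*) - f_{\textup{RF}}(X,\theta^*)$, which by Assumption \ref{ass:uncertainity} lies within $\gamma + \mathcal O(1/\sqrt n)$ of $y$, so $(f_{\textup{RF}}(X^i(\Delta),\theta^*_r) - y_\Delta)^2 > (2-\gamma)^2 - \mathcal O(1/\sqrt n)$ since $y_\Delta = -y$ and $|y|=1$.

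The main obstacle I expect is Step (1)–(2): controlling the pseudo-inverse of the full $(N+1)\times p$ matrix $\Phi_{\varphi,r}$ and establishing that $\varphi_{\textup{RF}}(X)$ escapes the span of the $N$ training features by a $(1-o(1))$ fraction of its norm. This requires the quantitative random-features concentration machinery — bounding $\opnorm{\Phi_\varphi \Phi_\varphi^\top - \E[\ldots]}$, lower-bounding $\lambda_{\min}$ of the expected kernel (which uses that $\phi$ is non-linear, via a Hermite-expansion / kernel-positivity argument), and bounding the cross terms $\varphi_{\textup{RF}}(X_j)^\top\varphi_{\textup{RF}}(X)$ — all of which need the scalings $N\log^3 N = o(k)$, $N\log^4 N = o(D^2)$, $k=\Omega(D)$ and the $\mathcal P_X$ assumptions; the logarithmic factors and the $1-\exp(-c\log^2 N)$ failure probability come from union-bounding these concentration events over the $N+1$ samples and from the Lipschitz concentration tail. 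Once the orthogonality is in hand, the rest is a perturbation of the argument already written out for fine-tuning, so no genuinely new idea is needed beyond this concentration step. The full details are deferred to Appendix \ref{app:rf}.
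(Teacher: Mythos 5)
Your proposal follows essentially the same route as the paper: the exact block-pseudo-inverse identity you invoke in Step (2) is precisely Lemma 4.1 of \cite{bombari2023stability} (the ``feature alignment'' identity, which says the re-trained solution behaves like the fine-tuned one with $\varphi_{\textup{RF}}(X)$ replaced by its component $\Pp^\perp\varphi_{\textup{RF}}(X)$ orthogonal to the training span), and the quantitative input is the same: a lower bound $\evmin{K_{\textup{RF},r}} = \Omega(k)$, which requires the distributional assumptions, the stated scalings, and the non-linearity of $\phi$, and which yields $\norm{\Pp^\perp\varphi_{\textup{RF}}(X)}_2 = \Omega(\sqrt k)$ so that the perturbation $\norm{\varphi_{\textup{RF}}(X^i(\Delta))-\varphi_{\textup{RF}}(X)}_2 = \mathcal O(\sqrt{k/n})$ translates into a $\mathcal O(1/\sqrt n)$ deviation of the feature alignment from $1$. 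The conclusion via Assumption \ref{ass:uncertainity} is then identical to the fine-tuning case.

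One intermediate claim in your Step (1) is incorrect as stated, though it is not needed: you assert $\norm{u_\parallel}_2 = o(\norm{u_\perp}_2)$, i.e.\ that $\varphi_{\textup{RF}}(X)$ escapes the training span by a $(1-o(1))$ fraction of its norm. For a generic non-linear $\phi$ with $\E_{g\sim\mathcal N(0,1)}[\phi(g)]\neq 0$ (e.g.\ ReLU), every feature vector has a mean component $\E[\varphi_{\textup{RF}}(X)]$ of norm $\Theta(\sqrt k)$ which is (approximately) contained in $\Span\{\varphi_{\textup{RF}}(X_1),\ldots,\varphi_{\textup{RF}}(X_N)\}$, so $\norm{u_\parallel}_2 = \Theta(\sqrt k) = \Theta(\norm{u_\perp}_2)$ and your bound $\norm{\Pp\,\varphi_{\textup{RF}}(X)}_2^2 = o(k)$ fails. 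Fortunately your Steps (2)--(4) only use the exact identity (which holds regardless of any approximate orthogonality) together with the \emph{lower} bound $\norm{u_\perp}_2 = \Omega(\sqrt k)$, which is exactly what $\evmin{K_{\textup{RF},r}} = \Omega(k)$ provides via $\norm{\Pp^\perp\varphi_{\textup{RF}}(X)}_2^2 \geq \evmin{K_{\textup{RF},r}}$. So the argument is sound once the overly strong claim is replaced by this weaker (and true) statement.
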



\paragraph{Proof sketch.} The idea is to leverage the stability analysis in 
\cite{bombari2023stability}, which gives 
\vspace{.4em}
\begin{equation}\label{eq:featalbody}
    f_{\textup{RF}}(X^i(\Delta), \theta^*_{r}) - f_{\textup{RF}}(X^i(\Delta), \theta^*) 
    = \mathcal F_{\textup{RF}}(X, X^i(\Delta)) \left (f_{\textup{RF}}(X, \theta^*_{r}) - f_{\textup{RF}}(X, \theta^*) \right),
\vspace{.3em}
\end{equation}
where
\vspace{.4em}
\begin{equation}
\mathcal F_{{\textup{RF}}}(X, X^i(\Delta)) := \frac{\varphi_{\textup{RF}}(X^i(\Delta))^\top \Pp^\perp \varphi_{\textup{RF}}(X)}{\norm{\Pp^\perp \varphi_{\textup{RF}}(X)}_2^2}
\vspace{.3em}
\end{equation}
is the \emph{feature alignment} between $X$ and $X^i(\Delta)$ induced by $\varphi_{\textup{RF}}$ and $\Pp$ the projector over $\Span \{\varphi_{\textup{RF}}(X_1), \ldots, \varphi_{\textup{RF}}(X_N)
\}$. After some manipulations, we have 
\vspace{.4em}
\begin{equation}\label{eq:int1}
    \left| \mathcal F_{\textup{RF}}(X, X^i(\Delta)) - 1 \right| \leq \mathcal S_{\textup{RF}}(X) \frac{\norm{\varphi_{\textup{RF}}(X)}_2}{\sqrt{\evmin{K_{{\textup{RF}},r}}}},
\vspace{.3em}
\end{equation}
where $K_{\textup{RF}, r} := \Phi_{\textup{RF}, r} \Phi^\top_{\textup{RF}, r}$ is the kernel of the model. A lower bound on its smallest eigenvalue $\evmin{K_{{\textup{RF}},r}}$ follows from the fact that the kernel is well-conditioned (see Lemma \ref{lemma:evminRF}), 
which crucially relies on the assumptions on the data (i.i.d.\ and Lipschitz concentrated) and the scalings $N \log^3 N = o(k)$, $N \log^4 N = o(D^2)$. As the word sensitivity $\mathcal S_{\textup{RF}}(X)$ is upper bounded by Theorem \ref{thm:rfS}, from \eqref{eq:int1} we conclude that $\mathcal F_{\textup{RF}}(X, X^i(\Delta))$ is close to $1$.

Since $K_{\textup{RF}, r}$ is invertible, the re-trained model $f_{\textup{RF}}(\cdot, \theta^*_{r})$ interpolates the dataset $(\mathcal X_{r}, \mathcal Y_{r})$, giving that $f(X, \theta^*_{r}) = y$. Thus, as $\mathcal F_{\textup{RF}}(X, X^i(\Delta))\approx 1$, $f_{\textup{RF}}(X^i(\Delta), \theta^*_{r})$ is close to \eqref{eq:comq}, and we conclude from the same argument used for Theorem \ref{thm:erroronDfinetuning}. The complete proof is in Appendix \ref{app:rf}. \qed

In a nutshell, by exploiting the low word sensitivity of random features, Theorems \ref{thm:erroronDfinetuning}-\ref{thm:erroronD} show that, after either fine-tuning or re-training, the model does not learn to ``separate'' the predictions on 
the samples $X$ and $X^i(\Delta)$. As a consequence, the test error is lower bounded by $(2-\gamma)^2$. In fact, $\gamma$ is the distance between the predictions on $X$ and $X^i(\Delta)$ before fine-tuning/re-training (see \eqref{eq:confidence}), and the ground-truth labels have distance $2$ ($y_\Delta, y\in \{-1, 1\}$ and $y_\Delta=-y$).





While Theorem \ref{thm:erroronDfinetuning} does not require distributional assumptions on the data, Theorem \ref{thm:erroronD} considers i.i.d.\ training data, satisfying Lipschitz concentration. This property corresponds to having well-behaved tails, and it is common in the related theoretical literature \cite{bubeck2021a, tightbounds,bombari2022memorization}, see Appendix \ref{app:notation} for the formal definition and a discussion.

We remark that Assumption \ref{ass:uncertainity} requires the model $f_{\textup{RF}}(\cdot, \theta^*)$ to give a similar output when evaluated on the two new samples $X$ and $X^i(\Delta)$. Thus, we are asking if the model generalizes on $X^i(\Delta)$ \emph{only} from the additional training on $X$. Now, one could design an adversarial $\Delta$ s.t. $f(X^i(\Delta), \theta^*_{r})$ and $f(X, \theta^*_{r})$ are different from each other (so that $f_{\textup{RF}}(X^i(\Delta), \theta^*_{r}) = y_\Delta$ while $f_{\textup{RF}}(X, \theta^*_{r}) = y$), by exploiting the adversarial vulnerability of random features \cite{dohmatob2022non, dohmatob, bombari2023universal}. 
However, if we restrict the possible $\Delta$'s to those that satisfy Assumption \ref{ass:uncertainity}, Theorems \ref{thm:erroronDfinetuning} and \ref{thm:erroronD} prove that such adversarial patch cannot be found. We finally note that, when the context length $n$ is comparable or larger than the number of training samples $N$, the model becomes adversarially robust to any token modification and Assumption \ref{ass:uncertainity} automatically holds, see Appendix \ref{app:robustness} for details.

\vspace{0.2em}
\subsection{Random attention features can generalize}\label{sec:RAFgen}

Next, the behavior of random features is contrasted with that of random attention features. Let us consider the RAF model $f_{\textup{RAF}}(\cdot, \theta) := \Flat (\varphi_{\textup{RAF}}(\cdot) )^\top \theta$, where $\varphi_{\textup{RAF}}(\cdot)$ is defined in \eqref{eq:raf}. Theorem \ref{thm:RAF} proves that the word sensitivity of $\varphi_{\textup{RAF}}(\cdot)$ is large. 
This suggests that the RAF model is capable of extrapolating the information contained in $(X, y)$ to correctly classify the perturbed sample $X^i(\Delta)$. While proving a rigorous statement on a fine-tuned/re-trained RAF model remains challenging, we provide experimental evidence of this generalization capability. 

\vspace{0.2em}

Figure \ref{fig:gen} (first row) shows that, after fine-tuning on $(X, y)$, $f_{\textup{RAF}}(X^i(\Delta), \theta^*_f)$ can be close to the perturbed label $y_\Delta=-y$, even if the model before fine-tuning was unable to distinguish between $X$ and $X^i(\Delta)$. Specifically, the two central sub-plots consider the RAF model for two values of the context length $n\in \{40, 120\}$: here, the loss on the perturbed sample can be close to $0$, even when the parameter $\gamma$ in \eqref{eq:confidence} is close to $0$, i.e., $X$ and $X^i(\Delta)$ were indistinguishable before fine-tuning; in general, the test error $\textup{Err}_{\textup{RAF}}(X^i(\Delta), \theta^*_f)$ is often smaller than the lower bound of $(2-\gamma)^2$ (dashed black line), which holds for random features.
\vspace{0.2em}

The left sub-plot considers the RF model for $n=40$: here, the loss on the perturbed sample is close to $0$ only if the model before fine-tuning was already able to perfectly distinguish between $X$ and $X^i(\Delta)$, i.e., $\gamma$ is not far from $2$; 
in general, the test error $\textup{Err}_{\textup{RF}}(X^i(\Delta), \theta^*_f)$ always respects the lower bound of $(2-\gamma)^2$ proved in Theorem \ref{thm:erroronDfinetuning}. Finally, the right sub-plot considers the ReLU-RAF model (which replaces the $\softmax$ with a ReLU activation, as described at the end of Section \ref{sec:raf}) for $n=120$: here, even if the lower bound of Theorem  \ref{thm:erroronDfinetuning} is often violated, the model still cannot reach small error unless $\gamma$ is large, i.e., $X$ and $X^i(\Delta)$ could be distinguished already before fine-tuning. This confirms the impact of the $\softmax$ on the capability of attention layers to understand the context. Analogous results hold when models are re-trained (instead of being fine-tuned), as reported in the second row of the same figure.

\vspace{0.2em}

In a nutshell, our results show that, when a RAF model is not able to distinguish two points with opposite labels (that only differ in one word), fine-tuning or retraining on one of these points allows the loss on also the other point to decrease. In contrast, this is not the case for the RF model, where the loss on the second point can be lower bounded according to Theorems \ref{thm:erroronDfinetuning} and \ref{thm:erroronD}. This is shown in Figure \ref{fig:gen}, where most of the points for the RAF model are below the dashed line representing the lower bound for the RF model.

\begin{figure*}[!t]
\centering
\includegraphics[width=\textwidth]{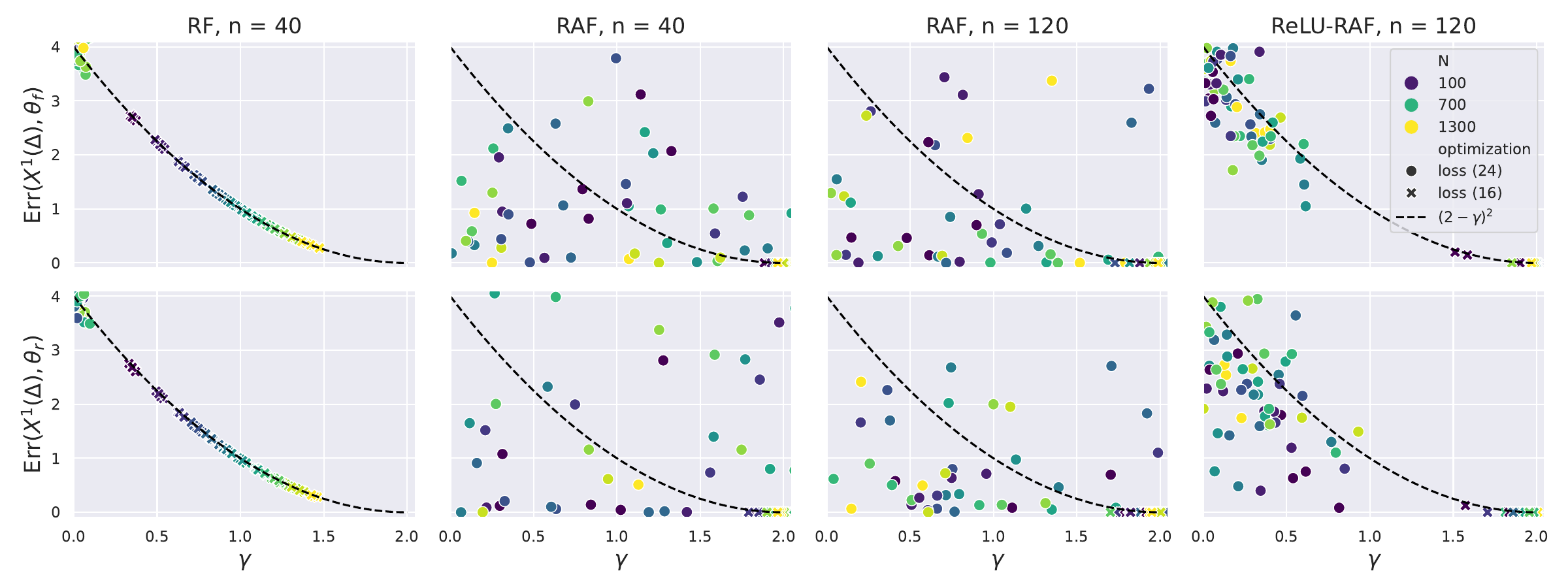}
\vspace{-1.5em}
\caption{Test error (as defined in \eqref{eq:error} taking $i=1$) for the RF (left subplot), RAF (two central sub-plots) and ReLU-RAF (right subplot) maps, as a function of the smallest $\gamma$ s.t.\ Assumption \ref{ass:uncertainity} holds. The first (resp.\ second) row considers the fine-tuned solution $\theta^*_f$ (resp.\ re-trained solution $\theta^*_r$). Every sub-plot has a fixed embedding dimension $d = 768$, and context length $n\in \{40, 120\}$, taking the first $n$ token embeddings for each sample. Different colors correspond to a different number of training samples $N \in \{100, 700, 1300\}$. Every point in the scatter-plots is an independent simulation where $(X, y)$ and $(\mathcal X, \mathcal Y)$ are the BERT-Base embeddings of a random subset of the imdb dataset (after pre-processing to fulfill Assumption \ref{ass:d}).
Circular markers correspond to obtaining $\Delta$ via gradient descent optimization of the losses in \eqref{eq:lossf}; cross markers correspond to minimizing directly the test error in \eqref{eq:error}.}
\label{fig:gen}
\end{figure*}

\subsection{Experimental details}
The experiments of Figure \ref{fig:gen} are performed on multiple independent trials, for different choices of the training data. 
We report in cross markers the results obtained by choosing $\Delta$ after optimizing the test error in \eqref{eq:error} via gradient descent. While this approach directly minimizes the metric of interest, it results in low test error only when $\gamma$ is rather large, regardless of the model taken into account (RF, RAF, or ReLU-RAF). In contrast, optimizing a different loss controls the value of $\gamma$, while still achieving small error for RAF (and, to a smaller extent, ReLU-RAF). We report in circular markers the results obtained by minimizing the following two losses (respectively, for fine-tuning and re-training):
\begin{equation}\label{eq:lossf}   
    \ell_{\theta^*_f}(\Delta) := \left( \frac{\varphi_{\textup{RAF}}(X^i(\Delta))^\top \varphi_{\textup{RAF}}(X)}{\norm{\varphi_{\textup{RAF}}(X)}_2^2} + 1 \right)^2, \qquad
    \ell_{\theta^*_r}(\Delta) := \left( \mathcal F_{{\textup{RAF}}}(X, X^i(\Delta)) + 1 \right)^2.
\end{equation}
This choice is suggested by \eqref{eq:featbody} and \eqref{eq:featalbody} which, after assuming for simplicity that $f_{\textup{RAF}}(X^i(\Delta), \theta^*) = f_{\textup{RAF}}(X, \theta^*) = 0$, can be re-written as 
\begin{equation}\label{eq:heur}
    f_{\textup{RAF}}(X^i(\Delta), \theta^*_f)  = \frac{\varphi_{\textup{RAF}}(X^i(\Delta))^\top \varphi_{\textup{RAF}}(X)}{\norm{\varphi_{\textup{RAF}}(X)}_2^2} \, y, \qquad
    f_{\textup{RAF}}(X^i(\Delta), \theta^*_r) = \mathcal F_{{\textup{RAF}}}(X, X^i(\Delta)) \, y.
\end{equation}
Achieving small error means that the LHS of \eqref{eq:heur} is close to $-y$, which corresponds to making the losses in \eqref{eq:lossf} small. Further numerical results and comparisons between different optimization algorithms for finding $\Delta$ are in Appendix \ref{app:exp}.

\section{Conclusions}

This work provides a formal characterization of the fundamental difference between fully connected and attention layers. To do so, we consider the prototypical setting of random features and study the \emph{word sensitivity}, which captures how the output of a map changes after perturbing a single row/word of the input. On the one hand, the sensitivity of standard random features decreases with the context length and, in order to obtain to a significant change in the output of the map, a constant fraction of the words needs to be perturbed. On the other hand, the sensitivity of random attention features is large, regardless of the context length, thus indicating the suitability of attention layers for NLP tasks. These bounds on the word sensitivity translate into formal negative generalization results for random features, which are contrasted by positive empirical evidence of generalization for the attention layer. 

Our analysis allows the 
perturbations to be any (bounded) vector in the embedding space. Taking the tokenization process (and, hence, the discrete nature of the textual samples) into account offers an exciting avenue for future work.

\section*{Acknowledgements}

The authors were partially supported by the 2019 Lopez-Loreta prize, and they would like to thank Mohammad Hossein Amani, Lorenzo Beretta, and Cl\'{e}ment Rebuffel for helpful discussions.

{
\small

\bibliographystyle{plain}
\bibliography{bibliography.bib}

}

\newpage
\appendix
\onecolumn

\section{Additional notation}\label{app:notation}

Given a sub-Gaussian random variable, let $\|X\|_{\psi_2} = \inf \{ t>0 \,\,: \,\,\mathbb E[\exp(X^2/t^2)] \le 2 \}$, see Section 2.5 of \cite{vershynin2018high}.
Given a sub-exponential random variable $X$, let $\|X\|_{\psi_1} = \inf \{ t>0 \,\,: \,\,\mathbb E[\exp(|X|/t)] \le 2 \}$, see Section 2.7 of \cite{vershynin2018high}).
We recall the property that, if $X$ and $Y$ are scalar random variables, then $\subEnorm{XY} \leq \subGnorm{X} \subGnorm{Y}$, see Lemma 2.7.7 of \cite{vershynin2018high}.

We use the term \emph{standard Gaussian vector} in $\R^d$ to indicate a vector $\rho$ such that $\rho_i \distas{}_{\textup{i.i.d.}} \mathcal N(0, 1)$. We recall that the maximum of $n$ Gaussian (not necessarily independent) random variables is smaller than $\log n$ with probability at least $1 - \exp(c \log^2 n)$, see, e.g., Section 1.4 of \cite{rigollet2023highdimensional}.

Given a vector $u \in \R^d$, we denote by $u_i$ its $i$-th component. Given a matrix $A \in \R^{n \times d}$, we denote by $[A]_{i:}$ its $i$-th row, by $[A]_{:j}$ its $j$-th column, and by $[A]_{ij}$ its entry at position $(i, j)$.

We say that a random variable or vector respects the Lipschitz concentration property if there exists an absolute constant $c > 0$ such that, for every Lipschitz continuous function $\varphi: \RR^d \to \RR$, we have $\E |\varphi(X)| < + \infty$ and for all $t>0$,
\begin{equation}\label{eq:deflipconc}
    \PP\left(\abs{\varphi(x)- \E_X [\varphi(x)]}>t\right) \leq 2e^{-ct^2 / \norm{\varphi}_{\Lip}^2}.
\end{equation}

The family of Lipschitz concentrated distributions covers a number of important cases, e.g., standard Gaussian \cite{vershynin2018high}, uniform on the sphere and on the unit (binary or continuous) hypercube \cite{vershynin2018high}, or data obtained via a Generative Adversarial Network (GAN) \cite{seddik2020random}.

\section{Proofs for random features}\label{app:rf}

In this section, we provide the proofs for our results on the random features model. Thus, we will drop the sub-script ``RF'' in all the quantities of this section, for the sake of a cleaner notation. We consider a single textual data-point $X = [x_1, x_2, ..., x_n]^\top \in \R^{n \times d}$ that satisfies Assumption \ref{ass:d}. We consider the random features model defined in \eqref{eq:rf}, i.e., 
\begin{equation}
    \varphi(X) = \phi (V \Flat(X)),
\end{equation}
where $V_{i,j} \distas{}_{\rm i.i.d.}\mathcal{N}(0, 1 / D)$, $D = nd$ and $\phi$ is the activation function, applied component-wise to the pre-activations $V \Flat(X)$.

Further in the section, we will investigate the generalization capabilities of the RF model $f(\cdot, \theta) := \varphi(\cdot)^\top \theta$ on token modification. We use the notation $(\mathcal X, \mathcal Y)$ to indicate the original labelled training dataset, with $\mathcal X =(X_1, \ldots, X_N)$, $X_i \in \R^{n \times d}$ and $\mathcal Y = [y_1, \ldots, y_N]^\top \in \{-1, +1\}^N$. We will use the short-hand $\Phi := [\varphi(X_1), ... \varphi(X_N)]^\top \in \R^{N \times p}$ for the feature matrix and $K:= \Phi \Phi^\top$ for the kernel. According to \eqref{eq:thetastar}, minimizing the quadratic loss over this dataset returns the parameters
\begin{equation}
    \theta^* = \theta_0 + \Phi^+ (\mathcal Y - \Phi \theta_0).
\end{equation}
We consider the test error on the modified sample $X^i(\Delta)$ given by (see also \eqref{eq:error}) 
\begin{equation}
    \textup{Err}(X^i(\Delta), \theta) := \left( f(X^i(\Delta), \theta) - y_{\Delta} \right)^2.
\end{equation}
We will investigate this quantity for both the fine-tuned and the re-trained model. In particular, the new solution obtained after fine-tuning over the sample $(X, y)$ gives
\begin{equation}
    \theta^*_f = \theta^* + \frac{\varphi(X)}{\norm{\varphi(X)}_2^2} \left( y - \varphi(X)^\top \theta^* \right),
\end{equation}
while retraining the model with initialization $\theta_0$ on the new dataset $(\mathcal X_{r}, \mathcal Y_{r})$ 
returns
\begin{equation}
    \theta^*_r = \theta_0 + \Phi_{r}^+ (\mathcal Y_r - \Phi_{r} \theta_0),
\end{equation}
where we denote by $\Phi_{r} := [\varphi(X_1), ... \varphi(X_N), \varphi(X)]^\top \in \R^{(N+1) \times p}$ the new feature matrix. The corresponding kernel is denoted by $K_r = \Phi_r \Phi_r^\top$.

The outline of this section is the following:
\begin{enumerate}
    \item We use Lemma \ref{lemma:0rf} to upper bound the numerator of the word sensitivity $\mathcal S(X)$ defined in \eqref{eq:sensitivity}, which readily allows to prove the desired result Theorem \ref{thm:rfS}. 
    \item We prove Theorem \ref{thm:erroronDfinetuning}, where we lower bound $\textup{Err}(X^i(\Delta), \theta^*_f)$ for the fine-tuned solution $\theta^*_f$ as a function of $\gamma$.
    \item We prove Theorem \ref{thm:erroronD}, where we lower bound $\textup{Err}(X^i(\Delta), \theta^*_r)$ for the retrained solution $\theta^*_r$ as a function of $\gamma$. This result requires additional assumptions and analysis:
    \begin{itemize}
        \item We report in our notation Lemma 4.1 from \cite{bombari2023stability}, which defines the \emph{feature alignment} $\mathcal F(X, X^i(\Delta))$ between the two samples $X$ and $X^i(\Delta)$.
        \item In Lemma \ref{lemma:evminRF}, we exploit the additional assumptions to prove a lower bound on the smallest eigenvalue of the kernel $\evmin{K_{r}} = \Omega(k)$. 
        \item In Lemma \ref{lemma:Falis1}, we show that $\left| \mathcal F(X, X^i(\Delta)) - 1 \right| = \bigO{1/\sqrt n}$ with high probability. This step is useful as here this term assumes the role previously taken by $\mathcal S(X)$.
    \end{itemize}
\end{enumerate}

\begin{lemma}\label{lemma:0rf}
Let $\varphi(X)$ be the random feature map defined in \eqref{eq:rf}, with $\phi$ Lipschitz and $k = \Omega(D)$. Let $\Delta \in \R^d$ be such that $\norm{\Delta}_2 \leq \sqrt d$. Then, for every $i \in [n]$, we have
\begin{equation}
    \norm{\varphi(X^i(\Delta)) - \varphi(X)}_2 = \bigO{\sqrt{\frac{k}{n}}},
\end{equation}
with probability at least $1 - \exp(-c D)$ over $V$.
\end{lemma}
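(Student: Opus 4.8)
The plan is to bound the numerator $\norm{\varphi(X^i(\Delta)) - \varphi(X)}_2$ purely via the Lipschitz continuity of $\phi$ together with a tail bound on the operator norm of the relevant sub-matrix of $V$. First I would observe that $\Flat(X^i(\Delta)) - \Flat(X)$ is a vector supported on the $d$ coordinates corresponding to the $i$-th row of $X$, with value $\Delta$ there and $0$ elsewhere; in particular its Euclidean norm is $\norm{\Delta}_2 \le \sqrt{d}$. Writing $V_{(i)} \in \R^{k \times d}$ for the sub-matrix of $V$ picking out exactly those $d$ columns, the pre-activation difference is $V\Flat(X^i(\Delta)) - V\Flat(X) = V_{(i)}\Delta$. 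Using that $\phi$ is $L_\phi$-Lipschitz and applied component-wise, $\norm{\varphi(X^i(\Delta)) - \varphi(X)}_2 \le L_\phi \norm{V_{(i)}\Delta}_2 \le L_\phi \opnorm{V_{(i)}} \norm{\Delta}_2 \le L_\phi \sqrt{d}\,\opnorm{V_{(i)}}$.

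Next I would control $\opnorm{V_{(i)}}$. The matrix $V_{(i)}$ is $k \times d$ with i.i.d.\ $\mathcal N(0, 1/D)$ entries, i.e.\ $\sqrt{D}\,V_{(i)}$ is a standard Gaussian matrix. By the standard non-asymptotic bound on the largest singular value of a Gaussian matrix (e.g.\ Theorem 4.4.5 in Vershynin), with probability at least $1 - 2e^{-t^2/2}$ we have $\opnorm{\sqrt{D}\,V_{(i)}} \le \sqrt{k} + \sqrt{d} + t$. Taking $t = c_0\sqrt{k}$ for a small constant (legitimate since $k = \Omega(D) \ge \Omega(d)$, so $k \ge d$ for $n$ large) gives $\opnorm{V_{(i)}} = \bigO{\sqrt{k}/\sqrt{D}} = \bigO{\sqrt{k/(nd)}}$ with probability at least $1 - \exp(-cD)$ — the exponent being $\exp(-c k) \le \exp(-cD)$ since $k = \Omega(D)$. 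Plugging this back, $\norm{\varphi(X^i(\Delta)) - \varphi(X)}_2 \le L_\phi\sqrt{d}\cdot\bigO{\sqrt{k/(nd)}} = \bigO{\sqrt{k/n}}$, which is exactly the claimed bound.

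The only mild subtlety is that the supremum in $\mathcal S(X)$ (and the statement of the lemma) quantifies over all $i \in [n]$ and all $\norm{\Delta}_2 \le \sqrt d$ — but since the operator-norm bound for each fixed $i$ already holds uniformly over $\Delta$ in the ball, I only need to union bound over the $n$ choices of $i$, which costs a factor $n$ in front of $\exp(-cD)$ and is absorbed since $n = \poly(D)$ and $D \to \infty$; adjusting the constant $c$ keeps the probability at least $1 - \exp(-cD)$. This is routine rather than an obstacle.

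Honestly, there is no hard part here: the lemma is a one-line Lipschitz estimate followed by a textbook Gaussian operator-norm bound, and the probability bookkeeping ($\exp(-ck)$ vs.\ $\exp(-cD)$, and the harmless union over $i$) is the only thing to be careful about. If I had to name the "main obstacle" it would merely be making sure the scaling $V_{i,j}\sim\mathcal N(0,1/D)$ is tracked correctly so that the final bound comes out as $\sqrt{k/n}$ rather than, say, $\sqrt{k/(nd)}$ or $\sqrt{k}$; the $\sqrt d$ from $\norm{\Delta}_2$ and the $1/\sqrt{D} = 1/\sqrt{nd}$ from the variance combine to leave precisely $1/\sqrt n$ alongside the $\sqrt k$ from the Gaussian matrix dimensions.
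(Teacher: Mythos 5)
Your proposal is correct and follows essentially the same route as the paper: bound the feature difference by the Lipschitz constant of $\phi$ times an operator-norm bound on a Gaussian matrix (Theorem 4.4.5 of Vershynin), then track the $1/\sqrt{D}$ variance scaling against $\norm{\Delta}_2\le\sqrt d$ to get $\sqrt{k/n}$. The only cosmetic difference is that you restrict to the $k\times d$ sub-matrix $V_{(i)}$ and union-bound over $i$, whereas the paper simply uses $\opnorm{V}$ for the full matrix (which controls all $i$ and $\Delta$ at once and gives the same bound under $k=\Omega(D)$).
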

\begin{proof}
Let's condition on the event
\begin{equation}
    \opnorm{V} = \bigO{\sqrt{\frac{D + k}{D}}},
\end{equation}
which happens with probability at least $1 - \exp(-c_1 D)$ over $V$, by Theorem 4.4.5 of \cite{vershynin2018high}. Thus, for every $i$, we have 
\begin{equation}\label{eq:onlythingforrf}
\begin{aligned}
    \norm{\varphi(X^i(\Delta)) - \varphi(X)}_2 &= \norm{\phi(V \Flat{(X^i(\Delta))}) - \phi (V \Flat{(X)})}_2 \\
    &\leq M \norm{V (\Flat{(X^i(\Delta))} - \Flat{(X)})}_2 \\
    &\leq M \opnorm{V} \norm{\Delta}_2 \\
    &\leq C_1 \sqrt{\frac{D + k}{D}} \sqrt d \\
    & = C_1 \sqrt{\frac{D + k}{n}} \\
    & \leq C_2 \sqrt{\frac{k}{n}},
\end{aligned}
\end{equation}
where the first inequality comes from the Lipschitz continuity of $\phi$, and the last step is a consequence of $k = \Omega(D)$. 
\end{proof}

\paragraph{Theorem \ref{thm:rfS}} Let $\varphi(X)$ be the random feature map defined in \eqref{eq:rf}, where $\phi$ is Lipschitz and not identically 0. Let $X \in \R^{n \times d}$ be a generic input sample s.t.\ Assumption \ref{ass:d} holds, and assume $k = \Omega(D)$. Let $\mathcal S(X)$ denote the the word sensitivity defined in \eqref{eq:sensitivity}. Then, we have
\begin{equation}
  \mathcal S(X) = \bigO{1/\sqrt n} = o(1),
\end{equation}
with probability at least $1 - \exp(-c D)$ over $V$.
\begin{proof}
As $\phi$ is Lipschitz and non-0, we can apply the result in Lemma C.3 of \cite{bombari2023stability}, getting
\begin{equation}
    \norm{\varphi(X)}_2 = \Theta(\sqrt k),
\end{equation}
with probability at least $1 - \exp(-c_1 D)$ over $V$. Thus, the thesis readily follows from Lemma \ref{lemma:0rf}.
\end{proof}

\begin{proof}[Proof of Remark \ref{rem:constant}.]
The only difference with respect to the argument for Theorem \ref{thm:rfS} is in \eqref{eq:onlythingforrf}. Now, $\Delta$ is replaced by a new set of $m$ perturbations $\Delta_1, \dots ,\Delta_m$. Thus, the modified context takes the form
\begin{equation}
    X\left( \Delta_1, \dots ,\Delta_m \right) = X + \sum_{j=1}^m e_{i_j} \Delta_j^\top,
\end{equation}
where $\{e_{i_j}\}_{j\in [m]}$ represent different elements of the canonical basis. Thus, we have
\begin{equation}
    \norm{\Flat\left( X\left( \Delta_1, \dots ,\Delta_m \right)\right) - \Flat\left(X\right)}_2 =   \norm{\Flat\left( \sum_{j=1}^m e_{i_j} \Delta_j^\top \right)}_2 = \norm{\sum_{j=1}^m e_{i_j} \Delta_j^\top}_F.
\end{equation}
Since the $e_{i_j}$'s are all distinct (as we are modifying $m$ different words), we obtain
\begin{equation}
    \norm{\sum_{j=1}^m e_{i_j} \Delta_j^\top}_F^2 = \sum_{j=1}^m \norm{\Delta_j}_2^2 \leq md,
\end{equation}
where in the last step we use that $\norm{\Delta_j}_2 \leq \sqrt d$ for all $j \in [m]$.

This allows to replace the $\sqrt d$ in the fourth line of \eqref{eq:onlythingforrf} with $\sqrt{md}$, increasing the final bound in the statement of Lemma \ref{lemma:0rf} by a factor $\sqrt m$. Thus, the upper bound on the sensitivity is given by $\sqrt{m / n}=o(1)$, which concludes the proof.
\end{proof}

\paragraph{Theorem \ref{thm:erroronDfinetuning}} Let $f(\cdot, \theta^*_f)$ be the RF model fine-tuned on the sample $(X, y)$, where $\phi$ in \eqref{eq:rf} is Lipschitz and not identically 0, $X \in \R^{n \times d}$ is a generic sample s.t.\ Assumption \ref{ass:d} holds and $\theta^*_f$ is defined in \eqref{eq:finetuned}. Assume $k = \Omega(D)$, $|f(X, \theta^*)| = \bigO{1}$, and that Assumption \ref{ass:uncertainity} holds with $\gamma\in [0, 2)$. Let $\textup{Err}(X^i(\Delta), \theta^*_f)$ be the test error of the model on the sample $X^i(\Delta)$ defined in \eqref{eq:error}. Then, for any $\Delta$ such that $\norm{\Delta}_2 \leq \sqrt d$ and any $i \in [n]$, we have
\begin{equation}
    \textup{Err}(X^i(\Delta), \theta^*_f) > (2 - \gamma)^2 - \bigO{1/\sqrt n},
\end{equation}
with probability at least $1 - \exp(-c D)$ over $V$. 
\begin{proof}
    We have
    \begin{equation}
        f(X^i(\Delta), \theta^*_f) = \varphi(X^i(\Delta))^\top \theta^*_f = \varphi(X^i(\Delta))^\top \theta^* +  \frac{\varphi(X^i(\Delta))^\top \varphi(X)}{\norm{\varphi(X)}_2^2} \left( y - \varphi(X)^\top \theta^* \right),
    \end{equation}
    where the second step is justified by \eqref{eq:finetuned}.
    As $\left| \varphi(X)^\top \theta^* - \varphi(X^i(\Delta))^\top \theta^* \right| = \left| f(X, \theta^*) - f(X^i(\Delta), \theta^*) \right| \leq \gamma$ by Assumption \ref{ass:uncertainity}, we can write
    \begin{equation}
    \begin{aligned}
        \left| f(X^i(\Delta), \theta^*_f) - y \right| &= \left|f(X^i(\Delta), \theta^*) - f(X, \theta^*) 
        + \left( \frac{\varphi(X^i(\Delta))^\top \varphi(X)}{\norm{\varphi(X)}_2^2} - 1 \right)\left( y  - f(X, \theta^*)\right) \right|\\
        &\leq \gamma + \left| \frac{\varphi(X^i(\Delta))^\top \varphi(X)}{\norm{\varphi(X)}_2^2} - 1 \right| \left|  y  - f(X, \theta^*) \right|.
    \end{aligned}
    \end{equation}
    By Cauchy-Schwartz inequality, we have
    \begin{equation}
        \left| \frac{\varphi(X^i(\Delta))^\top \varphi(X)}{\norm{\varphi(X)}_2^2} - 1 \right| = \left| \frac{\left(\varphi(X^i(\Delta)) - \varphi(X) \right)^\top \varphi(X)}{\norm{\varphi(X)}_2^2}\right| \leq  \frac{ \norm{\varphi(X^i(\Delta)) - \varphi(X) }_2 \norm{\varphi(X)}_2}{\norm{\varphi(X)}_2^2} \leq \mathcal S(X).
    \end{equation}
    By Theorem \ref{thm:rfS}, we have that $\mathcal S(X) = \bigO{1/\sqrt n}$ with probability at least $1 - \exp(-c D)$ over $V$. Conditioning on such high probability event, we can write
    \begin{equation}
    \begin{aligned}
        \left| f(X^i(\Delta), \theta^*_f) - y_\Delta \right| &\geq \left| y_\Delta - y \right| - \left| f(X^i(\Delta), \theta^*_f) - y \right| \\
        &\geq \left| y_\Delta - y \right| - \gamma - \mathcal S(X) \left|  y  - f(X, \theta^*) \right| \\
        &= 2 - \gamma - S(X) \left|  y  - f(X, \theta^*) \right| \\
        &= 2 -  \gamma - \bigO{1/\sqrt n},
    \end{aligned}
    \end{equation}
    where the third step is a consequence of $|y| = |y_\Delta| = 1$, with $y = - y_\Delta$, and the fourth step comes from $|f(X, \theta^*)| = \bigO{1}$. Thus, we can conclude
    \begin{equation}
        \textup{Err}(X^i(\Delta), \theta^*_f) = \left( f(X^i(\Delta), \theta^*_f) - y_\Delta \right)^2 \geq  \left( 2 - \gamma - \bigO{1/\sqrt n} \right)^2 > (2 - \gamma)^2 - \bigO{1/\sqrt n}.
    \end{equation}
\end{proof}

\paragraph{Lemma 4.1 \cite{bombari2023stability}} Let the kernel $K_{r} \in \R^{(N+1) \times (N+1)}$ be invertible, and let $\Pp \in \R^{k \times k}$ be the projector over $\Span \{\Rows (\Phi) \}$. Let us denote by
\begin{equation}\label{eq:featalign}
\mathcal F(X, X^i(\Delta)) := \frac{\varphi(X^i(\Delta))^\top \Pp^\perp \varphi(X)}{\norm{\Pp^\perp \varphi(X)}_2^2}
\end{equation}
the \emph{feature alignment} between $X$ and $X^i(\Delta)$. Then, we have
\begin{equation}\label{eq:lemmaproj}
f(X^i(\Delta), \theta^*_{r}) - f(X^i(\Delta), \theta^*) = \mathcal F(X, X^i(\Delta)) \left (f(X, \theta^*_{r}) - f(X, \theta^*) \right).
\end{equation}
Notice that
\begin{equation}\label{eq:denevmin}
\norm{\Pp^\perp \varphi(X)}^2_2 \geq \evmin{K_r} > 0
\end{equation}
is directly implied by the invertibility of $K_{r}$, as shown in Lemma B.1 from \cite{bombari2023stability}.

\begin{lemma}\label{lemma:evminRF}
Let $\phi$ be a non-linear, Lipschitz function. Let all the training data in $\mathcal X_{r}$ be sampled i.i.d.\ according to a distribution $\mathcal P_X$ s.t.\ $\E_{X \sim \mathcal P_X}[X] = 0$, Assumption \ref{ass:d} holds, and the Lipschitz concentration property is satisfied. Let $N \log^3 N = o(k)$ and $N \log^4 N = o(D^2)$. Then, we have
\begin{equation}
    \evmin{K_{r}} = \Omega(k),
\end{equation}
with probability at least $1 - \exp \left( -c \log^2 N \right)$ over $V$ and $\mathcal X_r$. 
\end{lemma}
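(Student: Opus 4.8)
\textbf{Proof plan for Lemma \ref{lemma:evminRF}.} The goal is to lower bound $\evmin{K_r}$ where $K_r = \Phi_r \Phi_r^\top \in \R^{(N+1)\times(N+1)}$, $\Phi_r = [\varphi(X_1),\ldots,\varphi(X_N),\varphi(X)]^\top$, and each $\varphi(X_j) = \phi(V\Flat(X_j))$. The key point is that $N+1$ data points are mapped into a $k$-dimensional feature space with $k \gg N$, so generically the feature vectors span an $(N+1)$-dimensional subspace and $K_r$ is well-conditioned. The plan is to invoke (essentially off-the-shelf) the machinery developed in \cite{bombari2023stability} (and related works on the conditioning of random-feature / NTK Gram matrices) for exactly this kind of bound. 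Concretely, I would decompose each feature vector as $\varphi(X_j) = \mu \ones + \varphi_{\perp}(X_j)$, where $\mu = \E_{\rho\sim\mathcal N(0,1)}[\phi(\rho)]$ is the ``linear-in-constant'' part (recall $V\Flat(X_j)$ has i.i.d.\ $\mathcal N(0,1)$-ish entries by Assumption \ref{ass:d}), and argue that the rank-one $\mu^2\ones\ones^\top$ piece only shifts one eigenvalue, so it suffices to lower bound $\evmin{\tilde K_r}$ for the centered features $\tilde\Phi_r$ whose rows are $\varphi_\perp(X_j)$.

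The main work is then to show $\evmin{\tilde K_r} = \Omega(k)$. Here I would write $\tilde K_r = \tilde K^{\mathrm{lin}}_r + \tilde K^{\mathrm{nl}}_r$, splitting $\phi$ into its linear Hermite component $\phi_1 \rho$ and the nonlinear remainder (with $\phi_1 = \E[\rho\,\phi(\rho)] \neq 0$ possible, and in any case a nontrivial higher-order component exists because $\phi$ is nonlinear). The diagonal of $\tilde K_r$ concentrates at $\Theta(k)$ by the Lipschitz-concentration hypothesis on the data together with standard concentration over $V$ (this is where $N\log^3 N = o(k)$ enters, via a union bound over the $N+1$ points). For the off-diagonal entries, one shows $[\tilde K_r]_{ij}/k$ is small: it concentrates around $g(\langle \Flat(X_i),\Flat(X_j)\rangle/D)$ for the appropriate dual kernel function $g$, and since $X_i, X_j$ are independent, zero-mean, and Lipschitz-concentrated, $\langle \Flat(X_i),\Flat(X_j)\rangle/D = O(\sqrt{\log N}/\sqrt{D})$ with high probability; because $g(0)=0$ (after centering) and $g$ is smooth near $0$, this forces $[\tilde K_r]_{ij} = O(k\sqrt{\log N/D})$. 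A Gershgorin / operator-norm argument on the off-diagonal block then gives that the total off-diagonal contribution to $\evmin$ is $O(kN\sqrt{\log N/D}) = o(k)$ precisely under $N\log^4 N = o(D^2)$ (so that $N\sqrt{\log N/D} = o(1)$ after accounting for the extra $\sqrt{\log N}$ factors from the union bound over $\binom{N+1}{2}$ pairs). Combining, $\evmin{\tilde K_r} \geq (\text{diag}) - (\text{off-diag}) = \Omega(k) - o(k) = \Omega(k)$.

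The scalings in the hypothesis are exactly calibrated to make the two error terms vanish: $N\log^3 N = o(k)$ controls the deviation of the diagonal entries (and of $\norm{\varphi(X_j)}_2^2$) after union-bounding over $N$ points with $\log$-type tails, while $N\log^4 N = o(D^2)$ controls the off-diagonal block, whose spectral norm is at most $N$ times a typical entry of size $k\cdot O(\sqrt{\log N/D})$, and we need $N \cdot \sqrt{\log N / D} \cdot (\text{union bound factor}) = o(1)$. The assumption $k = \Omega(D)$ is used only to translate between $k$ and $D$ in the final bookkeeping (and is already needed upstream via Theorem \ref{thm:rfS}). I would cite \cite{bombari2023stability,bombari2022memorization} for the precise concentration lemmas on $\phi(V\Flat(X_j))^\top\phi(V\Flat(X_{j'}))$, rather than re-deriving them.

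The main obstacle is the off-diagonal concentration step: one needs a clean statement that $\varphi(X_i)^\top\varphi(X_j)$ concentrates around its conditional expectation given $X_i, X_j$ (concentration over $V$, uniformly, via a union bound) \emph{and} that this conditional expectation is itself small because the inputs are near-orthogonal (concentration over the data, using Lipschitz concentration). Chaining these two levels of concentration while keeping the probability $1 - \exp(-c\log^2 N)$ and tracking the $\log$ factors against the $N\log^3 N = o(k)$, $N\log^4 N = o(D^2)$ budgets is the delicate bookkeeping; all the conceptual content is already packaged in the cited stability/memorization analyses, so I expect this to reduce to correctly instantiating their lemmas with $p = k$, input dimension $D$, and $N+1$ samples.
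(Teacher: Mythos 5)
Your proposal is correct and matches the paper's approach: the paper's entire proof consists of invoking Lemma D.2 of \cite{bombari2023stability} and checking that its hypotheses (non-linear Lipschitz $\phi$, zero-mean i.i.d.\ Lipschitz-concentrated data satisfying Assumption \ref{ass:d}, and the scalings $N\log^3 N = o(k)$, $N\log^4 N = o(D^2)$) are met, which is exactly the ``off-the-shelf'' route you describe. Your additional sketch of the internal argument (diagonal concentration at $\Theta(k)$, near-orthogonality of the data controlling the off-diagonal block) is a faithful outline of what the cited lemma does, modulo the minor imprecision that the constant-component term contributes rank-two cross terms rather than a single PSD rank-one shift.
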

\begin{proof}
The desired result follows from Lemma D.2 in \cite{bombari2023stability}. Notice that for their argument to go through, they need their Lemma D.1 to hold, which requires our assumptions on the data distribution $\mathcal P_X$, and Assumption \ref{ass:d}. They further require the scalings $N = o(D^2 / \log^4 D)$ and $N \log^4 N = o(D^2)$, which are both given by our assumption $N \log^4 N = o(D^2)$. Finally, in their Lemma D.2 they require the activation function $\phi$ to be Lipschitz and non-linear, and the over-parameterized setting $N \log^3 N = o(k)$.
\end{proof}

\begin{lemma}\label{lemma:Falis1}
Let $\phi$ be a non-linear, Lipschitz function. Let all the training data in $\mathcal X_{r}$ be sampled i.i.d. according to a distribution s.t.\ $\E_{X \sim \mathcal P_X}[X] = 0$, Assumption \ref{ass:d} holds, and the Lipschitz concentration property is satisfied. Let $N \log^3 N = o(k)$, $N \log^4 N = o(D^2)$ and $k = \Omega(D)$. Let $\mathcal F(X, X^i(\Delta))$ be defined as in \eqref{eq:featalign}. Then, we have
\begin{equation}
    \left| \mathcal F(X, X^i(\Delta)) - 1 \right| = \bigO{\frac{1}{\sqrt n}},
\end{equation}
with probability at least $1 - \exp \left( -c \log^2 N \right)$ over $\mathcal X_r$ and $V$.
\end{lemma}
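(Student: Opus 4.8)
\textbf{Proof plan for Lemma \ref{lemma:Falis1}.} The plan is to expand the definition \eqref{eq:featalign} of the feature alignment and show that it is a perturbation of $1$ by a term controlled through the word sensitivity bound of Theorem \ref{thm:rfS} and the eigenvalue bound of Lemma \ref{lemma:evminRF}. Write
\begin{equation}
    \mathcal F(X, X^i(\Delta)) - 1 = \frac{\left(\varphi(X^i(\Delta)) - \varphi(X)\right)^\top \Pp^\perp \varphi(X)}{\norm{\Pp^\perp \varphi(X)}_2^2},
\end{equation}
which uses the identity $\varphi(X)^\top \Pp^\perp \varphi(X) = \norm{\Pp^\perp \varphi(X)}_2^2$ (valid since $\Pp^\perp$ is an orthogonal projector). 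By Cauchy--Schwarz and $\opnorm{\Pp^\perp} \le 1$, the numerator is at most $\norm{\varphi(X^i(\Delta)) - \varphi(X)}_2 \norm{\Pp^\perp \varphi(X)}_2$, so
\begin{equation}
    \left| \mathcal F(X, X^i(\Delta)) - 1 \right| \leq \frac{\norm{\varphi(X^i(\Delta)) - \varphi(X)}_2}{\norm{\Pp^\perp \varphi(X)}_2}.
\end{equation}

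Next I would bound the two pieces separately on a joint high-probability event. For the numerator, Lemma \ref{lemma:0rf} (applicable since $\phi$ is Lipschitz and $k = \Omega(D)$) gives $\norm{\varphi(X^i(\Delta)) - \varphi(X)}_2 = \bigO{\sqrt{k/n}}$ with probability at least $1 - \exp(-cD)$ over $V$; note this holds uniformly over $i\in[n]$ and $\norm{\Delta}_2 \le \sqrt d$ by taking a union bound or, more simply, by noting the bound in Lemma \ref{lemma:0rf} is already uniform over $\Delta$ via $\opnorm V$. For the denominator, combine \eqref{eq:denevmin}, i.e.\ $\norm{\Pp^\perp \varphi(X)}_2^2 \geq \evmin{K_r}$, with Lemma \ref{lemma:evminRF}, which under the stated scalings ($N\log^3 N = o(k)$, $N\log^4 N = o(D^2)$, i.i.d.\ Lipschitz-concentrated mean-zero data) yields $\evmin{K_r} = \Omega(k)$ with probability at least $1 - \exp(-c\log^2 N)$. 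Hence $\norm{\Pp^\perp \varphi(X)}_2 = \Omega(\sqrt k)$ on this event. One should also check that $K_r$ is invertible so that Lemma 4.1 of \cite{bombari2023stability} and \eqref{eq:denevmin} apply, but this is immediate from $\evmin{K_r} = \Omega(k) > 0$.

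Putting the two bounds together on the intersection of the events (whose failure probability is at most $\exp(-cD) + \exp(-c\log^2 N) \le \exp(-c'\log^2 N)$, since $D = nd$ and $\log^2 N = o(D)$ under the scaling $N\log^4 N = o(D^2)$), we get
\begin{equation}
    \left| \mathcal F(X, X^i(\Delta)) - 1 \right| \leq \frac{\bigO{\sqrt{k/n}}}{\Omega(\sqrt k)} = \bigO{\frac{1}{\sqrt n}},
\end{equation}
which is the claim. The routine parts are the projector identity and Cauchy--Schwarz; the substantive inputs are Lemma \ref{lemma:0rf} and Lemma \ref{lemma:evminRF}, both already available. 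The only mild subtlety — and the place I would be most careful — is the bookkeeping to ensure the failure probabilities combine to the stated $1 - \exp(-c\log^2 N)$ and that the $k$'s in numerator and denominator genuinely cancel (they do, since both bounds are $\Theta(\sqrt k)$ up to the $1/\sqrt n$ factor), so that the final rate does not secretly depend on the overparameterization ratio $k/D$.
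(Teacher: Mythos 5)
Your proposal is correct and follows essentially the same route as the paper: the same decomposition of $\mathcal F(X, X^i(\Delta)) - 1$, Cauchy--Schwarz, the bound $\norm{\Pp^\perp \varphi(X)}_2^2 \geq \evmin{K_r} = \Omega(k)$ via Lemma \ref{lemma:evminRF}, and the $\bigO{\sqrt{k/n}}$ control of the numerator. The only cosmetic difference is that the paper inserts $\norm{\varphi(X)}_2$ to route the bound through $\mathcal S(X)$ and Theorem \ref{thm:rfS} (requiring the concentration $\norm{\varphi(X)}_2 = \Theta(\sqrt k)$), whereas you invoke Lemma \ref{lemma:0rf} directly, which is marginally leaner but yields the identical estimate.
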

\begin{proof}
By Cauchy-Schwartz inequality, we have
\begin{equation}\label{eq:csfalign}
\begin{aligned}
\left| \mathcal F(X, X^i(\Delta)) - 1 \right| &= \left| \frac{\left( \varphi(X^i(\Delta)) - \varphi(X) \right)^\top \Pp^\perp \varphi(X)}{\norm{\Pp^\perp \varphi(X)}_2^2} \right| \\
&\leq  \frac{ \norm{\varphi(X^i(\Delta)) - \varphi(X) }_2}{\norm{\varphi(X)}_2}  \frac{\norm{\varphi(X)}_2}{\norm{\Pp^\perp \varphi(X)}_2} \\
&\leq \mathcal S(X) \frac{\norm{\varphi(X)}_2}{\sqrt{\evmin{K_r}}},
\end{aligned}
\end{equation}
where the last step is a consequence of \eqref{eq:denevmin}. As $\phi$ is Lipschitz and non-0, we can apply the result in Lemma C.3 of \cite{bombari2023stability}, getting
\begin{equation}
    \norm{\varphi(X)}_2 = \Theta(\sqrt k),
\end{equation}
with probability at least $1 - \exp(-c_1 D)$ over $V$. Due to Lemma \ref{lemma:evminRF}, we can also write
\begin{equation}
    \evmin{K_{r}} = \Omega(k),
\end{equation}
with probability at least $1 - \exp \left( -c_2 \log^2 N \right)$ over $V$ and $\mathcal X_r$. Thus, \eqref{eq:csfalign} promptly gives
\begin{equation}
    \left| \mathcal F(X, X^i(\Delta)) - 1 \right| \leq \mathcal S(X) \frac{\norm{\varphi(X)}_2}{\sqrt{\evmin{K_r}}} = \bigO{\mathcal S(X)} = \bigO{\frac{1}{\sqrt n}},
\end{equation}
where the last step comes from Theorem \ref{thm:rfS}, and holds with probability at least $1 - \exp(-c_3 D)$. This gives the desired result.
\end{proof}

\paragraph{Theorem \ref{thm:erroronD}} Let $f(\cdot, \theta^*_r)$ be the RF model re-trained on the dataset $(\mathcal X_{r}, \mathcal Y_{r})$ that contains the pair $(X, y)$, thus with $\theta^*_r$ defined in \eqref{eq:retrained}. Let $\phi$ in \eqref{eq:rf} be a non-linear, Lipschitz function. Assume the training data to be sampled i.i.d.\ from a distribution $\mathcal P_X$ s.t.\ $\E_{X \sim \mathcal P_X}[X] = 0$, Assumption \ref{ass:d} holds, and the Lipschitz concentration property is satisfied. Let $N \log^3 N = o(k)$, $N \log^4 N = o(D^2)$ and $k = \Omega(D)$. Assume that $|f(X, \theta^*)| = \bigO{1}$ and that Assumption \ref{ass:uncertainity} holds with $\gamma\in [0, 2)$. Let $\textup{Err}(X^i(\Delta^*), \theta^*_r)$ be the test error of the model on the sample $X^i(\Delta)$ defined in \eqref{eq:error}. Then, for any $\Delta$ s.t.\ $\norm{\Delta}_2 \leq \sqrt d$ and any $i \in [n]$, we have
\begin{equation}
    \textup{Err}(X^i(\Delta^*), \theta^*_r) > (2 - \gamma)^2 - \bigO{1/\sqrt n},
\end{equation}
with probability at least $1 - \exp \left( -c \log^2 N \right)$ over $V$ and $\mathcal X_{r}$. 
\begin{proof}
Let's condition on $K_{r}$ being invertible, which by Lemma \ref{lemma:evminRF} happens with probability at least $1 - \exp \left( -c_1 \log^2 N \right)$ over $V$ and $\mathcal X_r$. Then, by \eqref{eq:lemmaproj}, we have
\begin{equation}
    f(X^i(\Delta), \theta^*_{r}) - f(X^i(\Delta), \theta^*) = \mathcal F(X, X^i(\Delta)) \left (y - f(X, \theta^*) \right),
\end{equation}
since $f(\cdot, \theta^*_r)$ fully interpolates the training data, thus giving $f(X, \theta^*_r) = y$.
As $\left| \varphi(X)^\top \theta^* - \varphi(X^i(\Delta))^\top \theta^* \right| = \left| f(X, \theta^*) - f(X^i(\Delta), \theta^*) \right| \leq \gamma$ by Assumption \ref{ass:uncertainity}, we can write
\begin{equation}
\begin{aligned}
    \left| f(X^i(\Delta), \theta^*_r) - y \right| &= \left|f(X^i(\Delta), \theta^*) - f(X, \theta^*) 
    + \left( \mathcal F(X, X^i(\Delta)) - 1 \right)\left( y  - f(X, \theta^*)\right) \right|\\
    &\leq \gamma + \left| \mathcal F(X, X^i(\Delta)) - 1 \right| \left|  y  - f(X, \theta^*) \right|.
\end{aligned}
\end{equation}
By Lemma \ref{lemma:Falis1} we have that $\left| \mathcal F(X, X^i(\Delta)) - 1 \right| = \bigO{1/\sqrt n}$ with probability at least $1 - \exp \left( -c \log^2 N \right)$ over $\mathcal X_r$ and $V$. Conditioning on such high probability event, we can write
\begin{equation}
\begin{aligned}
    \left| f(X^i(\Delta), \theta^*_r) - y_\Delta \right| &\geq \left| y_\Delta - y \right| - \left| f(X^i(\Delta), \theta^*_r) - y \right| \\
    &\geq \left| y_\Delta - y \right| - \gamma - \left| \mathcal F(X, X^i(\Delta)) - 1 \right| \left|  y  - f(X, \theta^*) \right| \\
    &= 2 - \gamma - \left| \mathcal F(X, X^i(\Delta)) - 1 \right| \left|  y  - f(X, \theta^*) \right| \\
    &= 2 - \gamma - \bigO{1/\sqrt n},
\end{aligned}
\end{equation}
where the third step is a consequence of $|y| = |y_\Delta| = 1$, with $y = - y_\Delta$, and the fourth step comes from $|f(X, \theta^*)| = \bigO{1}$. Thus, we can conclude
\begin{equation}
    \textup{Err}(X^i(\Delta), \theta^*_r) = \left( f(X^i(\Delta), \theta^*_r) - y_\Delta \right)^2 \geq  \left( 2 -  \gamma - \bigO{1/\sqrt n} \right)^2 > (2 - \gamma)^2 - \bigO{1/\sqrt n}.
\end{equation}
\end{proof}



\section{Proofs for deep random features}\label{app:drf}

In this section, we provide the proofs for our results on the deep random features model. We consider a single textual data-point $X = [x_1, x_2, ..., x_n]^\top \in \R^{n \times d}$ that satisfies Assumption \ref{ass:d}. We consider the deep random features model defined in \eqref{eq:drf}, i.e.,
\begin{equation}
    \varphi_{\textup{DRF}}(X) := \phi(V_L \phi( V_{L-1} (... V_2 \phi (V_1 \Flat X)  ... ))),
\end{equation}
where $\phi: \R \to \R$ is the non-linearity applied component-wise at each layer, and $V_l \in \R^{D \times D}$ are the random weights at layer $l$, sampled independently and such that $[V_1]_{i,j} \distas{}_{\rm i.i.d.}\mathcal{N}(0, \beta / D)$ and $[V_l]_{i,j} \distas{}_{\rm i.i.d.}\mathcal{N}(0, \beta / k)$ for $l>1$. We set $\beta$ according to He's (or Kaiming) initialization \cite{He2015}, i.e.,
\begin{equation}\label{eq:heapp}
    \E_{\rho \sim \mathcal N(0, \beta)} \left[ \phi^2( \rho ) \right] = 1.
\end{equation}
Thus, we require the activation function $\phi$ to guarantee at least one value $\beta$ for which the previous equation is respected. We will consider $\beta$ to be a positive constant dependent only on the activation $\phi$. 

Let's introduce the shorthands
\begin{equation}\label{eq:drffeatures}
\begin{aligned}
    \varphi_0(X) &= \Flat(X), \\
    \varphi_l(X) &= \phi \left( V_l \, \varphi_{l-1}(X) \right), \qquad \textup{for } l \in [L].
\end{aligned}
\end{equation}
The outline of this section is the following:
\begin{enumerate}
    \item In Lemma \ref{lemma:concentrationnorminduction}, we prove that at every layer, the norm of the features $\norm{\varphi_{l}(X)}_2$, with $l > 1$, concentrates to $\sqrt k$. This is the step where He's initialization is necessary.
    \item In Lemma \ref{lemma:0drf}, we show that $\norm{\varphi_{l}(X^i(\Delta)) - \varphi_{l}(X)}_2$ can be upper bounded by a term that grows exponentially with the depth of the layer $l$.
    \item In Theorem \ref{thm:RAF}, we upper bound $\mathcal S_{\textup{DRF}}$, concluding the argument.
\end{enumerate}

\begin{lemma}\label{lemma:concentrationnorminduction}
Let $\varphi_l(X)$ be defined in \eqref{eq:drffeatures}, and let $\phi$ be a Lipschitz function such that \eqref{eq:He} admits at least one solution $\beta$. Let $X \in \R^{n \times d}$ be a generic input sample such that Assumption \ref{ass:d} holds, and let $k = \Theta(D)$. Then, for every $l > 0$, we have
\begin{equation}
    \left| \norm{\varphi_{l}(X)}_2 - \sqrt k \right| \leq e^{Cl} \log D,
\end{equation}
with probability at least $1 - 2 L \exp (-c \log^2 D)$ over $\{ V_l \}_{l = 1}^L$. 
\end{lemma}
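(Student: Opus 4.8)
The plan is to proceed by induction on the layer index $l$, controlling simultaneously the deviation of $\norm{\varphi_l(X)}_2$ from $\sqrt{k}$. The base case $l=1$ is slightly special because $V_1$ has variance $\beta/D$ rather than $\beta/k$; here I would directly analyze $\varphi_1(X) = \phi(V_1 \Flat(X))$, noting that since $\norm{\Flat(X)}_2 = \sqrt{D}$ by Assumption \ref{ass:d}, each entry of $V_1 \Flat(X)$ is distributed as $\mathcal{N}(0,\beta)$. Then $\norm{\varphi_1(X)}_2^2 = \sum_{i=1}^k \phi^2([V_1\Flat(X)]_i)$ is a sum of $k$ i.i.d.\ random variables, each with mean $1$ by the He initialization condition \eqref{eq:He}; concentration around $\sqrt{k}$ follows from a standard sub-exponential tail bound (the summands $\phi^2(\mathcal{N}(0,\beta))$ are sub-exponential since $\phi$ is Lipschitz, hence $\phi(\rho)$ is sub-Gaussian and its square is sub-exponential). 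This gives $|\norm{\varphi_1(X)}_2 - \sqrt k| \leq C\log D$ with probability $1 - \exp(-c\log^2 D)$, using $k = \Theta(D)$ and the fact that $\sqrt{a} - \sqrt{b}$ is controlled by $|a-b|/\sqrt{b}$.

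For the inductive step, condition on the event that $\norm{\varphi_{l-1}(X)}_2$ is within $e^{C(l-1)}\log D$ of $\sqrt k$, and write $u := \varphi_{l-1}(X)/\norm{\varphi_{l-1}(X)}_2$, a unit vector independent of $V_l$. Then the entries of $V_l \varphi_{l-1}(X)$ are, conditionally, i.i.d.\ $\mathcal{N}(0, \beta \norm{\varphi_{l-1}(X)}_2^2 / k)$. I would split the deviation into two pieces: first, the deviation of $\norm{\phi(V_l \varphi_{l-1}(X))}_2$ from its conditional expectation, which again is a sum of $k$ i.i.d.\ sub-exponential terms and concentrates at scale $\sqrt{k}\cdot (\text{poly}\log)/\sqrt{k} = \text{poly}\log$; second, the bias coming from the fact that the conditional variance $\beta\norm{\varphi_{l-1}(X)}_2^2/k$ is not exactly $\beta$. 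For this second piece, one compares $\E_{\rho\sim\mathcal{N}(0,\beta s^2)}[\phi^2(\rho)]$ with $\E_{\rho\sim\mathcal{N}(0,\beta)}[\phi^2(\rho)] = 1$ where $s = \norm{\varphi_{l-1}(X)}_2/\sqrt{k} = 1 + O(e^{C(l-1)}\log D/\sqrt{k})$; since $\phi$ is Lipschitz, $\phi^2$ grows at most quadratically, so this expectation is Lipschitz in $s$ near $s=1$, yielding a multiplicative error of the same order $e^{C(l-1)}\log D/\sqrt{k}$ on $\norm{\varphi_l(X)}_2^2 \approx k s^2$, hence an additive error $O(e^{C(l-1)}\log D)$ on $\norm{\varphi_l(X)}_2$. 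Combining, $|\norm{\varphi_l(X)}_2 - \sqrt k| \leq C'(e^{C(l-1)}\log D + \log D) \leq e^{Cl}\log D$ for $C$ chosen large enough, which closes the induction. A union bound over the $L$ layers (each contributing a failure probability $\exp(-c\log^2 D)$ from the concentration step) gives the stated $1 - 2L\exp(-c\log^2 D)$.

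The main obstacle I anticipate is propagating the bias term cleanly through the recursion without the constant $C$ blowing up faster than claimed: one must verify that the Lipschitz-in-variance argument for $\E[\phi^2]$ really contributes only a constant multiplicative factor per layer (so that errors compound geometrically with ratio $e^C$, not faster), and that the additive concentration noise $\text{poly}\log D$ at each layer is dominated by the geometric term. A secondary technical point is handling the conditioning carefully — at each layer $V_l$ is independent of $\varphi_{l-1}(X)$, so the conditional-distribution argument is valid, but one should make sure the high-probability events from earlier layers are incorporated into the conditioning before invoking fresh randomness, which the union bound accommodates.
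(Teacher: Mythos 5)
Your overall strategy coincides with the paper's: induction over layers, conditioning on fresh randomness of $V_l$, using He's condition \eqref{eq:He} to pin the conditional mean of $\norm{\varphi_l(X)}_2^2$ at $k$, and Bernstein concentration for the sum of sub-exponential variables $\phi^2(\rho_i)$, followed by a union bound over the $L$ layers. The one step where you diverge is the treatment of the norm mismatch of $\varphi_{l-1}(X)$. You keep the mis-scaled input, so the preactivations are $\mathcal{N}(0,\beta s^2)$ with $s=\norm{\varphi_{l-1}(X)}_2/\sqrt{k}$, and you correct the bias by arguing that $\E_{\rho\sim\mathcal{N}(0,\beta s^2)}[\phi^2(\rho)]$ is Lipschitz in $s$ near $s=1$. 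The paper instead rescales the \emph{input} to $\sqrt{k_{l-1}}\,\varphi_{l-1}(X)/\norm{\varphi_{l-1}(X)}_2$, so that the preactivations have variance exactly $\beta$ and \eqref{eq:He} applies verbatim, and bounds the resulting perturbation of $\varphi_l(X)$ by $M\,\opnorm{V_l}\,\bigl|\norm{\varphi_{l-1}(X)}_2-\sqrt{k_{l-1}}\bigr|$ using only the Lipschitz constant of $\phi$ and the event $\opnorm{V_l}\le C_1$. The payoff of the paper's version is that the propagated error is exactly linear in the previous deviation with factor $MC_1$, \emph{regardless of its size}, which is why the recursion closes with $e^C=\max(1,MC_1+1)$ and no smallness condition. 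Your variant, as you yourself flag, needs $|s-1|$ bounded: both the Lipschitz-in-$s$ bound on $\E[\phi^2]$ and the sub-exponential norm entering Bernstein degrade like $s^2$, so strictly your induction only closes when $e^{C(l-1)}\log D=o(\sqrt{k})$ (which holds in the regime $L=o(\log k)$ where the lemma is actually used, but is not assumed in the lemma's statement). If you want your route to match the lemma's generality, either add a crude $\opnorm{V_l}$-based fallback for the large-deviation regime, or simply adopt the paper's input-rescaling trick, which removes the issue entirely.
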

\begin{proof}
By Assumption \ref{ass:d} the statement trivially holds for $l=0$, if we replace $\sqrt k$ with $\sqrt d$. Let's consider this the base case of an induction argument to prove the statement for $l \in [L]$. 

Thus, using the notation $k_0 = D$ and $k_l = k$ for $l \in [L]$, the inductive hypothesis becomes
\begin{equation}\label{eq:indhyp}
    \left| \norm{\varphi_{l - 1}(X)}_2 - \sqrt {k_{l-1}} \right| \leq  e^{C(l-1)} \log D,
\end{equation}
with probability at least $1 - 2 (l - 1) \exp (-c \log^2 D)$ on $\{ V_m \}_{m = 1}^{l-1}$. Let's condition on this high probability event and on $\opnorm{V_l} \leq C_1$ until the end of the proof. By Theorem 4.4.5 of \cite{vershynin2018high} and since $k = \Theta(D)$, there exists $C_1$ large enough and independent from $l$, such that this holds with probability at least $1 - 2 \exp (-c_1 k)$ over $V_l$. We therefore aim to prove the thesis for
\begin{equation}\label{eq:Cind}
    e^C := \max \left( 1, M C_1 + 1 \right),
\end{equation}
where we denote by $M$ the Lipschitz constant of $\phi$. \eqref{eq:Cind} explicitely shows that $C$ is a natural constant dependent only on the activation function $\phi$.
    
We have
\begin{equation}
    \varphi_{l}(X) = \phi(V_l \, \varphi_{l-1}(X) )
    = \phi \left( V_l \frac{\sqrt {k_{l-1}} \, \varphi_{l-1}(X)}{\norm{\varphi_{l-1}(X)}_2}  + V_l \left( \varphi_{l-1}(X) -  \frac{\sqrt {k_{l-1}} \, \varphi_{l-1}(X)}{\norm{\varphi_{l-1}(X)}_2} \right) \right),
\end{equation}
which gives
\begin{equation}\label{eq:firstlemmaind}
    \norm{ \varphi_{l}(X) - \phi \left( V_l \frac{\sqrt {k_{l-1}} \, \varphi_{l-1}(X)}{\norm{\varphi_{l-1}(X)}_2} \right) }_2 \leq M C_1 \left| \norm{\varphi_{l - 1}(X)}_2 -\sqrt {k_{l-1}} \right| \leq M C_1 e^{C(l - 1)} \log D,
\end{equation}
where the first step is true since $\phi$ is $M$-Lipschitz, and the last step is a direct consequence of the inductive hypothesis \eqref{eq:indhyp}.

Let's now consider the second term in the left hand side of \eqref{eq:firstlemmaind}. Let's define the shorthand $\rho = V_l \frac{\sqrt {k_{l-1}} \, \varphi_{l-1}(X)}{\norm{\varphi_{l-1}(X)}_2} \in \R^D$. In the probability space of $V_l$, $\rho$ is distributed as a Gaussian random vector, such that all its entries $\rho_i$ are i.i.d. Gaussian with variance $\beta$. Thus, we have
\begin{equation}\label{eq:expnormind}
    \E_{V_l} \left [ \norm{\phi \left( V_l \frac{\sqrt {k_{l-1}} \, \varphi_{l-1}(X)}{\norm{\varphi_{l-1}(X)}_2} \right) }_2^2 \right ] = \E_{\rho} \left [ \norm{ \phi \left( \rho \right) }_2^2 \right ] = k_l \E_{\rho_1} \left [ \phi^2\left( \rho_1 \right) \right ] = k,
\end{equation}
where the last step is a consequence of \eqref{eq:heapp}, and of $k_{l} = k$ for $l \in [L]$.

As the $\rho_i$'s are independent and $\phi$ is Lipschitz, the random variables $\left( \phi^2\left( \rho_i \right) - 1 \right)$ are independent, mean-0, and sub-exponential, such that $\subEnorm{\phi^2\left( \rho_i \right) - 1 } \leq C_2$. Thus, by Bernstein inequality (cf. Theorem 2.8.1. in \cite{vershynin2018high}), we have
\begin{equation}\label{eq:Bernind}
    \P_{\rho} \left( \left| \sum_{i = 1}^k \left(\phi^2\left( \rho_i \right) - 1 \right) \right| \geq \sqrt k \log k \right) \leq 2 \exp (-c_2 \log^2 k),
\end{equation}
which gives
\begin{equation}
    \left | \norm{\phi \left( V_l \frac{\sqrt k \, \varphi_{l-1}(X)}{\norm{\varphi_{l-1}(X)}_2} \right) }_2^2 - k \right| \leq \sqrt k \log k,
\end{equation}
with probability at least $1 - 2 \exp (-c_2 \log^2 k)$ over $V_l$. We will condition on such high probability event until the end of the proof.

Thus, we have
\begin{equation}
    \norm{\phi \left( V_l \frac{\sqrt k \, \varphi_{l-1}(X)}{\norm{\varphi_{l-1}(X)}_2} \right) }_2 \leq  \sqrt{k  +  \sqrt k \log k} = \sqrt k \sqrt{1 + \frac{\log k}{\sqrt k}} \leq \sqrt k \left( 1 + \frac{\log k}{\sqrt k} \right) = \sqrt k + \log k,
\end{equation}
and
\begin{equation}
    \norm{\phi \left( V_l \frac{\sqrt k \, \varphi_{l-1}(X)}{\norm{\varphi_{l-1}(X)}_2} \right) }_2 \geq  \sqrt{k  -  \sqrt k \log k} = \sqrt k \sqrt{1 - \frac{\log k}{\sqrt k}} \geq \sqrt k \left( 1 - \frac{\log k}{\sqrt k} \right) = \sqrt k - \log k.
\end{equation}

Putting together the last two equations gives
\begin{equation}\label{eq:lemmaindtwo}
    \left | \norm{\phi \left( V_l \frac{\sqrt k \, \varphi_{l-1}(X)}{\norm{\varphi_{l-1}(X)}_2} \right) }_2 - \sqrt k \right| \leq  \log k.
\end{equation}

Applying \eqref{eq:firstlemmaind}, \eqref{eq:lemmaindtwo} and the triangle inequality gives
\begin{equation}
    \left | \norm{\varphi_l \left( X \right) }_2 - \sqrt k \right| \leq MC_1  e^{C(l-1)} \log k + \log k \leq \left( MC_1 e^{C(l-1)} + e^{C(l-1)} \right) \log k \leq e^{Cl} \log k,
\end{equation}
where the second and the third step are both consequences of \eqref{eq:Cind}. This inequality, performing a union bound on the high probability events we considered so far, holds with probability at least $1 - 2 (l - 1) \exp (-c \log^2 k) - 2 \exp (-c_1 k) - 2 \exp (-c_2 \log^2 k) \geq 1 - 2 l \exp (-c \log^2 k)$ over $\{ V_m \}_{m = 1}^l$, as soon as we consider $c = \min(c_1, c_2)$.
\end{proof}

\begin{lemma}\label{lemma:0drf}
    Let $\varphi_l(X)$ be defined in \eqref{eq:drffeatures}, let $\phi$ be a Lipschitz function and $X \in \R^{n \times d}$ a generic input sample such that Assumption \ref{ass:d} holds. Assume $k = \Theta(D)$. Then, for every $l \geq 0$, for every $i \in [n]$ and for any $\Delta \in \R^d$ such that $\norm{\Delta}_2 \leq \sqrt d$, we have
    \begin{equation}
        \norm{\varphi_{l}(X^i(\Delta)) - \varphi_{l}(X)}_2 \leq \sqrt d e^{Cl},
    \end{equation}
    with probability at least $1 - 2 L \exp(-c k)$ over $\{ V_l \}_{l = 1}^L$. 
\end{lemma}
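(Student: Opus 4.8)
The plan is to prove the bound by induction on the layer index $l$, propagating the input perturbation through the network one layer at a time using only the Lipschitz continuity of $\phi$ and a high-probability bound on $\opnorm{V_l}$. Unlike Lemma \ref{lemma:concentrationnorminduction}, this argument does not need He's initialization or any concentration of the feature norm: the only randomness that enters is through the operator norms of the weight matrices, and the estimate is a purely worst-case one.

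First I would record the event to condition on. For every $l \in [L]$, Theorem 4.4.5 of \cite{vershynin2018high} applied to the Gaussian matrix $V_l$ (entries of variance $\beta/D$ for $l=1$, $\beta/k$ for $l>1$) yields a universal constant $C_1$, depending only on $\phi$ through $\beta$, with $\opnorm{V_l} \le C_1$ with probability at least $1 - 2\exp(-c_1 k)$; here the hypothesis $k = \Theta(D)$ is used to express the first-layer bound in terms of $k$ as well, absorbing the aspect ratio $(k+D)/D = \Theta(1)$ into $C_1$. A union bound over $l \in [L]$ places us on the event $\mathcal E := \{\opnorm{V_l} \le C_1 \text{ for all } l \in [L]\}$, which holds with probability at least $1 - 2L\exp(-c_1 k)$, and I would work on $\mathcal E$ for the rest of the proof. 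I would set $e^C := \max(1, MC_1 + 1)$, matching \eqref{eq:Cind}, where $M$ is the Lipschitz constant of $\phi$, so that $C$ depends only on $\phi$ and is consistent with the constant appearing in Lemma \ref{lemma:concentrationnorminduction} and in the final bound of Theorem \ref{thm:drf}.

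For the induction, write $a_l := \norm{\varphi_l(X^i(\Delta)) - \varphi_l(X)}_2$. The base case $l = 0$ is immediate: $a_0 = \norm{\Flat(X^i(\Delta)) - \Flat(X)}_2 = \norm{\Delta}_2 \le \sqrt d = \sqrt d\, e^{C\cdot 0}$, by the definition of $X^i(\Delta)$ and the assumption $\norm{\Delta}_2 \le \sqrt d$. For the inductive step, using $\varphi_l(X) = \phi(V_l \varphi_{l-1}(X))$, the $M$-Lipschitzness of $\phi$ applied component-wise, and $\opnorm{V_l} \le C_1$ on $\mathcal E$,
\begin{equation}
a_l = \norm{\phi(V_l \varphi_{l-1}(X^i(\Delta))) - \phi(V_l \varphi_{l-1}(X))}_2 \le M \opnorm{V_l}\, a_{l-1} \le M C_1\, a_{l-1} \le M C_1 \sqrt d\, e^{C(l-1)} \le \sqrt d\, e^{Cl},
\end{equation}
where the last inequality uses $MC_1 \le e^C$. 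This closes the induction and gives the claim on $\mathcal E$, with the stated probability taking $c = c_1$.

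As for difficulties, there is no substantive obstacle here — the argument is strictly easier than Lemma \ref{lemma:concentrationnorminduction}, since no He-initialization identity or Bernstein-type concentration is needed. The only points requiring a little care are (i) handling the two different variance/dimension scalings of $V_1$ versus $V_l$ for $l>1$ uniformly via $k = \Theta(D)$, so that a single constant $C_1$ and a single tail $\exp(-c_1 k)$ cover all layers, and (ii) reusing the definition $e^C = \max(1, MC_1+1)$ from \eqref{eq:Cind} so that the exponential-in-$L$ factor here matches the one produced by Lemma \ref{lemma:concentrationnorminduction} when the two are combined in the proof of Theorem \ref{thm:drf}.
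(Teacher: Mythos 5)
Your proof is correct and follows essentially the same route as the paper's: induction on $l$, with the base case from $\norm{\Delta}_2 \le \sqrt d$ and the inductive step from the $M$-Lipschitzness of $\phi$ together with $\opnorm{V_l} \le C_1$ via Theorem 4.4.5 of \cite{vershynin2018high} and $k = \Theta(D)$. The only cosmetic differences are that you perform the union bound over layers up front rather than accumulating it through the induction, and you take $e^C = \max(1, MC_1+1)$ instead of the paper's $e^C = MC_1$; both satisfy $MC_1 \le e^C$, so the argument is unaffected.
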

\begin{proof}
    Let's prove the statement by induction over $l$. The base case $l=0$ is a direct consequence of $\norm{\Flat(X^i(\Delta)) - \Flat(X)}_2 = \norm{\Delta}_2 \leq \sqrt d$, which makes the thesis true for any $C \geq 0$.
    
    In the inductive step, the inductive hypothesis becomes
    \begin{equation}\label{eq:indhyplemma2}
        \norm{\varphi_{l-1}(X^i(\Delta)) - \varphi_{l-1}(X)}_2 \leq \sqrt d e^{C(l-1)},
    \end{equation}
    with probability at least $1 - 2 (l - 1) \exp (-c k)$ on $\{ V_m \}_{m = 1}^{l-1}$. Let's condition on this high probability event and on $\opnorm{V_l} \leq C_1$ until the end of the proof. By Theorem 4.4.5 of \cite{vershynin2018high} and since $k = \Theta(D)$, there exists $C_1$ large enough and independent from $l$, such that this holds with probability at least $1 - 2 \exp (-c_1 k)$ over $V_l$.
    
    We aim to prove the thesis for $e^C = M C_1$, and $c = c_1$, where $M$ is the Lipschitz constant of $\phi$. We have 
    \begin{equation}
    \begin{aligned}
        \norm{\varphi_{l}(X^i(\Delta)) - \varphi_{l}(X)}_2 &= \norm{\phi(V_l \varphi_{l-1}(X^i(\Delta)) ) - \phi (V_l \varphi_{l-1}(X) )}_2 \\
        &\leq M C_1 \norm{\varphi_{l-1}(X^i(\Delta)) - \varphi_{l-1}(X)}_2 \\
        &\leq M C_1 \sqrt d  e^{C(l-1)} \\
        &\leq \sqrt d e^{Cl},
    \end{aligned}
    \end{equation}
    with probability at least $1 - 2 (l - 1) \exp (-c k) - 2 \exp (-c_1 k) = 1 - 2 l \exp (-c k)$ over $\{ V_m \}_{m = 1}^l$, which gives the thesis.
\end{proof}

\paragraph{Theorem \ref{thm:drf}.}
Let $\varphi_{\textup{DRF}}(X)$ be the deep random feature map defined in \eqref{eq:drf}, and let $\phi$ be a Lipschitz function, such that \eqref{eq:He} admits at least one solution $\beta$. Let $X \in \R^{n \times d}$ be a generic input sample s.t.\ Assumption \ref{ass:d} holds, and assume $k = \Theta(D)$, $L = o (\log k)$. Let $\mathcal S_{\textup{DRF}}(X)$ be the the word sensitivity defined in \eqref{eq:sensitivity}. Then, we have
\begin{equation}
    \mathcal S_{\textup{DRF}} = \bigO{\frac{e^{CL}}{\sqrt n}},
\end{equation}
with probability at least $1 - \exp(-c \log^2 k)$ over $\{ V_m \}_{m = 1}^l$. 
\begin{proof}
    By Lemma \ref{lemma:concentrationnorminduction} (for the case $l = L$) and since $L = o (\log k)$, we have
    \begin{equation}\label{eq:thmdrf1}
        \norm{\varphi_{\textup{DRF}}(X)}_2 = \Theta(\sqrt k),
    \end{equation}
    with probability at least $1 - 2 L \exp (-c_1 \log^2 k) \geq 1 - \exp(-c_2 \log^2 k)$ over $\{ V_l \}_{l = 1}^L$.

    By Lemma \ref{lemma:0drf} (for the case $l = L$), we have that, for every $i \in [n]$, we have
    \begin{equation}\label{eq:thmdrf2}
        \sup_{\norm{\Delta}_2 \leq \sqrt d} \norm{\varphi_{\textup{DRF}}(X^i(\Delta)) - \varphi_{\textup{DRF}}(X)}_2 \leq \sqrt d e^{CL},
    \end{equation}
    with probability at least $1 - 2 L \exp (-c_3 k) \geq 1 - \exp(-c_4 \log^2 k)$ over $\{ V_l \}_{l = 1}^L$.

    Then, putting together \eqref{eq:thmdrf1}, \eqref{eq:thmdrf2}, and \eqref{eq:sensitivity}, and recalling that $k = \Theta(D)$, we get the thesis.
\end{proof}


\section{Proofs for random attention features}\label{app:raf}

In this section, we provide the proofs for our results on the random attention features model.

We consider a single textual data-point $X = [x_1, x_2, ..., x_n]^\top \in \R^{n \times d}$ that satisfies Assumptions \ref{ass:d}. We assume $d / \log^4 d = \Omega(n)$. We consider the random attention features model defined in \eqref{eq:raf}, i.e.,
\begin{equation}
    \varphi_{\textup{RAF}}(X) = \softmax \left( \frac{X W X^\top}{\sqrt{d}} \right) X,
\end{equation}
where $W_{i,j} \distas{}_{\rm i.i.d.}\mathcal{N}(0, 1 / d)$ sampled independently from $X$. We define the argument of the $\softmax$ as $S(X) \in \R^{n \times n}$ given by
\begin{equation}\label{eq:SX}
    S(X) = \frac{X W X^\top}{\sqrt d},
\end{equation}
and its evaluation on the perturbed sample as
\begin{equation}\label{eq:SXD}
    S(X^i(\Delta)) = \frac{X^i(\Delta) W X^i(\Delta)^\top}{\sqrt d}.
\end{equation}
The attention scores are therefore defined as the row-wise $\softmax$ of the previous term, i.e.,
\begin{equation}\label{eq:fromStos}
    [s(X)]_{j:} = \softmax \left(  [S(X)]_{j:} \right),
\end{equation}
which allows us to rewrite \eqref{eq:raf} as
\begin{equation}\label{eq:rafissx}
    \varphi_{\textup{RAF}}(X) = s(X) X.
\end{equation}

Through this section, we will always consider this notation and assumptions, which will therefore not be repeated in the statement of the lemmas before our final result Theorem \ref{thm:RAF}.

The outline of this section is the following:
\begin{enumerate}
    \item In Lemma \ref{lemma:dstar}, we prove the existence of a vector $\delta^* \in \R^d$, such that its inner product with the token embeddings $x_i$'s is large for a constant fraction of the words in the context.
    \item In Lemma \ref{lemma:dstar12}, we show that the result in the previous Lemma can be extended to two different vectors $\delta^*_1$ and $\delta^*_2$, that are very different from each other, i.e., $\norm{\delta^*_1 - \delta^*_2}_2 = \sqrt d$. The reason why we would like this statement to hold on two vectors will be clear in Lemma \ref{lemma:numeratoromega}.
    \item In Lemma \ref{lemma:Dstar}, exploiting the properties of the Gaussian attention features $W$, we show that there exist two different vectors $\Delta^*_1$ and $\Delta^*_2$, that are very different from each other, i.e., $\norm{\Delta^*_1 - \Delta^*_2}_2 = \Omega(\sqrt d)$, such that a constant fraction of the entries of the vector $X W \Delta^*_k /\sqrt d$, $k\in \{1, 2\}$, is large.
    \item In Lemma \ref{lemma:scoresstar}, we prove that, for the two vectors $\Delta^*_1$ and $\Delta^*_2$ and an $\Omega(n)$ number of rows $j \in [n]$, the attention scores $[s(X^i(\Delta^*_{k}))]_{j:}^\top$
    are well approximated by the canonical basis vector $e_i$ for $k\in \{1, 2\}$. This intuitively means that a constant fraction of tokens $x_j$ puts all their attention towards the $i$-th modified token $x_i + \Delta^*_{k}$.
    \item In Lemma \ref{lemma:numeratoromega}, using an argument by contradiction, we prove that at least one between $\Delta^*_1$ and $\Delta^*_2$ is such that $\norm{\varphi_{\textup{RAF}}(X) - \varphi_{\textup{RAF}}(X^i(\Delta^*))}_F = \Omega(\sqrt {dn})$.
    \item In Theorem \ref{thm:RAF}, we upper bound $\norm{\varphi_{\textup{RAF}}(X)}_F$, concluding the argument.  
\end{enumerate}

\begin{lemma}\label{lemma:dstar}
    There exists a vector $\delta^* \in \R^d$, such that 
    \begin{equation}
        \norm{\delta^*}_2 \leq \sqrt d,
    \end{equation}
    and
    \begin{equation}
        \left( x_i^\top \delta^* \right)^2 = \Omega\left( \frac{d^2}{n} \right),
    \end{equation}
    for at least $\Omega(n)$ indices $i \in [n]$.
\end{lemma}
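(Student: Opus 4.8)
The plan is to use the probabilistic method: I would introduce a random vector $\delta$ and show that its squared inner products $(x_j^\top \delta)^2$ are, on average, large enough that a constant fraction of the $n$ tokens must exceed the target threshold $\Omega(d^2/n)$. The natural first choice is $\delta = \sqrt{d}\, g / \norm{g}_2$ where $g$ is a standard Gaussian vector in $\R^d$, which automatically satisfies $\norm{\delta}_2 = \sqrt d$; alternatively one can take $\delta$ uniform on $\sqrt d\, \mathbb S^{d-1}$. For such $\delta$, using $\norm{x_j}_2 = \sqrt d$ from Assumption \ref{ass:d}, one has $\E[(x_j^\top \delta)^2] = d \cdot \E[(x_j^\top g)^2]/\E[\norm g_2^2]$ up to concentration of $\norm g_2^2$ around $d$, and $\E[(x_j^\top g)^2] = \norm{x_j}_2^2 = d$. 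Hence $\E[(x_j^\top \delta)^2] = \Theta(d)$ for every single $j$, and therefore $\E\big[\sum_{j=1}^n (x_j^\top \delta)^2\big] = \Theta(nd)$.

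The slightly delicate point is passing from "the sum is $\Omega(nd)$ in expectation" to "a constant fraction of the terms are each $\Omega(d^2/n)$." Here I would combine two facts. First, a Markov/averaging argument on the sum: since the sum has expectation $\Theta(nd)$, with constant probability over $\delta$ it is at least $c_1 nd$. Second, I need an upper bound that prevents a few huge terms from carrying the whole sum: each individual term satisfies $(x_j^\top\delta)^2 \le \norm{x_j}_2^2 \norm\delta_2^2 = d^2$ deterministically by Cauchy--Schwarz. Now, if the sum is at least $c_1 nd$ and every term is at most $d^2$, then at least $\frac{c_1 nd - (\text{contribution of small terms})}{d^2}$ terms must be "large." Concretely, call an index $j$ good if $(x_j^\top\delta)^2 \ge \tfrac{c_1}{2}\cdot\frac{d^2}{n}$; the bad indices contribute at most $n \cdot \tfrac{c_1}{2}\frac{d^2}{n} = \tfrac{c_1}{2} d^2$ in total (this is where we need $d \gtrsim$ something, but actually it's fine as long as $n = O(d)$, which is guaranteed by the standing assumption $d/\log^4 d = \Omega(n)$), wait — more carefully, bad indices contribute at most $\tfrac{c_1}{2}\frac{d^2}{n}\cdot n = \tfrac{c_1}{2}d^2$, so good indices contribute at least $c_1 nd - \tfrac{c_1}{2}d^2$; since $n = O(d)$ this is not immediately $\Omega(nd)$, so instead I would set the threshold at $\tfrac{c_1}{2}\cdot d$ (matching the per-index expectation scale), giving bad contribution $\le \tfrac{c_1}{2}nd$ and good contribution $\ge \tfrac{c_1}{2}nd$; then since each good term is $\le d^2$, the number of good indices is at least $\tfrac{c_1}{2}nd / d^2 = \tfrac{c_1 n}{2d}$. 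This is $\Omega(n/d)$, not $\Omega(n)$ — so this naive counting is too lossy.

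To recover $\Omega(n)$ good indices with the target $\Omega(d^2/n)$, I would instead argue per-index with concentration. For fixed $j$, $x_j^\top g \sim \mathcal N(0, \norm{x_j}_2^2) = \mathcal N(0,d)$, so $(x_j^\top g)^2 \ge d/4$ with some absolute constant probability $p_0 > 0$ (e.g.\ $p_0 = \Pr(|Z| \ge 1/2)$ for standard normal $Z$), and $\norm g_2^2 \le 2d$ with probability $\ge 1 - e^{-cd}$. On the intersection, $(x_j^\top\delta)^2 = d(x_j^\top g)^2/\norm g_2^2 \ge d \cdot (d/4)/(2d) = d/8$. Thus $\E[\#\{j : (x_j^\top\delta)^2 \ge d/8\}] \ge (p_0 - e^{-cd})\, n = \Omega(n)$, and since this count is a bounded random variable in $[0,n]$, by reverse Markov (Paley--Zygmund or just $\Pr(N \ge \E N/2) \ge \E N/(2n)$... ) — actually I only need existence of one $\delta$ achieving the expectation, which the probabilistic method gives immediately: there exists $\delta^*$ with $\norm{\delta^*}_2 \le \sqrt d$ and $\#\{j : (x_j^\top\delta^*)^2 \ge d/8\} \ge \Omega(n)$. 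Finally, $d/8 = \Omega(d^2/n)$ precisely because $n = \Omega(d)$ would be needed for equality — but the statement only claims $\Omega(d^2/n)$, and since $n \le d \cdot \mathrm{poly}$... hmm, actually $d/8 \ge c\, d^2/n \iff n \ge 8c\, d$; under the standing assumption $n = O(d/\log^4 d) = O(d)$ we have the reverse. So in fact $d/8$ is \emph{larger} than $d^2/n$ when $n \le d$, i.e.\ $d^2/n \ge d$, which would mean $d/8$ does \emph{not} reach $d^2/n$. Let me restate the obstacle: \textbf{the main subtlety is that when $n \ll d$, the target $\Omega(d^2/n)$ is much larger than the per-index typical value $\Theta(d)$}, so one cannot hope for a constant fraction of indices to hit it via a generic Gaussian $\delta$ — instead I would choose $\delta^*$ adapted to the tokens, e.g.\ take $\delta^* = \sqrt d\, \bar x / \norm{\bar x}_2$ where $\bar x = \tfrac1n\sum_j x_j$ or, better, restrict to the $\Omega(n)$-dimensional span of the $x_j$'s and pick $\delta^*$ in the direction of the top singular vector of the token matrix $X^\top$; since $\sum_j \norm{x_j}_2^2 = nd$, the top singular value squared of $X$ is at least $nd/n = d$ only generically but can be forced up to... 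The cleanest route: take $\delta^*$ uniform on the sphere of radius $\sqrt d$ \emph{inside} $\mathrm{row}(X)$ (dimension $r \le \min(n,d)$); then $\E[(x_j^\top\delta^*)^2] = d\norm{x_j}_2^2/r = d^2/r \ge d^2/n$, and each term is still $\le d^2$, so the averaging-plus-boundedness argument from the previous paragraph now yields $\Omega(r \cdot \text{stuff})$... The point is that projecting into the $r$-dimensional row space \emph{amplifies} each inner product from scale $d$ to scale $d^2/r \ge d^2/n$, which is exactly the target. I would then run the per-index concentration argument in the $r$-dimensional space (where $x_j^\top\delta^*$, with $x_j$ lying in that space, behaves like a Gaussian of variance $d^2/r$), conclude $\#\{j : (x_j^\top\delta^*)^2 \ge c\, d^2/r \ge c\, d^2/n\} = \Omega(r)$, and finally note $r = \Omega(n)$ is \emph{not} automatic — but Lemma's conclusion only needs $\Omega(n)$ good indices, and if $r = o(n)$ then many tokens are linearly dependent and a pigeonhole/volume argument gives $\Omega(n)$ tokens clustered near a single direction $\delta^*$ with large inner product anyway. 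The main obstacle, then, is handling the low-rank case $r = \dim\mathrm{row}(X) \ll n$ cleanly; I expect the paper resolves it either by the row-space projection trick above or by a direct greedy/pigeonhole selection of a direction aligned with many tokens.
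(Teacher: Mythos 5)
Your final construction --- a random direction supported on $\mathrm{row}(X)$, scaled to norm at most $\sqrt d$, analyzed by the probabilistic method --- is exactly the paper's proof. The paper takes $\delta_\varepsilon = \sqrt{d/n}\,U_r\varepsilon$ with $U_r$ an orthonormal basis of the row space ($r = \Rank(X)$) and $\varepsilon$ uniform on $\sqrt r\,\mathbb S^{r-1}$, so that $\norm{\delta_\varepsilon}_2 = \sqrt{dr/n}\le\sqrt d$ and, since each $x_i$ lies in that subspace with $\norm{U_r^\top x_i}_2=\norm{x_i}_2=\sqrt d$, every inner product $x_i^\top\delta_\varepsilon$ has second moment exactly $\tfrac dn\norm{U_r^\top x_i}_2^2 = d^2/n$ regardless of $r$. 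Per-index anti-concentration (the paper reduces the spherical marginal to a Gaussian by multiplying by an independent normalized $\chi$-radius, then uses the explicit Gaussian density) shows each index is ``good'' with probability bounded away from $1/2$, and linearity of expectation over the $n$ indices gives existence of a single $\varepsilon$ with $\Omega(n)$ good indices. Your earlier detours (full-dimensional Gaussian $\delta$, sum-plus-boundedness counting) fail for the reasons you yourself identify, so they are not gaps.

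The one genuine error is your closing claim that the row-space argument only yields $\Omega(r)$ good indices, leaving the low-rank case $r = o(n)$ as an unresolved obstacle requiring a separate pigeonhole argument. There is no such obstacle. The anti-concentration statement is applied to each of the $n$ indices separately: every token $x_j$, $j\in[n]$, lies in the row space with full norm $\sqrt d$, so every one of the $n$ random variables $x_j^\top\delta^*$ has variance $\Theta(d^2/n)$ (or $d^2/r \ge d^2/n$ with your radius-$\sqrt d$ normalization) and exceeds the threshold with constant probability. Hence the expected number of good indices is a constant times $n$, not a constant times $r$; small rank only helps, since it increases the per-index variance. The only place $r$ enters is in checking $\norm{\delta^*}_2\le\sqrt d$ (which is where the paper's $\sqrt{d/n}$ scaling and $r\le n$ are used) and in the anti-concentration constant for the $r$-dimensional spherical marginal, which the paper's Gaussianization handles uniformly in $r$.
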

\begin{proof}
    Let $X^\top = U D V^\top$ the singular value decomposition of $X^\top$, where $U \in \R^{d \times d}$, $D \in \R^{d \times n}$, and $V \in \R^{n \times n}$, and let $U_r \in \R^{d \times r}$ be the matrix obtained keeping only the first $r = \Rank(X)$ columns of $U$.
    Let $\delta_\varepsilon$ be defined as follows
    \begin{equation}
        \delta_\varepsilon := \sqrt{\frac{d}{n}} U_r \varepsilon,
    \end{equation}
    where we consider $\varepsilon \in \R^r$ uniformly distributed on the sphere $\sqrt r \, \mathbb S^{r - 1}$ of radius $\sqrt r$.
    By definition we have $U_r^\top U_r = I$, which implies
    \begin{equation}
        \norm{U_r \varepsilon}_2 = \norm{\varepsilon}_2 = \sqrt{r}.
    \end{equation}
    As $r \leq n$, we have that
    \begin{equation}\label{eq:normdeltaeps}
        \norm{\delta_\varepsilon}_2 \leq \sqrt{d},
    \end{equation}
    for every $\varepsilon$.
    Let's call $Z^\varepsilon_i$ the random variable $x_i^\top \delta_\varepsilon$. 
    
    By contradiction, let's assume the thesis to be false. Then, in particular, by \eqref{eq:normdeltaeps}, there is no $\varepsilon$ such that $\delta_\varepsilon$ respects the second part of the thesis. This implies that, for every $\varepsilon$, there are at least $ \left\lceil n / 2 \right\rceil $ values of $i$ such that $\left( Z^\varepsilon_i \right)^2 < 0.01 \, d^2 / n$. Let's define the indicators
    \begin{equation}\label{eq:indicators}
        \chi_i = \begin{cases}
        1, & \text{if } \left( Z_i^\varepsilon \right)^2 < 0.01 \, d^2 / n, \\
        0, & \text{if } \left( Z_i^\varepsilon \right)^2 \geq 0.01 \, d^2 / n.
    \end{cases}
    \end{equation}
    Thus, by contradiction, we have
    \begin{equation}
        \sum \chi_i \geq \left\lceil \frac{n}{2} \right\rceil \geq \frac{n}{2},
    \end{equation}
    for every $\varepsilon$. Thus, by the probabilistic method,
    \begin{equation}
        \frac{n}{2} \leq \E_{\varepsilon} \left[ \sum  \chi_i \right] = \sum \E_{\varepsilon} \left[  \chi_i \right] = n \, \E_{\varepsilon} \left[  \chi_1 \right] = n \, \P_\varepsilon\left( \left( Z^\varepsilon_1 \right)^2 \leq 0.01\, d^2 / n \right),
    \end{equation}
    where the second equality is true as the $Z^\varepsilon_i$ are identically distributed, and the last step comes from the definition of the indicators in \eqref{eq:indicators}. This implies, for every $i$,
    \begin{equation}\label{eq:contradiction1}
        \P_\varepsilon\left( \left( Z^\varepsilon_i \right)^2 \leq 0.01\, d^2 / n \right) \geq 0.5.
    \end{equation}
    
    Let $\rho \in \R^r$ be a standard Gaussian vector 
    and 
    define $\rho_r = \norm{\rho}_2 / \sqrt{r}$. We have that $\E[\rho_r^2] = 1$ and $\textup{Var}(\rho_r^2) = 1 / r$. Thus, for every $r$, by Chebyshev's inequality, we can write
    \begin{equation}\label{eq:contradiction2}
        \P_{\rho_r}\left( \rho_r^2 \leq 3 \right) \geq 0.75 \geq 0.5.
    \end{equation}

    By rotational invariance of the Gaussian measure, we have that $\rho_r \varepsilon$ is distributed as a standard Gaussian vector. In particular, we have
    \begin{equation}
        \rho_r Z_i^\varepsilon = \rho_r x_i^\top \delta_\varepsilon = \left( \sqrt{\frac{d}{n}} U_r^\top x_i\right)^\top \left( \rho_r \varepsilon \right).
    \end{equation}
    As $U_r^\top x_i$ is a fixed vector independent from $\varepsilon$ and $\rho_r$, we have that $\rho_r Z_i^\varepsilon$ is a Gaussian random variable (in the probability space of $\varepsilon$ and $\rho_r$), with variance $d \norm{U_r^\top x_i}_2^2 / n = d^2 / n$, as $x_i \in \Span \{ \Rows (U_r^\top) \}$ by construction.
    Thus, $\sqrt{n} \rho_r Z^\varepsilon_i / d$ is distributed as a standard Gaussian random variable $g_i$.
    
    Therefore, putting together \eqref{eq:contradiction1} and \eqref{eq:contradiction2} gives
    \begin{equation}
        \P_{g_i} \left(  g_i^2 \leq 0.03 \right) \geq 0.25.
    \end{equation}
    However, we can upper-bound the left hand side of the previous equation exploiting the closed form of the pdf of a standard Gaussian random variable. This gives
    \begin{equation}
        \P_{g_i} \left(  |g_i| \leq \sqrt{0.03} \right) < \frac{1}{\sqrt{2 \pi}} \cdot 2 \cdot \sqrt{0.03} < 0.14 < 0.25,
    \end{equation}
    which leads to the desired contradiction.
\end{proof}

\begin{lemma}\label{lemma:dstar12}
    There exist two vectors $\delta^*_1 \in \R^d$ and $\delta^*_2 \in \R^d$ such that 
    \begin{equation}
        \norm{\delta^*_1}_2 \leq \sqrt d, \qquad \norm{\delta^*_2}_2 \leq \sqrt d, \qquad \norm{\delta^*_1 - \delta^*_2}_2 = \sqrt d, 
    \end{equation}
    and $x_i^\top \delta^*_1 > 0$, $x_i^\top \delta^*_2 > 0$, and
    \begin{equation}
        x_i^\top \delta^*_1 = \Omega\left( \sqrt d \log^2 d \right),
    \end{equation}
    \begin{equation}
        x_i^\top \delta^*_2= \Omega\left( \sqrt d \log^2d \right),
    \end{equation}
    for at least $\Omega(n)$ indices $i \in [n]$.
\end{lemma}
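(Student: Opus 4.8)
The plan is to bootstrap from Lemma~\ref{lemma:dstar}, whose output $\delta^*$ already has the right order of alignment with a constant fraction of the tokens; the remaining work is to (i) upgrade $\Omega(d^2/n)$ to the stated $\Omega(\sqrt d\log^2 d)$ and fix a sign, and (ii) turn one good vector into two vectors that are \emph{far apart} without disturbing any inner product $x_i^\top\delta$.

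First I would apply Lemma~\ref{lemma:dstar} to get $\delta^*$ with $\norm{\delta^*}_2\le\sqrt d$ and $(x_i^\top\delta^*)^2=\Omega(d^2/n)$ for $i$ in a set $T_0$ with $|T_0|=\Omega(n)$, and set $\bar\delta:=\delta^*/2$, so that $\norm{\bar\delta}_2\le\sqrt d/2$ while $(x_i^\top\bar\delta)^2=\Omega(d^2/n)$ still holds. Since $d/\log^4 d=\Omega(n)$ forces $d^2/n=\Omega(d\log^4 d)$, this gives $|x_i^\top\bar\delta|=\Omega(\sqrt d\log^2 d)$ for every $i\in T_0$; in particular none of these inner products vanishes. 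Splitting $T_0$ according to the sign of $x_i^\top\bar\delta$, one part has size at least $|T_0|/2=\Omega(n)$, and after replacing $\bar\delta$ by $-\bar\delta$ if needed we may take it to be the positive one; call it $T$, so $x_i^\top\bar\delta=\Omega(\sqrt d\log^2 d)>0$ for all $i\in T$, with $|T|=\Omega(n)$.

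Next I would use that for $d$ large the assumption $n=O(d/\log^4 d)$ gives $n<d$, hence $\rank(X)\le n<d$, so the orthogonal complement of $\row(X)=\Span\{x_1,\dots,x_n\}$ in $\R^d$ is nontrivial: pick a unit vector $w$ with $x_i^\top w=0$ for all $i\in[n]$, and define $\delta^*_1:=\bar\delta+\tfrac{\sqrt d}{2}w$ and $\delta^*_2:=\bar\delta-\tfrac{\sqrt d}{2}w$. Because $\bar\delta$ is a multiple of $U_r\varepsilon\in\col(U_r)=\col(X^\top)=\row(X)$, it is orthogonal to $w$, so by Pythagoras $\norm{\delta^*_k}_2^2=\norm{\bar\delta}_2^2+d/4\le d/2\le d$, while $\norm{\delta^*_1-\delta^*_2}_2=\sqrt d\,\norm{w}_2=\sqrt d$. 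Finally $x_i^\top\delta^*_1=x_i^\top\delta^*_2=x_i^\top\bar\delta$ for every $i$ (the $w$-term is annihilated), so for $i\in T$ both inner products equal $x_i^\top\bar\delta=\Omega(\sqrt d\log^2 d)>0$, which is exactly the conclusion.

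The argument is routine once this picture is in place; the one step carrying content is the observation that, since the context is strictly shorter than the embedding dimension, there is a direction orthogonal to \emph{every} token along which $\delta^*$ can be perturbed without changing any alignment, and this is what converts a single good vector into two far-apart good vectors. The only care needed is the bookkeeping of constants: shrinking $\delta^*$ by $1/2$ must leave enough room to add the length-$\sqrt d/2$ component of $w$ and still satisfy $\norm{\delta^*_k}_2\le\sqrt d$, and the passage from $\Omega(d^2/n)$ to $\Omega(\sqrt d\log^2 d)$ must invoke $d/\log^4 d=\Omega(n)$ in the correct direction.
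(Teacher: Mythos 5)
Your proposal is correct and follows essentially the same route as the paper: rescale the output of Lemma \ref{lemma:dstar} using $d/\log^4 d=\Omega(n)$ to get the $\Omega(\sqrt d\log^2 d)$ alignment, then shift $\delta^*/2$ by $\pm\frac{1}{2}$ of a length-$\sqrt d$ vector orthogonal to $\row(X)$ (which exists since $n<d$) to produce two far-apart vectors with unchanged inner products. Your explicit pigeonhole on the sign of $x_i^\top\bar\delta$ is in fact slightly more careful than the paper's ``up to replacing $\delta^*$ by $-\delta^*$,'' and your Pythagoras bound on $\norm{\delta^*_k}_2$ is interchangeable with the paper's triangle inequality.
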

\begin{proof}
    By Lemma \ref{lemma:dstar}, we have that there exists a vector $\delta^*$, with $\norm{\delta^*}_2 \leq \sqrt d$ such that
    \begin{equation}
        \left( x_i^\top \delta^* \right)^2 = \Omega\left( \frac{d^2}{n} \right),
    \end{equation}
    for at least $\Omega(n)$ indices $i \in [n]$. This also implies (up to considering $-\delta^*$ instead of $\delta^*$) that 
    \begin{equation}
        x_i^\top \delta^* = \Omega\left( \frac{d}{\sqrt n} \right) = \Omega \left( \sqrt d \log^2 d\right),
    \end{equation}
    where the last step is justified by $d / \log^4 d = \Omega(n)$.
    
    Let's now define
    \begin{equation}
        \delta^*_1 = \frac{\delta^*}{2} + \frac{v}{2}, 
    \end{equation}
    and
    \begin{equation}
        \delta^*_2 = \frac{\delta^*}{2} - \frac{v}{2}, 
    \end{equation}
    where $v \in \R^d$ is a generic fixed vector such that $\norm{v}_2 = \sqrt d$, and $\norm{X v}_2 = 0$, i.e. $v \in \Span \{ \Rows (X) \}  ^\perp$. This is again possible because of $d / \log^4 d = \Omega(n)$. As $\norm{\delta^*}_2 \leq \sqrt{d}$ by Lemma \ref{lemma:dstar}, the first part of the thesis follows from a straightforward application of the triangle inequality.

    Since $v^\top x_i = 0$ for every $1 \leq i \leq n$, we also have
    \begin{equation}
        x_i^\top \delta^*_1 = x_i^\top \delta^*_2 = x_i^\top \delta^* / 2,
    \end{equation}
    which readily gives the desired result.
\end{proof}

\begin{lemma}\label{lemma:Dstar}
    Let's define $\sigma(\Delta) \in \R^n$ as
    \begin{equation}
        \sigma(\Delta) := \frac{X W \Delta}{\sqrt d},
    \end{equation}
    where $\Delta \in \R^d$. Then, with probability at least $1 - \exp(-c \log^2 d)$ over $W$, there exist $\Delta^*_1$ and $\Delta^*_2$ such that,
    \begin{equation}
        \norm{\Delta^*_1}_2 \leq \sqrt d, \qquad \norm{\Delta^*_2}_2 \leq \sqrt d, \qquad \norm{\Delta^*_1 - \Delta^*_2}_2 = \Omega(\sqrt{d}), 
    \end{equation}
    and 
    \begin{equation}
        \sigma(\Delta^*_1)_i = \Omega \left( \log^2 d \right).
    \end{equation}
    \begin{equation}
        \sigma(\Delta^*_2)_i = \Omega \left( \log^2 d \right).
    \end{equation}
    for at least $\Omega(n)$ indices $i \in [n]$.
\end{lemma}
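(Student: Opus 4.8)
The plan is to reduce to Lemma~\ref{lemma:dstar12} by ``inverting'' $W$ on the low-dimensional subspace that actually matters, and to obtain the two far-apart perturbations essentially for free from the large kernel of a wide Gaussian compression of $W$. Write $\sigma(\Delta)_i = x_i^\top W\Delta/\sqrt d$. Following the proof of Lemma~\ref{lemma:dstar}, let $U_r\in\R^{d\times r}$ be an orthonormal basis of $\Rows(X)$, with $r=\Rank(X)\le n$; since each $x_i$ is a row of $X$, we have $x_i = U_r a_i$ with $a_i := U_r^\top x_i$ and $\norm{a_i}_2=\sqrt d$. Set $\tilde W := U_r^\top W\in\R^{r\times d}$, so that $x_i^\top W\Delta = a_i^\top \tilde W\Delta$ and, crucially, $XW\Delta=0$ if and only if $\tilde W\Delta=0$. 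Let $\delta^*_1$ be the vector supplied by Lemma~\ref{lemma:dstar12} (it depends only on $X$), and let $\mathcal G\subseteq[n]$ with $|\mathcal G|=\Omega(n)$ be the set of indices where $x_i^\top\delta^*_1$ is positive and of order $\sqrt d\log^2 d$. The point is that if we can find $\Delta$ of norm $\bigO{\sqrt d}$ with $\tilde W\Delta = U_r^\top\delta^*_1$, then $x_i^\top W\Delta = a_i^\top U_r^\top\delta^*_1 = x_i^\top\delta^*_1$, hence $\sigma(\Delta)_i=\Omega(\log^2 d)$ for $i\in\mathcal G$.

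Since $W$ is independent of $X$, conditioning on $X$ (hence on $U_r$ and $\delta^*_1$) and using rotational invariance, $\tilde W$ has i.i.d.\ $\mathcal N(0,1/d)$ entries. The hypothesis $d/\log^4 d = \Omega(n)$ gives $r\le n = o(d)$, so $\tilde W$ is a very wide Gaussian matrix; by standard concentration of the extreme singular values of Gaussian matrices applied to $\sqrt d\,\tilde W^\top\in\R^{d\times r}$ (with deviation $\log^2 d$; see \cite{vershynin2018high}), $\svmin{\tilde W}\ge 1/2$ with probability at least $1-\exp(-c\log^2 d)$ over $W$. I condition on this event. Then $\tilde W$ has full row rank, $\tilde W^+ = \tilde W^\top(\tilde W\tilde W^\top)^{-1}$ with $\opnorm{\tilde W^+}\le 2$ and $\tilde W\tilde W^+=I_r$. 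Set $\Delta_0 := \frac14\,\tilde W^+ U_r^\top\delta^*_1$; then $\norm{\Delta_0}_2 \le \frac14\opnorm{\tilde W^+}\norm{\delta^*_1}_2\le\sqrt d/2$, and since $U_r a_i = x_i$,
\[
  x_i^\top W\Delta_0 = a_i^\top\tilde W\Delta_0 = \tfrac14\,a_i^\top U_r^\top\delta^*_1 = \tfrac14\,x_i^\top\delta^*_1,
\]
so $\sigma(\Delta_0)_i = \frac14 x_i^\top\delta^*_1/\sqrt d = \Omega(\log^2 d)$ for every $i\in\mathcal G$. (The constant $\tfrac14$ is immaterial; any fixed constant leaving room for the shift below works.)

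To produce two perturbations far apart while keeping $\sigma$ unchanged, note $\dim\ker\tilde W = d-r\ge 1$ (as $r\le n<d$); pick a unit vector $\hat w\in\ker\tilde W$ and set $w := (\sqrt d/2)\hat w$, so $\tilde W w = 0$ and therefore $XWw=0$, i.e.\ $x_i^\top W w = 0$ for all $i$. Take $\Delta^*_1 := \Delta_0 + w$ and $\Delta^*_2 := \Delta_0 - w$. Then $\norm{\Delta^*_k}_2\le\norm{\Delta_0}_2+\norm{w}_2\le\sqrt d$, $\norm{\Delta^*_1-\Delta^*_2}_2 = 2\norm{w}_2 = \sqrt d = \Omega(\sqrt d)$, and $\sigma(\Delta^*_k)_i = \sigma(\Delta_0)_i = \Omega(\log^2 d)$ for all $i\in\mathcal G$ and both $k\in\{1,2\}$, which is exactly the assertion (note that the common index set $\mathcal G$ makes the ``same set'' ambiguity in Lemma~\ref{lemma:dstar12} irrelevant here).

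The only probabilistic ingredient is the conditioning step, and the heart of the matter is the bound $\svmin{\tilde W}=\Omega(1)$: compressing the Gaussian $W$ onto the $r$-dimensional row space of $X$ leaves a well-conditioned operator. This is precisely where $d/\log^4 d=\Omega(n)$ is used, to force $r=o(d)$. Everything else is deterministic linear algebra; in particular producing ``two far-apart vectors'' is not delicate, since it is supplied by the large kernel of the wide matrix $\tilde W$, so that only the positivity/magnitude content of Lemma~\ref{lemma:dstar12} — not its two-vector structure — is actually needed.
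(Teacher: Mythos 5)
Your proposal is correct, and it takes a genuinely different route from the paper. The paper's proof sets $\Delta = C\,W^\top\delta$ for $\delta\in\{\delta^*_1,\delta^*_2\}$ from Lemma~\ref{lemma:dstar12}, decomposes $\delta$ into its component along $x_i$ and an orthogonal part, and controls the resulting terms via Gaussian concentration: $\norm{W^\top x_i}_2^2=\Omega(d)$ gives the signal, the cross term $\abs{x_i^\top WW^\top\delta_i^\perp}=\bigO{\sqrt d\log d}$ is absorbed as an error, and the separation $\norm{\Delta^*_1-\Delta^*_2}_2=\Omega(\sqrt d)$ is inherited from $\norm{\delta^*_1-\delta^*_2}_2=\sqrt d$ because $W^\top$ acts as an approximate isometry. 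You instead compress $W$ onto the row space of $X$, observe that $\tilde W=U_r^\top W$ is a wide i.i.d.\ Gaussian matrix (here rotational invariance and the independence of $W$ from $X$ are used exactly as in the paper, just packaged differently), and exploit $s_{\min}(\tilde W)=\Omega(1)$ — which is where $r\le n=o(d)$ enters — to pull back $U_r^\top\delta^*_1$ exactly, with no error term to control; the two far-apart perturbations then come for free from the $(d-r)$-dimensional kernel of $\tilde W$ rather than from the two-vector structure of Lemma~\ref{lemma:dstar12}. What your approach buys: the identity $\sigma(\Delta_0)_i=\tfrac14 x_i^\top\delta^*_1/\sqrt d$ is exact rather than approximate, only the one-vector content of Lemma~\ref{lemma:dstar} (suitably rescaled) is needed, and the $\Omega(n)$ index set is literally the same for $\Delta^*_1$ and $\Delta^*_2$, which slightly tightens the "same set of indices" bookkeeping that the downstream contradiction argument in Lemma~\ref{lemma:numeratoromega} implicitly relies on. What the paper's approach buys is robustness of the template $\Delta=CW^\top\delta$ to the re-parameterization discussed in Remark~\ref{rem:rep} (replacing $W$ by $W_Q^\top W_K$), where an exact pseudo-inverse construction would be slightly less transparent. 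All the individual steps you use (i.i.d.\ Gaussianity of $\tilde W$, the singular-value bound for the $d\times r$ tall transpose, the norm bounds $\norm{\Delta_0}_2\le\sqrt d/2$ and $\norm{\Delta^*_k}_2\le\sqrt d$, and the exact vanishing of $x_i^\top Ww$ for $w\in\ker\tilde W$) check out, and the failure probability $\exp(-cd)$ of your conditioning event is well within the claimed $\exp(-c\log^2 d)$.
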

\begin{proof}
    Let's set
    \begin{equation}
        \Delta = C W^\top \delta,
    \end{equation}
    where $\delta \in \R^d$ is independent from $W$, and $C$ is an absolute constant that will be fixed later in the proof. For every $i$, we can write
    \begin{equation}
        \delta = \frac{\delta^\top x_i}{\norm{x_i}_2^2} x_i + \delta_i^\perp,
    \end{equation}
    with $\delta_i^\perp$ being orthogonal with $x_i$ by construction. Let's for now suppose $\norm{\delta_i^\perp}_2 \neq 0$. Let $a(\delta)_i := \delta^\top x_i$.
    We have
    \begin{equation}
    \begin{aligned}
        \sigma(\Delta)_i &\geq C \frac{a(\delta)_i}{ \sqrt d \norm{x_i}_2^2}\norm{W^\top x_i}_2^2 - C \left| \frac{x_i^\top W W^\top \delta_i^\perp}{\sqrt d} \right| \\
        & = C  \frac{a(\delta)_i}{ \sqrt d \norm{x_i}_2^2} \norm{W^\top x_i}_2^2 - C \frac{\norm{\delta_i^\perp}_2}{d} \left| x_i^\top W W^\top \frac{\sqrt d \,\delta_i^\perp}{\norm{\delta_i^\perp}_2} \right|.
    \end{aligned}
    \end{equation}
    In the probability space of $W$, as the entries of $W$ are i.i.d. and Gaussian $\mathcal N (0, 1/d)$, we have that $W^\top x_i$ and $W^\top \frac{\sqrt d \,\delta_i^\perp}{\norm{\delta_i^\perp}_2}$ are two independent standard Gaussian vectors, namely, $\rho_1$ and $\rho_2$. This implies
    \begin{equation}\label{eq:highpDstar}
        \norm{W^\top x_i}_2^2 = \Omega(d), \qquad \left |x_i^\top W W^\top \frac{\sqrt d \,\delta_i^\perp}{\norm{\delta_i^\perp}_2} \right| =: \left| \rho_1^\top \rho_2 \right| = \bigO{\sqrt d \log d},
    \end{equation}
    with probability at least $1 - \exp (- c_1 \log^2 d)$, over the probability space of $W$. The first statement holds because of Theorem 3.1.1. of \cite{vershynin2018high}, and the second one because, in the probability space of $\rho_1$, $\rho_1^\top \rho_2$ is a Gaussian random variable with variance $\norm{\rho_2}_2^2$, which in turn is $\bigO{d}$ with probability at least $1 - \exp (- c_2 d)$ over $\rho_2$. Using $\norm{x_i}_2^2 = d$, 
    we get, for two positive absolute constants $C_1$ and $C_2$,
    \begin{equation}
        \sigma(\Delta)_i \geq C_1 \frac{ a(\delta)_i }{\sqrt d} - C_2 \frac{\norm{\delta^\perp_i}_2 \log d}{\sqrt d} \geq C_1 \frac{ a(\delta)_i }{\sqrt d} - C_2 \frac{\norm{\delta}_2 \log d}{\sqrt d}.
    \end{equation}
    Notice that the previous equation would still hold even in the case $\norm{\delta_i^\perp}_2 = 0$, which is therefore a case now included in our derivation.
    
    By Lemma \ref{lemma:dstar12}, we can choose two vectors $\delta^*_1$ and $\delta^*_2$, independently from $W$ such that, for $k\in\{1, 2\}$, $\norm{\delta^*_{k}}_2 \leq \sqrt d$ and $a(\delta^*_{k})_i  = \Omega(\sqrt d \log^2 d)$ for $\Omega(n)$ indices $i \in [n]$. This gives, for another absolute positive constant $C_3$,  
    \begin{equation}
        \sigma(\Delta^*_1)_i \geq C_3 \log^2 d - C_2 \log d = \Omega (\log^2 d),
    \end{equation}
    \begin{equation}
        \sigma(\Delta^*_2)_i \geq C_3 \log^2 d - C_2 \log d  = \Omega(\log^2 d),
    \end{equation}
    for $\Omega(n)$ indices $i \in [n]$.

    Let's now verify that $\norm{\Delta^*_1 - \Delta^*_2}_2 = C \norm{W^\top (\delta^*_1 - \delta^*_2)}_2 = \Omega(\sqrt{d})$. As $\delta^*_1 - \delta^*_2$ is independent on $W$, and $\norm{\delta^*_1 - \delta^*_2}_2 = \sqrt d$, we have that $W^\top (\delta^*_1 - \delta^*_2)$ is a standard Gaussian random vector, in the probability space of $W$. Thus, by Theorem 3.3.1 of \cite{vershynin2018high}, we have
    \begin{equation}\label{eq:bignormDstar12}
        \norm{\Delta^*_1 - \Delta^*_2}_2 = C \norm{W^\top (\delta^*_1 - \delta^*_2)}_2 = \Omega(\sqrt{d}),
    \end{equation}
    with probability at least $1 - \exp(-c_3 d)$, on the probability space of $W$.
    
    We are left to verify that, for $k\in \{1, 2\}$, $\norm{\Delta^*_{k}}_2 = C \norm{W^\top \delta_k^*}_2 \leq \sqrt d$. This is readily implied by Theorem 4.4.5 of \cite{vershynin2018high}, which gives $\opnorm{W} = \bigO{1}$ with probability at least $1 - \exp (-c_4 d)$. Taking the intersection between this high probability event and the ones in \eqref{eq:highpDstar} and \eqref{eq:bignormDstar12}, and setting $C$ small enough to have $C \opnorm{W} \leq 1$, we get the thesis.
\end{proof}

\begin{lemma}\label{lemma:scoresstar}
    For every $i \in [n]$, with probability at least $1 - \exp(-c \log^2 d)$ over $W$, there exist $\Delta^*_1$ and $\Delta^*_2$ such that,
    \begin{equation}
        \norm{\Delta^*_1}_2 \leq \sqrt d, \qquad \norm{\Delta^*_2}_2 \leq \sqrt d, \qquad \norm{\Delta^*_1 - \Delta^*_2}_2 = \Omega(\sqrt{d}), 
    \end{equation}
    and
    \begin{equation}
        \norm{[s(X^i(\Delta^*_1))]_{j:}^\top - e_i}_2 = o\left( \frac{1}{\sqrt d}\right),
    \end{equation}
    \begin{equation}
        \norm{[s(X^i(\Delta^*_2))]_{j:}^\top - e_i}_2 = o\left( \frac{1}{\sqrt d}\right),
    \end{equation}
    for at least $\Omega(n)$ indices $j \in [n]$, where $e_i$ represents the $i$-th element of the canonical basis in $\R^n$.
\end{lemma}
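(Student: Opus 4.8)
The plan is to reduce the claim to a one-line identity describing how a single-token perturbation changes the score matrix, and then to exploit the $\softmax$. Writing the perturbed sample as $X^i(\Delta) = X + e_i \Delta^\top$ with $e_i \in \R^n$ the $i$-th canonical vector, a direct expansion of $X^i(\Delta) W X^i(\Delta)^\top$ gives
\begin{equation}
S(X^i(\Delta)) = S(X) + \sigma(\Delta)\, e_i^\top + e_i \frac{\Delta^\top W X^\top}{\sqrt d} + \frac{\Delta^\top W \Delta}{\sqrt d}\, e_i e_i^\top,
\end{equation}
where $\sigma(\Delta) = XW\Delta/\sqrt d$ is exactly the vector studied in Lemma \ref{lemma:Dstar}. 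The last two terms only touch the $i$-th row, so for every $j \neq i$ the $j$-th row obeys the clean identity $[S(X^i(\Delta))]_{j:} = [S(X)]_{j:} + \sigma(\Delta)_j\, e_i^\top$: perturbing token $i$ merely adds $\sigma(\Delta)_j$ to the $i$-th coordinate of row $j$ of the scores.

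First I would take $\Delta^*_1, \Delta^*_2$ from Lemma \ref{lemma:Dstar}, which on an event of probability at least $1 - \exp(-c\log^2 d)$ over $W$ have the required norm and separation bounds and satisfy $\sigma(\Delta^*_k)_j = \Omega(\log^2 d)$ for at least $\Omega(n)$ indices $j$, $k\in\{1,2\}$; the norm and separation parts of the present statement are then inherited verbatim. Next I would control the other entries of $S(X)$: each $[S(X)]_{j\ell} = x_j^\top W x_\ell/\sqrt d$ is marginally $\mathcal N(0,1)$ by Assumption \ref{ass:d}, and there are at most $n^2 \le d^2$ of them (the hypothesis $d/\log^4 d = \Omega(n)$ forces $n \le d$), so the maximal inequality for possibly dependent Gaussians recalled in Appendix \ref{app:notation} gives $M_0 := \max_{j,\ell}|[S(X)]_{j\ell}| \le 2\log d$ with probability at least $1 - \exp(-c\log^2 d)$. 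I would then intersect these two events and argue deterministically; note that no independence between $\Delta^*_k$ and $W$ is needed, since Lemma \ref{lemma:Dstar} already absorbed it.

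On that event, fix $k$ and one of the good indices $j$; discarding the single index $j=i$ still leaves $\Omega(n)$ of them. With $p := [s(X^i(\Delta^*_k))]_{j:}^\top$, the $\softmax$ formula together with the row identity gives
\begin{equation}
1 - p_i = \frac{\sum_{\ell \neq i} e^{[S(X)]_{j\ell}}}{e^{[S(X)]_{ji} + \sigma(\Delta^*_k)_j} + \sum_{\ell \neq i} e^{[S(X)]_{j\ell}}} \le (n-1)\, e^{2M_0 - \sigma(\Delta^*_k)_j} \le e^{C\log d - c'\log^2 d},
\end{equation}
which is $o(1/\sqrt d)$ for $d$ large. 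Since $p$ is a probability vector, $\norm{p - e_i}_2^2 = (1-p_i)^2 + \sum_{\ell\ne i} p_\ell^2 \le 2(1-p_i)^2$, so $\norm{p - e_i}_2 \le \sqrt 2\,(1-p_i) = o(1/\sqrt d)$, and a union bound over the two high-probability events finishes the argument for both $\Delta^*_1$ and $\Delta^*_2$.

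I expect the only genuinely delicate step to be the uniform bound on $M_0 = \max_{j,\ell}|[S(X)]_{j\ell}|$: the entries of $S(X)$ are not independent (they all share the same Gaussian matrix $W$), so one cannot apply Bernstein or Hanson--Wright directly and must instead rely on the maximal inequality for correlated Gaussians, which is precisely what forces the $\log^2 d$ rate in the failure exponent (matching the probability stated in the lemma). Everything else is bookkeeping once the identity $[S(X^i(\Delta))]_{j:} = [S(X)]_{j:} + \sigma(\Delta)_j e_i^\top$ and Lemma \ref{lemma:Dstar} are in hand.
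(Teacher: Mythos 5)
Your proposal is correct and follows essentially the same route as the paper's proof: the same expansion $[S(X^i(\Delta))]_{j:} = [S(X)]_{j:} + \sigma(\Delta)_j e_i^\top$ for $j\neq i$, the same invocation of Lemma \ref{lemma:Dstar} to get $\sigma(\Delta^*_k)_j=\Omega(\log^2 d)$ on $\Omega(n)$ rows, the same maximal-inequality bound of order $\log d$ on the unperturbed score entries, and the same $\softmax$ concentration yielding a superpolynomially small $1-p_i$. Your final bound $\norm{p-e_i}_2^2\le 2(1-p_i)^2$ is a marginally cleaner piece of bookkeeping than the paper's entrywise summation, but it is not a different argument.
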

\begin{proof}
    As we can write $X^i(\Delta) = X + e_i \Delta^\top$, we can rewrite \eqref{eq:SXD} as
    \begin{equation}
        \sqrt d \, S(X^i(\Delta)) = X^i(\Delta) W X^i(\Delta)^\top = X W X^\top + X W \Delta e_i^\top + e_i \Delta^\top W X + e_i \Delta^\top W \Delta e_i^\top.
    \end{equation}
    Let's look at the $j$-th row of $S(X^i(\Delta))$, with $j \neq i$. We have
    \begin{equation}\label{eq:presoftrow}
        [S(X^i(\Delta))]_{j:} = \frac{x_j^\top W X^\top}{\sqrt d} + \frac{x_j^\top W \Delta e_i^\top}{\sqrt d}.
    \end{equation}
    With probability at least $1 - \exp(-c_1 \log^2 d)$ over $W$, we have that all the $n$ entries of the vector $x_j^\top W X^\top / \sqrt d$ are smaller than $\log d$, as they are all standard Gaussian random variables. By Lemma \ref{lemma:Dstar}, we have that there exist $\Delta_1^*$ and $\Delta_2^*$ such that $\norm{\Delta^*_1}_2 \leq \sqrt d$, $\norm{\Delta^*_2}_2 \leq \sqrt d$, $\norm{\Delta^*_1 - \Delta^*_2}_2 = \Omega(\sqrt{d})$ and 
    \begin{equation}\label{eq:bdbreak}
        \frac{x_j^\top W \Delta^*_{k}}{\sqrt d} = \Omega (\log^2 d),\qquad k\in \{1, 2\},
    \end{equation}
    for $\Omega(n)$ indices $j \in [n]$, with probability at least $1 - \exp(-c_2 \log^2 d)$ over $W$.
    Let's consider this set of indices until the end of the proof, where we have that, for $k\in\{1, 2\}$, $[S(X^i(\Delta^*_{k}))]_{ji} = \Omega (\log^2 d)$.
    
    After applying the $\softmax$ function to the row in \eqref{eq:presoftrow} as indicated in \eqref{eq:fromStos}, we get, for every index $k \in [n]$ and $k \neq i$
    \begin{equation}
    \begin{aligned}
        [s(X^i(\Delta^*_1))]_{jk} &= \softmax \left(  [S(X^i(\Delta^*_1))]_{j:} \right)_k \\
        &= \frac{\exp \left( [S(X^i(\Delta^*_1))]_{jk} \right) }{\sum_l \exp \left( [S(X^i(\Delta^*_1))]_{jl} \right) } \\
        &\leq   \frac{ d }{\exp \left( [S(X^i(\Delta^*_1))]_{ji}  \right)} \\
        &\leq   \frac{ d }{C d ^{\log d}} = o \left( \frac{1}{d} \right),
    \end{aligned}
    \end{equation}
    and
    \begin{equation}
    \begin{aligned}
        [s(X^i(\Delta^*_1))]_{ji} &= \softmax \left(  [S(X^i(\Delta^*_1))]_{j:} \right)_i \\
        & = 1 - \frac{\sum_{l\neq i} \exp \left( [S(X^i(\Delta^*_1))]_{jl} \right) }{\sum_l \exp \left( [S(X^i(\Delta^*_1))]_{jl} \right) } \\
        & \geq 1 - \frac{ nd }{\exp \left( [S(X^i(\Delta^*_1))]_{ji}  \right)} \\
        & \geq 1 -  \frac{ nd }{C d ^{\log d}} = 1 - o \left( \frac{1}{d} \right),
    \end{aligned}
    \end{equation}
    where the last step is justified by the fact that $d / \log^4 d = \Omega(n)$.

    Thus, for the same reason, we have
    \begin{equation}\label{eq:smallnormscoreres}
        \norm{[s(X^i(\Delta^*_1))]_{j:}^\top - e_i}_2^2 = \sum_{k \neq i} [s(X^i(\Delta^*_1))]_{jk}^2 + \left(1 - [s(X^i(\Delta^*_1))]_{ji}\right) ^2 = n o \left( \frac{1} {d^2}\right) = o \left( \frac{1} {d}\right).
    \end{equation}

    As \eqref{eq:smallnormscoreres} can be written also with respect to $\Delta^*_2$, the thesis readily follows. 
\end{proof}

\begin{lemma}\label{lemma:numeratoromega}

For every $i \in [n]$, with probability at least $1 - \exp(-c \log^2 d)$ over $W$, there exist $\Delta^*$ such that $\norm{\Delta^*}_2 \leq \sqrt d$, and
\begin{equation}
    \norm{\varphi_{\textup{RAF}}(X) - \varphi_{\textup{RAF}}(X^i(\Delta^*))}_F = \Omega(\sqrt {dn}).
\end{equation}

\end{lemma}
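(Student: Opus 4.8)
The plan is to feed the two candidate perturbations $\Delta^*_1,\Delta^*_2$ supplied by Lemma~\ref{lemma:scoresstar} into an argument by contradiction: I will show that $\varphi_{\textup{RAF}}(X^i(\Delta^*_1))$ and $\varphi_{\textup{RAF}}(X^i(\Delta^*_2))$ lie at Frobenius distance $\Omega(\sqrt{dn})$ from each other, so by the triangle inequality at least one of them must be at distance $\Omega(\sqrt{dn})$ from $\varphi_{\textup{RAF}}(X)$; that one gives the desired $\Delta^*$ (which may depend on $i$).

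First I would condition on the event of Lemma~\ref{lemma:scoresstar}, which has probability at least $1-\exp(-c\log^2 d)$ over $W$ and furnishes $\Delta^*_1,\Delta^*_2$ with $\norm{\Delta^*_k}_2\le\sqrt d$, $\norm{\Delta^*_1-\Delta^*_2}_2=\Omega(\sqrt d)$, and a common set $\mathcal G\subseteq[n]$ of size $\Omega(n)$ on which $\norm{[s(X^i(\Delta^*_k))]_{j:}^\top-e_i}_2=o(1/\sqrt d)$ for $k\in\{1,2\}$. Using $\varphi_{\textup{RAF}}(X^i(\Delta))=s(X^i(\Delta))X^i(\Delta)$ from \eqref{eq:rafissx} together with $e_i^\top X^i(\Delta)=(x_i+\Delta)^\top$, for each $j\in\mathcal G$ the $j$-th row of $\varphi_{\textup{RAF}}(X^i(\Delta^*_k))$ equals $(x_i+\Delta^*_k)^\top+r_{j,k}^\top$, where $\norm{r_{j,k}}_2\le\norm{[s(X^i(\Delta^*_k))]_{j:}^\top-e_i}_2\cdot\opnorm{X^i(\Delta^*_k)}$. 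Since $\opnorm{X^i(\Delta^*_k)}\le\norm{X}_F+\norm{\Delta^*_k}_2=O(\sqrt{dn})$ by Assumption~\ref{ass:d}, this gives $\norm{r_{j,k}}_2=o(\sqrt n)$. Because $d/\log^4 d=\Omega(n)$ forces $d\gg n$, for every $j\in\mathcal G$ the row difference obeys $\norm{[\varphi_{\textup{RAF}}(X^i(\Delta^*_1))]_{j:}-[\varphi_{\textup{RAF}}(X^i(\Delta^*_2))]_{j:}}_2\ge\norm{\Delta^*_1-\Delta^*_2}_2-\norm{r_{j,1}}_2-\norm{r_{j,2}}_2=\Omega(\sqrt d)$; summing the squares over $j\in\mathcal G$ yields $\norm{\varphi_{\textup{RAF}}(X^i(\Delta^*_1))-\varphi_{\textup{RAF}}(X^i(\Delta^*_2))}_F^2\ge\Omega(n)\cdot\Omega(d)=\Omega(dn)$, say at least $c_0\,dn$ for an absolute constant $c_0>0$.

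To close the argument, suppose toward a contradiction that $\norm{\varphi_{\textup{RAF}}(X)-\varphi_{\textup{RAF}}(X^i(\Delta^*_k))}_F<\tfrac12\sqrt{c_0\,dn}$ for both $k\in\{1,2\}$. Then the triangle inequality gives $\norm{\varphi_{\textup{RAF}}(X^i(\Delta^*_1))-\varphi_{\textup{RAF}}(X^i(\Delta^*_2))}_F<\sqrt{c_0\,dn}$, contradicting the lower bound just established. Hence some $k\in\{1,2\}$ satisfies $\norm{\varphi_{\textup{RAF}}(X)-\varphi_{\textup{RAF}}(X^i(\Delta^*_k))}_F\ge\tfrac12\sqrt{c_0\,dn}=\Omega(\sqrt{dn})$, and taking $\Delta^*:=\Delta^*_k$ (which satisfies $\norm{\Delta^*}_2\le\sqrt d$) yields the claim; all of this happens on the single event of Lemma~\ref{lemma:scoresstar}, so the probability bound is inherited verbatim.

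The main obstacle is the bookkeeping in the second step: one must verify that the per-row error $r_{j,k}$, once amplified by $\opnorm{X^i(\Delta^*_k)}=O(\sqrt{dn})$, stays asymptotically below $\norm{\Delta^*_1-\Delta^*_2}_2=\Omega(\sqrt d)$. This is precisely where $d/\log^4 d=\Omega(n)$ is needed — directly, to get $d\gg n$, and also because the $o(1/\sqrt d)$ score estimate of Lemma~\ref{lemma:scoresstar} already relied on it. Everything else is a clean triangle-inequality contradiction with no new randomness involved.
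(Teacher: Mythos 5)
Your proof is correct and follows essentially the same route as the paper's: both take the two candidates $\Delta^*_1,\Delta^*_2$ from Lemma~\ref{lemma:scoresstar}, approximate the good rows of $\varphi_{\textup{RAF}}(X^i(\Delta^*_k))$ by $(x_i+\Delta^*_k)^\top$ up to an $o(\sqrt d)$ error, and conclude by a triangle-inequality contradiction exploiting $\norm{\Delta^*_1-\Delta^*_2}_2=\Omega(\sqrt d)$. The only (harmless) difference is organizational: you first aggregate the per-row separations into a global Frobenius lower bound between the two perturbed feature maps and then apply a single triangle inequality, whereas the paper argues the contradiction row by row and then selects the candidate that works for $\Omega(n)$ rows.
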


\begin{proof}
By \eqref{eq:rafissx}, we have that,
\begin{equation}
    \varphi_{\textup{RAF}}(X^i(\Delta)) = s(X^i(\Delta)) X^i(\Delta),
\end{equation}
By Lemma \ref{lemma:scoresstar}, we have that there exists $\Delta^*_1$, with $\norm{\Delta^*_1}_2 \leq \sqrt d$, such that there are $\Omega(n)$ indices $j$ such that 
\begin{equation}
    \norm{[s(X^i(\Delta^*_1))]_{j:}^\top - e_i}_2 = o\left( \frac{1}{\sqrt d}\right),
\end{equation}
with probability at least $1 - \exp(-c_1 \log^2 d)$ over $W$. Until the end of the proof, we will refer to $j$ as a generic index for which the previous equation holds, and we will condition on the high probability event that the equations in the statement of Lemma \ref{lemma:scoresstar} hold.

As we can write 
\begin{equation}
    \left[\varphi_{\textup{RAF}}(X^i(\Delta^*_1))\right]_{j:} = [s(X^i(\Delta^*_1))]_{j:} X^i(\Delta^*_1) = e_i^\top X^i(\Delta^*_1) + \left([s(X^i(\Delta^*_1))]_{j:} - e_i^\top \right) X^i(\Delta^*_1),
\end{equation}
and $e_i^\top X^i(\Delta^*_1) = x_i^\top + {\Delta^*_1}^\top$, we have
\begin{equation}\label{eq:rowsRAF}
\begin{aligned}
    \norm{ \left[\varphi_{\textup{RAF}}(X^i(\Delta^*_1))\right]_{j:} -  \left( x_i + \Delta^*_1 \right)^\top}_2 &\leq  \norm{[s(X^i(\Delta^*_1))]_{j:} - e_i^\top}_2 \opnorm{X^i(\Delta^*_1)} \\
    &\leq o \left( \frac{1}{\sqrt d} \right) \left( \norm{X}_F + \norm{\Delta^*_1}_2 \right) \\
    & = o \left( \frac{1}{\sqrt d} \right) \bigO{\sqrt{dn}} = o\left(\sqrt d\right),
\end{aligned}
\end{equation}
where the last step follows from our assumption $d / \log^4 d = \Omega(n)$. By Lemma \ref{lemma:scoresstar}, there also exists $\Delta^*_2$ such that $\norm{\Delta^*_2}_2 \leq \sqrt d$, $\norm{\Delta^*_1 - \Delta^*_2}_2 = \Omega(\sqrt{d})$ and \eqref{eq:rowsRAF} holds.

Let's now suppose by contradiction that, for some $j$,
\begin{equation}\label{eq:contradictionD12}
    \norm{\left[\varphi_{\textup{RAF}}(X)\right]_{j:} - \left[\varphi_{\textup{RAF}}(X^i(\Delta^*_1))\right]_{j:}}_2 = o \left( \sqrt d \right), \qquad \norm{\left[\varphi_{\textup{RAF}}(X)\right]_{j:} - \left[\varphi_{\textup{RAF}}(X^i(\Delta^*_2))\right]_{j:}}_2 = o \left( \sqrt d \right).
\end{equation}
Then, we have
\begin{equation}
    \begin{aligned}
        \norm{\Delta^*_1 - \Delta^*_2}_2 =& \norm{ \left(x_i + \Delta^*_1\right) - \left(x_i + \Delta^*_2\right)}_2\\
        =& \, \| \left( \left(x_i + \Delta^*_1\right) - \left[\varphi_{\textup{RAF}}(X^i(\Delta^*_1))\right]_{j:} \right) + \left( \left[\varphi_{\textup{RAF}}(X^i(\Delta^*_1))\right]_{j:} - \left[\varphi_{\textup{RAF}}(X)\right]_{j:} \right) \\
        & - \left( \left(x_i + \Delta^*_2\right) - \left[\varphi_{\textup{RAF}}(X^i(\Delta^*_2))\right]_{j:} \right) - \left( \left[\varphi_{\textup{RAF}}(X^i(\Delta^*_2))\right]_{j:} - \left[\varphi_{\textup{RAF}}(X)\right]_{j:} \right) \| \\
        \le & \norm{\left(x_i + \Delta^*_1\right) - \left[\varphi_{\textup{RAF}}(X^i(\Delta^*_1))\right]_{j:}}_2 + \norm{ \left[\varphi_{\textup{RAF}}(X^i(\Delta^*_1))\right]_{j:} - \left[\varphi_{\textup{RAF}}(X)\right]_{j:} }_2 \\
        &+ \norm{\left(x_i + \Delta^*_2\right) - \left[\varphi_{\textup{RAF}}(X^i(\Delta^*_2))\right]_{j:}}_2 + \norm{\left[\varphi_{\textup{RAF}}(X^i(\Delta^*_2))\right]_{j:} - \left[\varphi_{\textup{RAF}}(X)\right]_{j:}}_2 \\
        =& \, o(\sqrt d),
    \end{aligned}
\end{equation}
where the third step holds by triangle inequality, and the last comes from \eqref{eq:rowsRAF} and \eqref{eq:contradictionD12}. As $\norm{\Delta^*_1 - \Delta^*_2}_2 = \Omega(\sqrt d)$ by Lemma \ref{lemma:scoresstar}, we get the desired contradiction, and we have that at least one equation in \eqref{eq:contradictionD12} doesn't hold.

Let's therefore denote by $\Delta^*$ the vector with $\norm{\Delta^*}_2 \leq \sqrt d$ such that
\begin{equation}
\norm{\left[\varphi_{\textup{RAF}}(X)\right]_{j:} - \left[\varphi_{\textup{RAF}}(X^i(\Delta^*))\right]_{j:}}_2 = \Omega \left( \sqrt d \right)
\end{equation}
holds for $\Omega(n)$ indices $j$. Due to the previous conditioning, such a vector exists with probability at least $1 - \exp(-c_1 \log^2 d)$ over $W$. Let's denote the set of these indices by $J$, with $|J| = \Omega(n)$ (where we denote set cardinality by $| \cdot |$).

Thus, we have
\begin{equation}
    \norm{\varphi_{\textup{RAF}}(X) - \varphi_{\textup{RAF}}(X^i(\Delta^*))}_F^2 \geq \sum_{j \in J} \norm{\left[\varphi_{\textup{RAF}}(X)\right]_{j:} - \left[\varphi_{\textup{RAF}}(X^i(\Delta^*))\right]_{j:}}_2^2 = \Omega(nd),
\end{equation}
which concludes the proof.

\end{proof}


\paragraph{Theorem \ref{thm:RAF}} Let $\varphi_{\textup{RAF}}(X)$ be the random attention features map defined in \eqref{eq:raf}. Let $X \in \R^{n \times d}$ be a generic input sample s.t.\ Assumption \ref{ass:d} holds, and assume $ d / \log^4 d = \Omega(n)$. Let $\mathcal S_{\textup{RAF}}(X)$ be the the word sensitivity defined in \eqref{eq:sensitivity}. Then, we have
\begin{equation}
    \mathcal S_{{\textup{RAF}}}(X) = \Omega(1),
\end{equation}
with probability at least $1 - \exp(-c \log^2 d)$ over $W$.
\begin{proof}
    We have $\varphi_{\textup{RAF}}(X) = s(X) X$, which implies
    \begin{equation}
        \left[ \varphi_{\textup{RAF}}(X) \right]_{j:} =  \left[ s(X) \right]_{j:} X,
    \end{equation}
    and
    \begin{equation}
        \norm{\left[ \varphi_{\textup{RAF}}(X) \right]_{j:}}_2 = \norm{\sum_{k = 1}^n  \left[ s(X) \right]_{jk} x_k }_2 \leq  \sum_{k = 1}^n  \left[ s(X) \right]_{jk} \norm{ x_k }_2 = \sqrt d,
    \end{equation}
    where the second step follows from triangle inequality, and the last step holds as $\sum_{k = 1}^n  \left[ s(X) \right]_{jk} =1$, and $\norm{x_k}_2 = \sqrt d$ for every $k$.
    This readily implies
    \begin{equation}\label{eq:thmrafdenominator}
        \norm{\varphi_{\textup{RAF}}(X)}_F \leq \sqrt {nd}.
    \end{equation}

    By Lemma \ref{lemma:numeratoromega}, we have that, with probability at least $1 - \exp(-c \log^2 d)$ over $W$, there exist $\Delta^*$ such that $\norm{\Delta^*}_2 \leq \sqrt d$, and
    \begin{equation}
        \norm{\varphi_{\textup{RAF}}(X) - \varphi_{\textup{RAF}}(X^i(\Delta^*))}_F = \Omega(\sqrt {dn}).
    \end{equation}

    This, together with \eqref{eq:thmrafdenominator}, concludes the proof.
\end{proof}

\section{Assumption \ref{ass:uncertainity} and adversarial robustness}\label{app:robustness}

Assumption \ref{ass:uncertainity} requires the perturbation $\Delta$ to be such that the model $f_{\textup{RF}}(\cdot, \theta^*)$ gives a similar output when evaluated on the two new samples $X$ and $X^i(\Delta)$. This assumption is necessary to understand the different behaviour of the RF and the RAF model in our setting, as there in fact exists an adversarial patch $\Delta$ such that, for example, $f(X^i(\Delta), \theta^*_{r})$ and $f(X, \theta^*_{r})$ are very different from each other (e.g., $f_{\textup{RF}}(X^i(\Delta), \theta^*_{r}) = y_\Delta$ while $f_{\textup{RF}}(X, \theta^*_{r}) = y$). 

This conclusion derives from the adversarial vulnerability of the RF model, extensively studied in previous work \cite{dohmatob2022non, dohmatob, bombari2023universal}. We remark that this vulnerability depends on the scalings of the problem, i.e., $n, d$ and $N$. In fact, (using the re-trained solution as example) we can write (assuming $\theta_0 = 0$ for simplicity)
\begin{equation}
\begin{aligned}
    \left| f_{\textup{RF}}(X^i(\Delta), \theta^*_{r}) - f_{\textup{RF}}(X, \theta^*_{r}) \right| &= \left| \left( \varphi_{\textup{RF}}(X^i(\Delta)) - \varphi_{\textup{RF}}(X) \right) ^\top \Phi_{\textup{RF}, r}^+ \mathcal Y_r \right| \\
    &\leq \norm{\varphi_{\textup{RF}}(X^i(\Delta)) - \varphi_{\textup{RF}}(X) }_2 \opnorm{\Phi_{\textup{RF}, r}^+} \norm{Y_r}_2.
\end{aligned}
\end{equation}
If we bound the three terms on the RHS separately, we get:

\begin{itemize}
    \item $\norm{\varphi_{\textup{RF}}(X^i(\Delta)) - \varphi_{\textup{RF}}(X) }_2 = \bigO{\sqrt{k / n}}$ with probability at least $1 - \exp (-cD)$ over $V$, by Lemma \ref{lemma:0rf};

\item $\opnorm{\Phi_{\textup{RF}, r}^+} = \lambda_{\min}^{-1/2} (K_{\textup{RF}, r}) = \bigO{\sqrt{1 / k}}$, with probability at least $1 - \exp \left( -c \log^2 N \right)$ over $V$ and $\mathcal X_r$, by Lemma \ref{lemma:evminRF};

\item $\norm{Y_r}_2 = \sqrt{N + 1}$, as we are considering labels in $\{-1, 1\}$.

\end{itemize}
Thus, we conclude that
\begin{equation}
    \left| f_{\textup{RF}}(X^i(\Delta), \theta^*_{r}) - f_{\textup{RF}}(X, \theta^*_{r}) \right| = \bigO{\sqrt{\frac{N}{n}}},
\end{equation}
with high probability. As a consequence, in the regime where $N = o(n)$, $f_{\textup{RF}}(\cdot, \theta^*_{r})$ cannot distinguish between the samples $X$ and $X^i(\Delta)$, without the additional need for Assumption \ref{ass:uncertainity}.
This result is also shown in our experiments, in the left subplots of Figure \ref{fig:gen}, where the points approach smaller values of $\gamma$ and consequently higher values of the error as $N$ decreases, becoming comparable with $n$. 


\section{Further experiments}\label{app:exp}




In Figure \ref{fig:gen}, we compute $\Delta^*$ by optimizing with respect to it
the following two losses (for fine-tuning and re-training, respectively):
\begin{equation}\label{eq:orloss1}
    \ell_{\theta^*_f}(\Delta) := \left( \frac{\varphi_{\textup{RAF}}(X^i(\Delta))^\top \varphi_{\textup{RAF}}(X)}{\norm{\varphi_{\textup{RAF}}(X)}_2^2} + 1 \right)^2,
\end{equation}
\begin{equation}\label{eq:orloss2}
    \ell_{\theta^*_r}(\Delta) := \left( \mathcal F_{{\textup{RAF}}}(X, X^i(\Delta)) + 1 \right)^2,
\end{equation}
subject to the constraint $\norm{\Delta^*} \leq \sqrt d$.
We also report with cross markers the points obtained optimizing the loss $\ell_{\textup{Err}}(\Delta) := \textup{Err}_{\textup{RAF}}(X^i(\Delta), \theta^*_{f/r})$, showing that this method provides less interesting results, as the error tends to be minimized at the expenses of a large value of $\gamma$.

An alternative approach is to still optimize with respect to the errors directly, but after introducing a penalty term $p$ on the value of $\gamma$. In particular, we consider the following penalized loss
\begin{equation}\label{eq:penalized_loss}
    \ell_{\textup{Err}, p}(\Delta) := \textup{Err}_{\textup{RAF}}(X^i(\Delta), \theta^*_{f/r}) + p \left( f_{\textup{RAF}}(X^i(\Delta), \theta^*) - f_{\textup{RAF}}(X, \theta^*) \right)^2.
\end{equation}
We perform new experiments with this optimization algorithm, for different values of $p$, and we report the results in Figure \ref{fig:app2}. In this case, compared to the loss $\ell_{\textup{Err}}(\Delta)$, we can more easily obtain points that lie below the lower bound. However, it remains difficult to find points where both the error and $\gamma$ are small. 


Another optimization option is to introduce a penalty term to the losses in \eqref{eq:orloss1} and \eqref{eq:orloss2}:
\begin{equation}\label{eq:penalized_f}
    \ell_{\theta^*_f, p}(\Delta) := \left( \frac{\varphi_{\textup{RAF}}(X^i(\Delta))^\top \varphi_{\textup{RAF}}(X)}{\norm{\varphi_{\textup{RAF}}(X)}_2^2} + 1 \right)^2  + p \left( f_{\textup{RAF}}(X^i(\Delta), \theta^*) - f_{\textup{RAF}}(X, \theta^*) \right)^2
\end{equation}
and
\begin{equation}\label{eq:penalized_r}
    \ell_{\theta^*_r, p}(\Delta) := \left( \mathcal F_{{\textup{RAF}}}(X, X^i(\Delta)) + 1 \right)^2 + p \left( f_{\textup{RAF}}(X^i(\Delta), \theta^*) - f_{\textup{RAF}}(X, \theta^*) \right)^2,
\end{equation}
for the fine-tuning and re-training case respectively. We perform new experiments with this optimization algorithm, for different values of $p$, and we report the results in Figure \ref{fig:app3}. In this case, it is easier to obtain final points that respect Assumption \ref{ass:uncertainity} with a lower value of $\gamma$.


\begin{figure*}
    \centering
    \includegraphics[width=\textwidth]{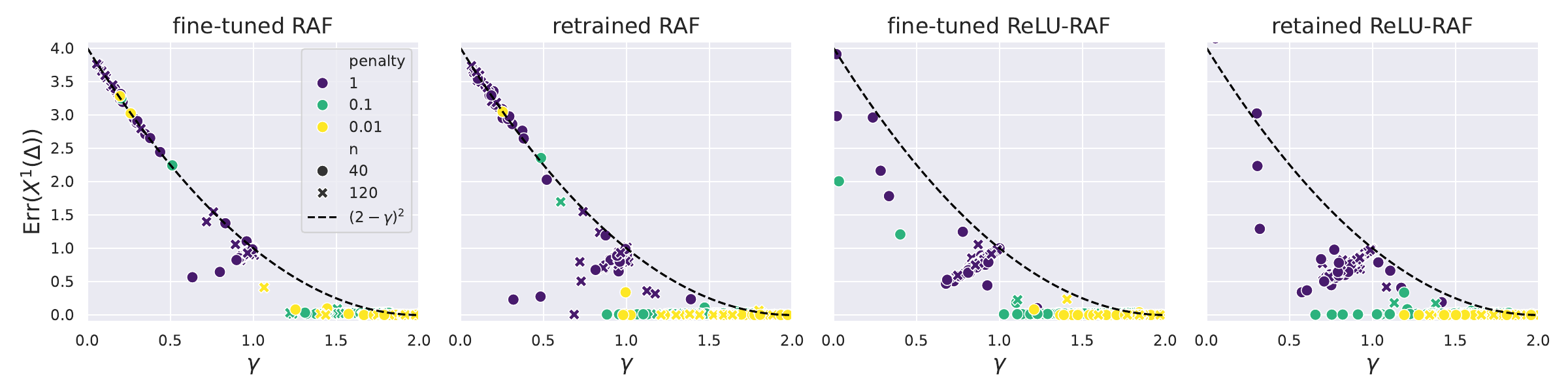}
    
    \caption{$\textup{Err}_{\varphi}(X^i(\Delta), \theta^*_{f/r})$ for the RAF (two left sub-plots) and ReLU-RAF (two right subplots) maps, as a function of the smallest $\gamma$ for which Assumption \ref{ass:uncertainity} is satisfied. Every sub-plot has a fixed context length $n = \{40, 120\}$, embedding dimension $d = 768$ and number of training samples $N =  400$. Every point in the scatter-plots represents an independent simulation where $(X, y)$ and $(\mathcal X, \mathcal Y)$ are the BERT-Base embeddings of a random subset of the imdb dataset (after pre-processing to fulfill Assumption \ref{ass:d}).
    For every point, $\Delta$ is obtained through constrained gradient descent optimization of $\ell_{\textup{Err}, p}(\Delta)$, defined in \eqref{eq:penalized_loss}, for different values of the penalty term $p = \{1, 0.1, 0.01\}$.}
    \label{fig:app2}
\end{figure*}

\begin{figure*}[!ht]
    \centering
    \includegraphics[width=\textwidth]{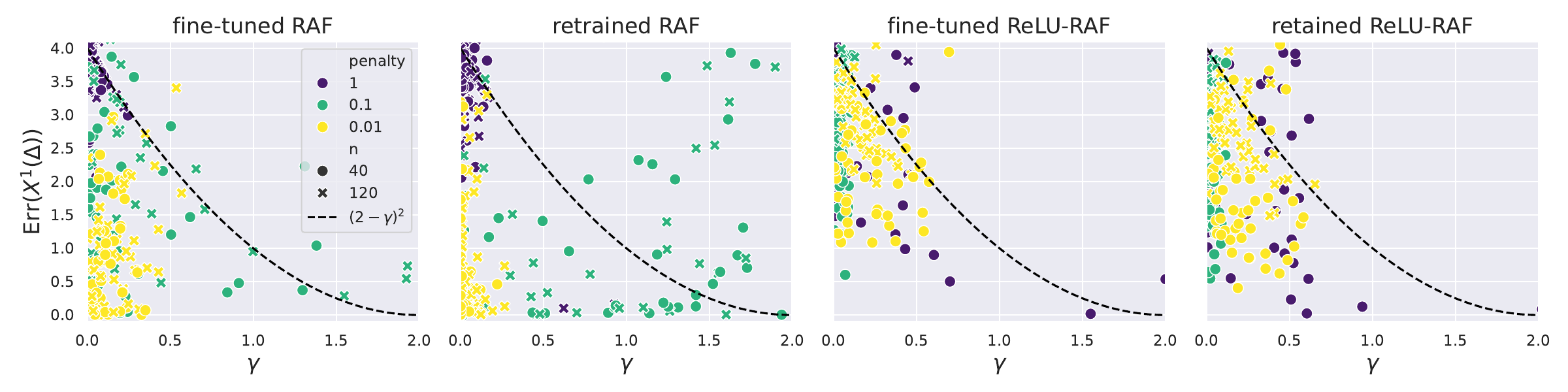}
\caption{$\textup{Err}_{\varphi}(X^i(\Delta), \theta^*_{f/r})$ for the RAF (two left sub-plots) and ReLU-RAF (two right subplots) maps, as a function of the smallest $\gamma$ for which Assumption \ref{ass:uncertainity} is satisfied. For every point, $\Delta$ is obtained through constrained gradient descent optimization of $\ell_{\theta^*_r, f}(\Delta)$ in the fine-tuned case, and $\ell_{\theta^*_r, p}(\Delta)$ in the re-trained case, defined in \eqref{eq:penalized_f} and \eqref{eq:penalized_r} respectively, for different values of the penalty term $p = \{1, 0.1, 0.01\}$. The rest of the setup is equivalent to the one described in Figure \ref{fig:app2}.}
    \label{fig:app3}
\end{figure*}

Finally, we consider swapping the losses $\ell_{\theta^*_f}$ and $\ell_{\theta^*_r}$ defined above, i.e., employ the former for re-training and the latter for fine-tuning. Given the heuristic nature of these losses, it is a priori not obvious that they perform their best on the respective setting, as they could be interchangeable. In Figure \ref{fig:app4}, we report the results of this investigation. We report in deep-blue the points resulting from the optimization of $\ell_{\theta^*_f}$, and in yellow the points resulting from the optimization of $\ell_{\theta^*_r}$, for both the fine-tuned and re-trained setting. We note that $\ell_{\theta^*_f}$ performs better on the fine-tuned solution, and $\ell_{\theta^*_r}$ better on the re-trained one.

\clearpage

\begin{figure*}
\vspace{-35em}
    \centering
    \includegraphics[width=\textwidth]{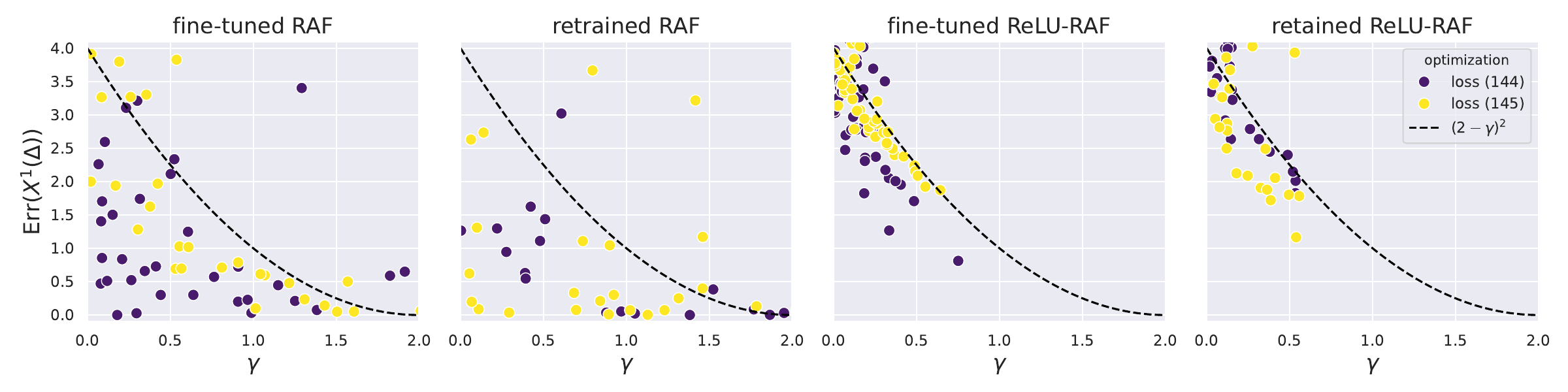}
\caption{$\textup{Err}_{\varphi}(X^i(\Delta), \theta^*_{f/r})$ for the RAF (two left sub-plots) and ReLU-RAF (two right subplots) maps, as a function of the smallest $\gamma$ for which Assumption \ref{ass:uncertainity} is satisfied. We consider fixed embedding dimension $d = 768$, context length $n = 120$, and number of training samples $N =  400$. Every point in the scatter-plots represents an independent simulation where $(X, y)$ and $(\mathcal X, \mathcal Y)$ are the BERT-Base embeddings of a random subset of the imdb dataset (after pre-processing to fulfill Assumption \ref{ass:d}).
    For every point, $\Delta$ is obtained through constrained gradient descent optimization of either  $\ell_{\theta^*_f}$, defined in \eqref{eq:orloss1}, or $\ell_{\theta^*_r}$, defined in \eqref{eq:orloss2}.}
    \label{fig:app4}
\end{figure*}

\end{document}